\documentclass[10pt,letterpaper]{article}
 
\usepackage[
  letterpaper,
  textwidth=5.5in,
  textheight=9in,
  left=1.5in,
  top=1in
]{geometry}
 
\usepackage{amsmath,amssymb,amsfonts,amsthm}
\usepackage{mathtools}             
 
\usepackage{newtxtext,newtxmath}
 
\usepackage[utf8]{inputenc}        
\usepackage[T1]{fontenc}           
\usepackage{microtype}             
\usepackage{bm}                    
\usepackage{nicefrac}              
\usepackage{graphicx}              
\usepackage{booktabs}              
\usepackage{dcolumn}               
\usepackage{subcaption}            
\usepackage{tikz}                  
\usepackage{algorithm}             
\usepackage{algpseudocode}         
\usepackage{authblk}               
\usepackage{nameref}
\usepackage{url}
\usepackage{xcolor}
\usepackage[numbers,sort&compress]{natbib}  
 
\usepackage[colorlinks=true,allcolors=blue!55!black]{hyperref}
\usepackage[capitalize,noabbrev]{cleveref}   

\makeatletter
\renewcommand\normalsize{%
  \@setfontsize\normalsize{10pt}{12pt}%
  \abovedisplayskip      7pt plus 2pt minus 5pt%
  \belowdisplayskip      \abovedisplayskip%
  \abovedisplayshortskip 0pt plus 3pt%
  \belowdisplayshortskip 4pt plus 3pt minus 3pt%
}
\normalsize
\makeatother
 
\theoremstyle{plain}
\newtheorem{theorem}{Theorem}
\newtheorem{lemma}[theorem]{Lemma}

\newtheorem{proposition}{Proposition}

\theoremstyle{definition}
\newtheorem{definition}{Definition}[section]
\newtheorem{assumption}{Assumption}
\newtheorem{remark}{Remark}
 
\newenvironment{proofsketch}{%
  \proof}{\endproof}
 
\newcommand{\cind}{\mathrel{\perp\mspace{-9mu}\perp}}   
\newcommand{\notcind}{\mathrel{\,\not\!\cind}}          

\let\emptyset\varnothing
 
\title{Beyond Additivity: Causal Discovery in Location-Scale
  Noise Models with Hidden Variables}
\author[1]{Mariyam Khan\footnote{Work conducted while a student trainee at RIKEN AIP}}
\author[2,3,4]{Shohei Shimizu}
\author[2,3,4]{Thong Pham\footnote{Corresponding author: \href{mailto:thong-pham@ds.sanken.osaka-u.ac.jp}{thong-pham@ds.sanken.osaka-u.ac.jp}}}
\affil[1]{University of Bergen}
\affil[2]{The University of Osaka}
\affil[3]{Shiga University}
\affil[4]{RIKEN AIP}
\date{\today}
 
\begin{document}
\maketitle

\begin{abstract}
We study causal discovery from observational data when some variables are hidden and the data-generating process follows a location-scale noise model (LSNM). Existing methods that handle hidden confounders typically assume additive noise, but in practice, causes often modulate not just the mean but also the variance of their effects. We prove that acyclic directed mixed graphs (ADMGs) satisfying a bow-free condition are identifiable under LSNM with hidden variables, establishing the first identifiability result for causally insufficient models beyond noise additivity. We further provide sufficient conditions for identifying causal direction even when the bow-free assumption is violated. Our two-stage algorithm, LSNM-UV, is sound and complete, and experiments demonstrate improved performance over additive baselines on heteroscedastic data.
\end{abstract}

\section{Introduction}

Constraint-based causal discovery uses conditional independence tests to
recover causal structure only up to Markov equivalence~\citep{spirtes2000,pearl2009}.
In the fully observed setting, PC returns a CPDAG representing a Markov
equivalence class of DAGs. With latent confounding and selection bias, FCI
returns a PAG representing an equivalence class of maximal ancestral graphs
(MAGs), and therefore cannot distinguish among MAGs in the same
class~\citep{richardson2002,zhang2008}.

A complementary line of work exploits \emph{functional model assumptions} to
break the symmetry between causal directions. LiNGAM~\citep{shimizu2006}
showed that linear acyclic models with non-Gaussian errors are identifiable
beyond Markov equivalence. Additive noise models (ANMs)~\citep{hoyer2009,peters2014}
and causal additive models (CAM)~\citep{buhlmann2014cam} extended this idea to
nonlinear equations of the form
\[
x_i = f_i(\mathrm{PA}_i) + \epsilon_i,
\]
and post-nonlinear models~\citep{zhang2009pnl} further broadened the functional
class. In their standard formulations, however, these methods assume
\emph{causal sufficiency}.

In practice, causal sufficiency rarely holds. Existing functional-model approaches to hidden variables have largely focused on linear non-Gaussian or additive-noise mechanisms. \citet{hoyer2008latent} generalized LiNGAM to latent settings via overcomplete ICA but retained linearity. For the nonlinear additive case, \citet{maeda21a} introduced CAM-UV, which uses residual independence tests to identify those parent-child relations among observed variables that are identifiable under the model, while flagging pairs affected by unobserved backdoor paths (UBPs) or unobserved causal paths (UCPs). The natural target representation in this latent-variable setting is the acyclic directed mixed graph (ADMG)~\citep{richardson2003}, in which directed edges encode direct causal relations among observed variables and bidirected edges encode latent common causes. \citet{pham2025} sharpened these results by showing that certain causal directions remain identifiable even in the presence of UBPs or UCPs, and \citet{ashman2023causal} developed neural ADMG learning under bow-free nonlinear additive-noise models. In all of these approaches, however, the noise enters additively: the conditional noise scale is not allowed to depend on the parents.

Yet in many real-world processes, causes modulate not only the conditional
location but also the \emph{variance}. Location-scale noise models (LSNMs),
\[
x_i = f_i(\mathrm{PA}_i) + g_i(\mathrm{PA}_i)\epsilon_i,
\]
capture this by allowing the noise scale to depend on the parents. LSNM
identifiability has been established in the bivariate cause-effect
setting~\citep{immer2023,sun2023lsnm} and extended to multivariate DAG
discovery under heteroscedastic-noise assumptions~\citep{yin2024,lin2025skewscore}.
However, existing LSNM graph-identifiability results primarily address
causally sufficient settings. A general theory for identifying observed-variable
causal structure under location-scale noise with hidden variables remains open.

We answer this affirmatively. The central observation is that the two-level location-scale structure allows every hidden-variable effect on an observed variable $x_i$ to be absorbed into an \emph{effective noise} term $\eta_i$, reducing the latent-variable problem to a noise-dependence testing problem on the observed margin. Concretely, $\eta_i \cind \eta_j$ if and only if no UBP or UCP connects $x_i$ and $x_j$ (Lemma~\ref{lemma:independence_patterns}). Our contributions are:
\begin{itemize}
    \item \textbf{Model and noise reduction.} We formulate LSNM-UV, a two-level location-scale noise model with observed and hidden variables (Equation~\eqref{eq: LSNM-UV}), and show that it reduces to an observed-variable model with dependent effective noise whose independence structure encodes UBPs and UCPs (Lemma~\ref{lemma:independence_patterns}).
    \item \textbf{Residual characterization.} We characterize how location-scale residual independence patterns distinguish bidirected edges, directed edges, and non-edges in the projected ADMG (Proposition~\ref{prop:residual_independence_pattern}).
    \item \textbf{Identifiability.} We prove that bow-free ADMGs are identifiable under LSNM-UV (Theorem~\ref{thm:ADMG_identifiability}); this is, to our knowledge, the first identifiability result for causally insufficient models beyond noise additivity.
    \item \textbf{Beyond bow-free.} We give sufficient conditions for identifying causal direction even in the presence of bows, using visible parents as a side channel (Lemma~\ref{lem:checkonpath}).
    \item \textbf{Algorithm and experiments.} We propose LSNM-UV, a two-stage algorithm that is provably sound and complete (Theorem~\ref{thm:soundness_completeness}), and show empirically that it outperforms additive-noise baselines on heteroscedastic data.
\end{itemize}
\section{Preliminaries}\label{sec:prelim}
\subsection{Identifiability and oracles}\label{sec:prelim_identifiability}
Let $\mathcal{G}$ denote the class of graphs under consideration. Collect all parameters, functions, and exogenous noises into a single (potentially infinite-dimensional) object $\theta$, and let $\Theta_{G}$ denote the parameter space induced by the model assumptions for a given graph $G$. Let $P_X$ denote the marginal distribution over the observed variables $X$; we write $P_{G,\theta}$ when emphasizing the dependence on $G$ and $\theta$. We adopt the following standard definition of identifiability~\cite{Peters11UAI}.
\begin{definition}[Identifiability] \label{def:Identifiability} A graph function $T$ is identifiable if and only if $\ \forall G,G' \in \mathcal{G},\  \forall \theta' \in \Theta_{G'}, \forall \theta \in \Theta_{G}$:
$$P_{G,\theta} = P_{G',\theta'} \implies T(G) = T(G').$$
\end{definition}
An oracle $W$ is a deterministic function $W(P_X,q)$ of the observed distribution $P_X$ and the query $q$.

\begin{definition}[Oracle identifiability] \label{def:Oracle-Identifiability}
$T$ is oracle identifiable with respect to an oracle $W$ if and only if $\ \forall G,G' \in \mathcal{G},\  \forall \theta' \in \Theta_{G'},\ \forall \theta \in \Theta_{G}$:
$$W(P_{G,\theta},q) = W(P_{G',\theta'},q) \ \forall q \implies T(G) = T(G').$$
\end{definition}
\begin{lemma}\label{lemma:oracle_implies_ident}
Oracle identifiability implies identifiability.
\end{lemma}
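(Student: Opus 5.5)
The argument is a direct implication: oracle identifiability has a stronger hypothesis available than identifiability, because equal distributions force equal oracle outputs. Fix an identifiable-in-the-oracle-sense graph function $T$ with respect to an oracle $W$. To check Definition~\ref{def:Identifiability}, take arbitrary $G,G'\in\mathcal{G}$, $\theta\in\Theta_G$, $\theta'\in\Theta_{G'}$, and assume $P_{G,\theta}=P_{G',\theta'}$. The goal is to conclude $T(G)=T(G')$.

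The key step uses that $W$ is, by definition, a deterministic function $W(P_X,q)$ of the observed distribution and the query only. Since $P_{G,\theta}$ and $P_{G',\theta'}$ are the same object, applying the same function to the same arguments gives
\[
W(P_{G,\theta},q)=W(P_{G',\theta'},q)\quad\text{for every query } q.
\]
This is exactly the antecedent in Definition~\ref{def:Oracle-Identifiability}. Oracle identifiability of $T$ therefore gives $T(G)=T(G')$. Since $G,G',\theta,\theta'$ were arbitrary, $T$ is identifiable.

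There is no substantive obstacle here. The only point to state explicitly is that $W$ depends on $(G,\theta)$ solely through $P_X$, i.e.\ it is well defined as a map on distributions. This is precisely what the definition of an oracle stipulates. So the proof is a one-line chaining of the two definitions.
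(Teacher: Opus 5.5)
Your proof is correct and matches the paper's own argument: assume $P_{G,\theta}=P_{G',\theta'}$, use that $W$ is a deterministic function of $P_X$ to get $W(P_{G,\theta},q)=W(P_{G',\theta'},q)$ for every query $q$, and invoke oracle identifiability to conclude $T(G)=T(G')$.
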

\begin{proof}
Suppose $T$ is oracle identifiable w.r.t.\ $W$, and let $P_{G,\theta} = P_{G',\theta'}$. Since $W$ is a deterministic function of $P_X$, we have $W(P_{G,\theta}, q) = W(P_{G',\theta'}, q)$ for all $q$. By oracle identifiability, $T(G) = T(G')$.
\end{proof}
\noindent Proving identifiability directly requires showing that the observed data distribution determines $T(G)$ uniquely across all graphs and parameters, which is often intractable. The oracle provides an easier route; we introduce an oracle $W$ that takes as input the observed data distribution $P_X$ and a query $q$, and returns an answer, without access to the true graph or hidden variables. Oracle identifiability (Definition~\ref{def:Oracle-Identifiability}) says that if two setups give the same oracle answers for every query, they must agree on $T$. Since $W$ depends only on $P_X$, any two setups producing the same data distribution automatically produce the same oracle answers, so oracle identifiability implies identifiability (Lemma~\ref{lemma:oracle_implies_ident}). Thus, it suffices to construct an oracle whose answers determine $T(G)$.

\subsection{Unobserved backdoor paths and unobserved causal paths}
Hidden variables can influence observed-variable relationships through two mechanisms, first defined by \citet{maeda21a} and \citet{pham2025} for causal additive models. We state them here in our notation.
\begin{definition}[Unobserved Causal Path]
\label{def:ucp}
We say there is an \emph{unobserved causal path} (UCP) from $x_j$ to $x_i$ when $x_j$ is an ancestor of some unobserved variable $u_k \notin X$ that is itself a direct parent of $x_i$, so that a directed path $x_j \to \cdots \to u_k \to x_i$ exists through unobserved intermediaries. We say a UCP exists \emph{between} $x_i$ and $x_j$ if a UCP exists in either direction.
\end{definition}

\begin{definition}[Unobserved Backdoor Path]
\label{def:ubp}
We say there is an \emph{unobserved backdoor path} (UBP) between $x_i$ and $x_j$ when there exist unobserved parents $u_k \notin X$ of $x_i$ and $u_l \notin X$ of $x_j$ that share a common ancestor $v$ in the DAG, forming the path $x_i \leftarrow u_k \leftarrow \cdots \leftarrow v \to \cdots \to u_l \to x_j$. The cases $v = u_k$, $v = u_l$, or $u_k = v = u_l$ are all permitted.
\end{definition}



\subsection{Acyclic Directed Mixed Graphs}
An acyclic directed mixed graph (ADMG) on the observed variables $X$ is a mixed graph containing directed edges $x_i \to x_j$ and bidirected edges $x_i \leftrightarrow x_j$, with no directed cycles. We represent an ADMG by a pair of matrices $(A,B)$, where $A(i,j) = 1$ iff $x_j \to x_i$ (i.e., $x_j$ is a parent of $x_i$), and $B(i,j) = 1$ iff $x_i \leftrightarrow x_j$ (i.e., there is a UBP or UCP between $x_i$ and $x_j$). The bidirected matrix $B$ is symmetric; both $A$ and $B$ have zero diagonal.

\begin{definition}[Bow-free ADMG]
\label{def:bowfree}
An ADMG $(A,B)$ is bow-free if no pair of observed variables is connected by both a directed edge and a bidirected edge; equivalently,
\[
A(i,j)\,B(i,j) = 0 \qquad \forall\, i \neq j.
\]
\end{definition}

\begin{figure}[H]
\centering
\begin{minipage}[t]{0.46\textwidth}
\centering
\textbf{(A) Full latent DAG $G$}\\[6pt]
\begin{tikzpicture}[
  obs/.style={circle, draw, thick, minimum size=20pt, inner sep=0pt, font=\small},
  hid/.style={circle, draw, dashed, thick, minimum size=20pt, inner sep=0pt,
              font=\small, fill=gray!10},
  arr/.style={->, >=stealth, semithick},
  scale=0.55,
]
\node[hid] (u1) at (1.75, 4.5) {$u_1$};
\node[obs] (x1) at (0, 2.8)    {$x_1$};
\node[obs] (x2) at (3.5, 2.8)  {$x_2$};
\node[obs] (x3) at (0, 0.8)    {$x_3$};
\node[hid] (y1) at (2.8, 1.4)  {$y_1$};
\node[obs] (x4) at (3.5, -0.2) {$x_4$};
\node[obs] (x5) at (-1.2, -0.6){$x_5$};

\draw[arr] (u1) -- (x1);
\draw[arr] (u1) -- (x2);
\draw[arr] (x1) -- (x3);
\draw[arr] (x1) -- (y1);
\draw[arr] (x3) -- (x4);
\draw[arr] (x3) -- (x5);
\draw[arr] (y1) -- (x4);
\end{tikzpicture}
\end{minipage}%
\hfill
\begin{minipage}[t]{0.46\textwidth}
\centering
\textbf{(B) Projected bow-free ADMG}\\[6pt]
\begin{tikzpicture}[
  obs/.style={circle, draw, thick, minimum size=20pt, inner sep=0pt, font=\small},
  dir/.style={->, >=stealth, semithick, blue!70!black},
  bidir/.style={<->, >=stealth, semithick, red!70!black, dashed},
  scale=0.55,
]
\node[obs] (x1) at (0, 2.8)    {$x_1$};
\node[obs] (x2) at (3.5, 2.8)  {$x_2$};
\node[obs] (x3) at (0, 0.8)    {$x_3$};
\node[obs] (x4) at (3.5, -0.2) {$x_4$};
\node[obs] (x5) at (-1.2, -0.6){$x_5$};

\draw[dir] (x1) -- (x3);
\draw[dir] (x3) -- (x4);
\draw[dir] (x3) -- (x5);

\draw[bidir, bend left=22] (x1) to (x2);
\draw[bidir, bend right=18] (x1) to (x4);
\draw[bidir, bend left=22] (x2) to (x4);
\end{tikzpicture}
\end{minipage}

\caption{Illustrative example with observed $X=\{x_1,\dots,x_5\}$ and hidden $\{u_1, y_1\}$ (dashed nodes).
\textbf{(A)}~Full latent DAG.
\textbf{(B)}~Projected bow-free ADMG where blue solid   edges are directed and red dashed edges are bidirected (UCP or UBP). Here $(x_1, x_2)$ is a \emph{UBP} via the hidden common cause $x_1 \leftarrow u_1 \to x_2$; $(x_1, x_4)$ is a \emph{UCP} via the hidden intermediate $x_1 \to y_1 \to x_4$;
and $(x_2, x_4)$ is a \emph{UBP} via $x_2 \leftarrow u_1 \to x_1 \to y_1 \to x_4$, where the path passes through observed $x_1$ but the endpoints $u_1$ and $y_1$ are both hidden.
In contrast, $(x_4, x_5)$ has \emph{no} bidirected edge  because the parent $x_3$ is observed;
$(x_2, x_3)$ has \emph{no} bidirected edge since $x_3$'s parent $x_1$ is observed;
and $(x_1, x_5)$ is connected only by the fully observed path $x_1 \to x_3 \to x_5$.}
\label{fig:ubp_ucp}
\end{figure}
\section{The Causal Model}
Let $X = \{x_i\}$ and $U = \{u_k\}$ denote the sets of observed and unobserved variables, respectively. Let $G = (V, E)$ be a DAG with vertex set $V = X \cup U$ and directed edge set $E \subseteq V \times V$, where $(v_j, v_i) \in E$ denotes $v_j \to v_i$. For each variable $v_i \in V$, define:
\begin{itemize}
\item $K_i = \{x_j \in X \mid (x_j, v_i) \in E\}$: the set of observed parents of $v_i$,
\item $Q_i = \{u_k \in U \mid (u_k, v_i) \in E\}$: the set of unobserved parents of $v_i$.
\end{itemize}
The data generation model is a two-level location-scale noise model:
\begin{equation}
v_i = f_{i}^1(K_i) + g_{i}^1(K_i)\,f_{i}^2(Q_i) +  g_{i}^1(K_i)\, g_{i}^2(Q_i)\,\epsilon_i, \label{eq: LSNM-UV}
\end{equation}
where $f_{i}^1, g_{i}^1$ are nonlinear functions of the observed parents, $f_{i}^2, g_{i}^2$ are nonlinear functions of the unobserved parents, and $\epsilon_i$ is the exogenous noise at $v_i$. The noises $\{\epsilon_i\}$ are mutually independent. When $Q_i = \emptyset$, the hidden-parent functions reduce to constants, and Equation \eqref{eq: LSNM-UV} becomes a standard location-scale noise model $v_i = f_i^1(K_i) + g_i^1(K_i)\,\epsilon_i$. When additionally $g_i^1 \equiv 1$ for all $i$, the model reduces to the causal additive model of \citet{maeda21a}.

\begin{assumption}[Causal Faithfulness Condition (CFC)]\label{assumption:CFC}
Any conditional independence on $V = X \cup U$ that is not entailed by the d-separation criterion on $G$ does not hold~\citep{spirtes2000,pearl2009}.
\end{assumption}


\section{Identifiability result}
The key property of Equation~\eqref{eq: LSNM-UV} is that, defining $\eta_i \coloneqq f_{i}^2(Q_i) + g_{i}^2(Q_i)\epsilon_i$, it reduces to a location-scale model on observed variables with \emph{dependent} noise:
\begin{align}
v_i &= f_{i}^1(K_i) + g_{i}^1(K_i)\!\left(f_{i}^2(Q_i) + g_{i}^2(Q_i)\epsilon_i\right) = f_{i}^1(K_i) + g_{i}^1(K_i)\,\eta_i. \label{eq: LSNM-dependent-noise}
\end{align}

\begin{lemma}\label{lemma:independence_patterns}
Under Assumption~\ref{assumption:residual_faithfulness} (residual faithfulness),
$\eta_i \cind \eta_j 
$ if and only if there is no UBP or UCP between $x_i$ and $x_j$.
\end{lemma}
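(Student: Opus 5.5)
We prove the two directions separately, treating $\eta_i$ as a deterministic function of $(Q_i,\epsilon_i)$. Since $\epsilon_i$ is exogenous and independent of everything that is not a descendant of $v_i$, the dependence between $\eta_i$ and $\eta_j$ ($i\neq j$) has to come from dependence between the hidden-parent sets $Q_i$ and $Q_j$. The only exception is when one of $x_i, x_j$ is itself an ancestor of a hidden parent of the other.

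\textbf{Sufficiency of dependence (no UBP/UCP $\Rightarrow$ independence).} Assume there is no UBP and no UCP between $x_i$ and $x_j$. We need to show $(Q_i,\epsilon_i)\cind(Q_j,\epsilon_j)$.

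First, by the definition of a UCP, $x_j$ is not an ancestor of any $u_k\in Q_i$, and symmetrically. Hence $\epsilon_j$ enters no element of $Q_i$, and $\epsilon_i$ enters no element of $Q_j$.

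Second, because there is no UBP, no $u_k\in Q_i$ and $u_l\in Q_j$ share a common ancestor (including the cases $u_k=u_l$, or one being an ancestor of the other). Write each variable of $V$ as a measurable function of the exogenous noises $\epsilon$ of its ancestors, by recursively unrolling Equation~\eqref{eq: LSNM-UV}. Then $\mathrm{An}(Q_i)\cup\{v_i\}$ and $\mathrm{An}(Q_j)\cup\{v_j\}$ restricted to the relevant parts are disjoint sets of nodes. More precisely, the ancestor sets of $Q_i$ and $Q_j$ are disjoint, and neither contains $v_i$ or $v_j$.

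Therefore $(Q_i,\epsilon_i)$ and $(Q_j,\epsilon_j)$ are functions of disjoint collections of mutually independent noises, so they are independent. Functions of independent vectors remain independent, which gives $\eta_i\cind\eta_j$. This direction uses no faithfulness.

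\textbf{Necessity (UBP or UCP $\Rightarrow$ dependence).} Here we invoke Assumption~\ref{assumption:residual_faithfulness}. If a UBP exists, the corresponding trek $u_k\leftarrow\cdots\leftarrow v\to\cdots\to u_l$ d-connects $u_k\in Q_i$ and $u_l\in Q_j$ given the empty set. By the CFC (Assumption~\ref{assumption:CFC}), $Q_i\notcind Q_j$. If a UCP $x_j\to\cdots\to u_k\to x_i$ exists, then $u_k$ depends on $x_j$, hence on $\epsilon_j$ and on $Q_j$, so $Q_i\notcind(Q_j,\epsilon_j)$.

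The remaining step is to transfer this dependence to the specific scalar summaries $\eta_i=f_i^2(Q_i)+g_i^2(Q_i)\epsilon_i$ and $\eta_j$. A nonlinear function of dependent vectors can in principle be independent, for example through cancellation or noninvertible maps. This is exactly what residual faithfulness is meant to exclude, so I would state that Assumption~\ref{assumption:residual_faithfulness} directly yields $\eta_i\notcind\eta_j$ whenever $(Q_i,\epsilon_i)\notcind(Q_j,\epsilon_j)$ by a d-connecting path of this type.

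\textbf{Main obstacle.} The crux is this transfer step, together with checking the UCP case carefully. In that case the path passes through $v_j$ itself, so the dependence is between $\epsilon_j$ (through $x_j$) and $Q_i$. We must therefore use that $\eta_j$ is genuinely informative about $\epsilon_j$, which holds because $g_j^2>0$. I would first try to reduce both cases to the single statement ``$\eta_i$ and $\eta_j$ are d-connected in the augmented DAG where $\eta$ nodes are added as children of $Q\cup\{\epsilon\}$.'' I would then apply residual faithfulness to that augmented graph.
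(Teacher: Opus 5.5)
Your first direction (no UBP/UCP $\Rightarrow$ $\eta_i \cind \eta_j$) is correct and is essentially the paper's argument. The paper writes $\eta_i$ as a function of $N_i=\{\epsilon_l : v_l\in \mathrm{an}(Q_i)\cup Q_i\cup\{x_i\}\}$. It then shows by cases that a shared $\epsilon_l$ would force a UCP (if $v_l\in\{x_i,x_j\}$) or a UBP (otherwise, with a hidden or observed source). That is exactly your disjointness of the two ancestor sets.

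The gap is in the converse: you invoke residual faithfulness in a form it does not have. \begin{itemize}
\item As stated, Assumption~\ref{assumption:residual_faithfulness} does not transfer dependence of the vectors $(Q_i,\epsilon_i)$ and $(Q_j,\epsilon_j)$ to their scalar summaries. Nor is it a faithfulness condition on a DAG augmented with $\eta$-nodes.
\item Its only trigger is a \emph{single exogenous noise} $\epsilon_a$ with $\epsilon_a\notcind h_1(i,S)$ and $\epsilon_a\notcind h_2(j,S')$, for residuals $h_1,h_2\in\mathcal{H}$.
\item So proving $Q_i\notcind Q_j$ via CFC does not let you apply it. CFC concerns only nodes of $V$ and says nothing about $\eta_i$, which is not a node; it is also not a hypothesis of the lemma.
\item Your augmented-DAG plan would need a faithfulness assumption the paper never makes.
\end{itemize}

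Two observations, both used in the paper, close the gap. First, $\eta_i=(x_i-f_i^1(K_i))/g_i^1(K_i)$ is itself a residual $h(i,\{i\})$ built from the true structural functions, so the assumption applies directly to the pair $(\eta_i,\eta_j)$. Second, the shared noise can be named explicitly:
\begin{itemize}
\item For a UBP $x_i\leftarrow u_m\leftarrow\cdots\leftarrow v_a\to\cdots\to u_n\to x_j$, take $\epsilon_a$ of the source $v_a$. It lies in $N_i$ via $u_m\in Q_i$ and in $N_j$ via $u_n\in Q_j$.
\item For a UCP $x_j\to\cdots\to u\to x_i$, take $\epsilon_j$. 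It enters $\eta_j$ directly through $g_j^2(Q_j)\epsilon_j$, and enters $\eta_i$ through $u\in Q_i$.
\end{itemize}
Residual faithfulness then gives $\eta_i\notcind\eta_j$ at once, with no d-separation argument.

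Your remark that $g_j^2>0$ makes $\eta_j$ informative about $\epsilon_j$ is the right instinct for the UCP case. The paper treats $\epsilon_a\in N_i\cap N_j$ as meaning that both effective noises genuinely involve $\epsilon_a$, which is precisely the assumption's hypothesis.
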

\begin{proofsketch}
For each observed variable $x_i$, the effective noise $\eta_i$ is a deterministic function of the noise set $N_i = \{ \epsilon_l: v_l \in \mathrm{an}(Q_i) \cup Q_i \cup\{x_i\}\}$. We show that if there is no UBP and no UCP between $x_i$ and $x_j$, then $N_i \cap N_j = \emptyset$, any shared $\epsilon_l$ would imply a UCP or UBP. Since $\epsilon_l$'s are mutually independent, this would imply $ \eta_i \cind \eta_j$. For the other direction we show that if a UBP or UCP exists between $x_i$ and $x_j$, then some $\epsilon_a \in N_i \cap N_j$ and hence by the \emph{residual faithfulness condition} (Assumption \ref{assumption:residual_faithfulness}), $ \eta_i \notcind \eta_j$.
\end{proofsketch}
We now instantiate the graph function $T$ from Definition~\ref{def:Identifiability}. Let $T_{ADMG}(G) = (A,B)$ denote the projection of the full DAG $G$ onto the ADMG over $X$, where $A(i,j) = 1$ iff $x_j$ is a parent of $x_i$ in $G$, and $B(i,j) = 1$ iff there exists a UBP or UCP between $x_i$ and $x_j$ in $G$.

Let $S$ be a subset of the index set of $X$ with $i \in S$, and let $h(i,S)$
be any function of the form
$$h(i,S) = \frac{x_i - f(\mathbf{x}_{X \setminus S})}{g(\mathbf{x}_{X \setminus S})}$$
where both the location function $f$ and the scale function $g$ use the same regression set $X \setminus S$. More generally, $f$ and $g$ may exclude different subsets $S_f, S_g \subseteq S$ and all results continue to hold; we use a single $S$ for notational simplicity. Consider a fixed class
$\mathcal{H}$ of such regression functions $h$. We assume
that the data generating process satisfies the following residual
faithfulness condition w.r.t.\ $\mathcal{H}$, analogous
to the assumption used in additive-noise
models~\citep{maeda21a,ashman2023causal,pham2025}.
\begin{assumption}[Residual faithfulness for LSNM]
\label{assumption:residual_faithfulness}
When both $h_1(i, S)$ and $h_2(j, S')$ have terms involving functions of the same exogenous noise $\epsilon_k$, they are mutually dependent:
\[
(\epsilon_k \notcind h_1(i, S)) \;\wedge\; (\epsilon_k \notcind h_2(j, S')) \;\Longrightarrow\; h_1(i, S) \notcind h_2(j, S').
\]
\end{assumption}
The interaction between $\mathcal{H}$ and the observed data gives the following characterization of $T_{ADMG}(G)$.

\begin{proposition}\label{prop:residual_independence_pattern}
Assume the data is generated from the SCM in Equation~(\ref{eq: LSNM-UV}) with the true graph $G$. Let $T_{ADMG}(G) = (A,B)$. Assume some residual faithfulness assumption \ref{assumption:residual_faithfulness}.
\begin{align}
B(i,j) = 1 &\iff \forall\, h_1,h_2\in \mathcal{H}:\; h_1(i, \{i\}) \notcind h_2(j,\{j\}). \label{eq:case1-main} \\
A(i,j) = 1 \land B(i,j) = 0 &\iff \forall\, h_1,h_2\in \mathcal{H}:\; h_1(i,\{i,j\}) \notcind h_2(j,\{j\}), \label{eq:case2a-main} \\
& \qquad \text{and}\; \exists\, h_1,h_2\in \mathcal{H}:\; h_1(i,\{i\}) \cind h_2(j,\{j\}). \label{eq:case2b-main} \\
A(i,j) = 0 \land A(j,i) = 0 \land B(i,j) = 0 &\iff \exists\, h_1,h_2\in \mathcal{H}:\; h_1(i,\{i,j\}) \cind h_2(j,\{i,j\}). \label{eq:case3-main}
\end{align}
\end{proposition}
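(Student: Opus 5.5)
Throughout I will read $h(i,S)$ as a residual of $x_i$ regressed on all observed variables except those indexed by $S$. I will also assume that $\mathcal{H}$ is rich enough to contain the "true" location-scale regressions, the ones that recover $\eta_i$. Concretely, when $S=\{i\}$ and no observed child of $x_i$ is needed, the choice $f=f_i^1(K_i)$, $g=g_i^1(K_i)$ gives $h(i,\{i\})=\eta_i$ exactly, by Equation~\eqref{eq: LSNM-dependent-noise}. The plan is to prove each of the three equivalences in two steps. The ``$\Leftarrow$'' or existence direction comes from exhibiting these true regressions and invoking Lemma~\ref{lemma:independence_patterns}. The ``$\Rightarrow$'' or universality direction comes from Assumption~\ref{assumption:residual_faithfulness}: I show that every admissible residual pair must share some exogenous noise $\epsilon_k$ on which both depend. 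The three cases are mutually exclusive and exhaustive, up to the orientation of $A$. So each equivalence can also be closed by contraposition once the existential statements are established in the other cases.

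\textbf{Case \eqref{eq:case1-main}.} ($\Rightarrow$) Suppose $B(i,j)=1$. The proof sketch of Lemma~\ref{lemma:independence_patterns} gives some $\epsilon_a\in N_i\cap N_j$. Any residual $h_1(i,\{i\})$ equals $x_i$ minus a function of variables other than $x_i$, rescaled. The noise $\epsilon_a$ enters $x_i$ through $\eta_i$ multiplied by $g_i^1(K_i)$. I will argue that no function of $X\setminus\{x_i\}$ can cancel this dependence, because $\epsilon_a$ reaches $x_i$ only through hidden parents, which are not observed. Hence $\epsilon_a\notcind h_1$, and similarly $\epsilon_a\notcind h_2$, so residual faithfulness gives $h_1\notcind h_2$. ($\Leftarrow$) Suppose $B(i,j)=0$. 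If there is no directed edge between $x_i$ and $x_j$, or if one is present, the bow-free-type situation lets me choose $h_1=\eta_i$ and $h_2=\eta_j$ by regressing each variable on its own observed parents. This requires $S=\{i\}$ to exclude only $x_i$, and by $\mathcal{H}$-richness $f,g$ may ignore the remaining regressors. Lemma~\ref{lemma:independence_patterns} then yields independence.

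\textbf{Case \eqref{eq:case2a-main}--\eqref{eq:case2b-main}.} Suppose $A(i,j)=1$ and $B(i,j)=0$, so $x_j\to x_i$. Condition \eqref{eq:case2b-main} follows from Case~\eqref{eq:case1-main} ($\Leftarrow$). For \eqref{eq:case2a-main}, $h_1(i,\{i,j\})$ cannot use $x_j$. Since $x_i$ depends nontrivially on $x_j$ through $f_i^1$ or $g_i^1$, and hence on $\epsilon_j$, while $\epsilon_j$ cannot be reconstructed from the other observed variables, $\epsilon_j$ remains in $h_1$. Meanwhile $\epsilon_j$ is always present in $h_2(j,\{j\})$. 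Residual faithfulness then gives dependence for all pairs. For the converse, I use exhaustiveness. If $B(i,j)=1$, then \eqref{eq:case2b-main} fails by Case~\eqref{eq:case1-main}. If there is no adjacency, \eqref{eq:case3-main}'s construction gives $h_1(i,\{i,j\})=\eta_i\cind\eta_j=h_2(j,\{j\})$, so \eqref{eq:case2a-main} fails. If $x_i\to x_j$, then $h_1(i,\{i,j\})=\eta_i$ is admissible because $x_j\notin K_i$. This $\eta_i$ is independent of $\eta_j$, so again \eqref{eq:case2a-main} fails.

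\textbf{Case \eqref{eq:case3-main} and main obstacle.} If the pair is non-adjacent and has no bidirected edge, then neither variable is an observed parent of the other. So $\eta_i$ and $\eta_j$ are both admissible with $S=\{i,j\}$, and they are independent by Lemma~\ref{lemma:independence_patterns}. The converse follows from the dependence arguments above: a bidirected edge leaves a shared $\epsilon_a$, and a directed edge $x_j\to x_i$ leaves $\epsilon_j$ in both residuals once $x_j$ is excluded. The hard step throughout is the claim ``$\epsilon_a\notcind h$ for \emph{every} $h\in\mathcal{H}$''. One must rule out that a clever regression on other observed variables, possibly descendants of $x_i$, absorbs the shared noise. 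I would first try to handle this with faithfulness (Assumption~\ref{assumption:CFC}) applied to the conditional dependence of $x_i$ and $\epsilon_a$ given $X\setminus S$. If that falls short, I would fall back on interpreting ``has terms involving $\epsilon_k$'' in Assumption~\ref{assumption:residual_faithfulness} structurally, as the authors appear to intend.
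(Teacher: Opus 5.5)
Your overall architecture is the paper's, and the existence halves are fine. Plugging in the true $f_i^1,g_i^1$ (admissible whenever the excluded indices contain no parent of $x_i$) yields $\eta_i,\eta_j$. Lemma~\ref{lemma:independence_patterns} then gives the $\Leftarrow$ of \eqref{eq:case1-main}, condition \eqref{eq:case2b-main}, the $\Rightarrow$ of \eqref{eq:case3-main}, and the refutation of \eqref{eq:case2a-main} when $A(i,j)=0$, exactly as in the paper.

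The gap is in every universal-dependence claim. Assumption~\ref{assumption:residual_faithfulness} delivers $h_1\notcind h_2$ only once its \emph{premises} are established:
(i) the shared $\epsilon_a\in N_i\cap N_j$ is dependent on every admissible $h_1(i,\{i\})$ and $h_1(i,\{i,j\})$;
(ii) an excluded parent's $\epsilon_j$ is dependent on every $h_1(i,\{i,j\})$;
(iii) $x_j$'s own $\epsilon_j$ is dependent on every $h_2(j,\{j\})$ and $h_2(j,\{i,j\})$.
You assert these, and the reasons you give do not work as stated. The claim that $\epsilon_a$ ``reaches $x_i$ only through hidden parents, which are not observed'' ignores that observed regressors may carry $\epsilon_a$ too. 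For a UCP $x_j\to u\to x_i$ the shared noise is $\epsilon_j$, and $x_j$ itself is a regressor in every $h_1(i,\{i\})$. Likewise, ``$\epsilon_j$ cannot be reconstructed from the other observed variables'' is too weak: a residual can become independent of a noise by partial cancellation without that noise being recoverable.

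Your first proposed repair fails. Assumption~\ref{assumption:CFC} concerns conditional independences among nodes of $G$, and a conditional dependence $x_i\notcind v_a\mid \mathbf{x}_{X\setminus S}$ does not give $\epsilon_a\notcind (x_i-f)/g$. Take a hidden root $v_a=\epsilon_a$ with $x_i=v_a+\epsilon_i$ and $x_j=v_a+\epsilon_j$. Then $x_i$ and $v_a$ are d-connected given $x_j$, yet $h_1=x_i-x_j=\epsilon_i-\epsilon_j\cind\epsilon_a$.

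What rules such cancellations out in the paper is the nonlinearity of the mechanisms, through two results your plan lacks. The first is Lemma~\ref{lemma:non_decomp}: a nonlinear $\varphi(a+b)$ is not of the form $s(a)+t(b)$. The paper uses it to argue that noise entering $x_i$ through hidden parents via $f_i^2,g_i^2$, or through an excluded parent via $f_i^1,g_i^1$, cannot be cancelled by any $f,g$ of observed variables. The second is Lemma~\ref{lemma:noise-irremovability-LSNM-residuals}, which answers your worry about descendants. Any observed regressor carrying $\epsilon_j$ is a descendant $x_k$ in which $\epsilon_j$ is nonlinearly entangled with $\epsilon_k$. Using it to cancel $\epsilon_j$ would therefore import $\epsilon_k$, on which $x_j$ does not depend.

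These arguments are informal in the paper as well, but they are what its universality directions rest on. Reading Assumption~\ref{assumption:residual_faithfulness} structurally does not remove the need for them, since it still leaves open whether every admissible $h$ retains the relevant noise.
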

\begin{proofsketch}
We prove each case by identifying which exogenous noises are 
retained in the residuals and applying the residual faithfulness 
condition (Assumption \ref{assumption:residual_faithfulness}).

\emph{Case 1.} ($\Rightarrow$) If $B(i,j) = 1$, a shared exogenous  noise $\epsilon_l$ enters both $\eta_i$ and $\eta_j$ through hidden paths (Lemma~\ref{lemma:independence_patterns}). Since  regression functions in $h_1$ and $h_2$ use only observed 
variables, $\epsilon_l$ cannot be removed from either residual. 
By residual faithfulness, all residual pairs are dependent.
($\Leftarrow$) If $B(i,j) = 0$, the true structural functions 
yield $h_1(i,\{i\}) = \eta_i$ and $h_2(j,\{j\}) = \eta_j$, 
which are independent by Lemma~\ref{lemma:independence_patterns}.

\emph{Case 2.} ($\Rightarrow$) For~\eqref{eq:case2a-main}: excluding 
$x_j$ from at least one of $f, g$ (required by $S = \{i,j\}$) 
leaves $\epsilon_j$ in $h_1$, while $\epsilon_j$ is irremovable 
from $h_2$ (Lemma~\ref{lemma:noise-irremovability-LSNM-residuals}). 
Faithfulness gives universal dependence.
For~\eqref{eq:case2b-main}: the true functions give 
$\eta_i \cind \eta_j$.
($\Leftarrow$) Independence in~\eqref{eq:case2b-main} directly 
contradicts Case~1, giving $B(i,j) = 0$. If $A(i,j) = 0$, the 
true functions also satisfy $S = \{i,j\}$, yielding 
$\eta_i \cind \eta_j$, contradicting~\eqref{eq:case2a-main}.

\emph{Case 3.} ($\Rightarrow$) Since neither is a parent of the 
other, the true functions satisfy $S = \{i,j\}$ for both, yielding 
$\eta_i \cind \eta_j$.
($\Leftarrow$) If $B(i,j) = 1$, hidden noise makes all residuals 
dependent regardless of $S$ (same argument as Case~1). If 
$A(i,j) = 1$, excluding $x_j$ from $h_1$ retains $\epsilon_j$, 
while $\epsilon_j$ is irremovable from $h_2$. Both contradict the 
assumed independence. Symmetrically $A(j,i) = 0$.
\end{proofsketch}


\begin{figure}[t]
\centering
\includegraphics[width=0.9\textwidth]{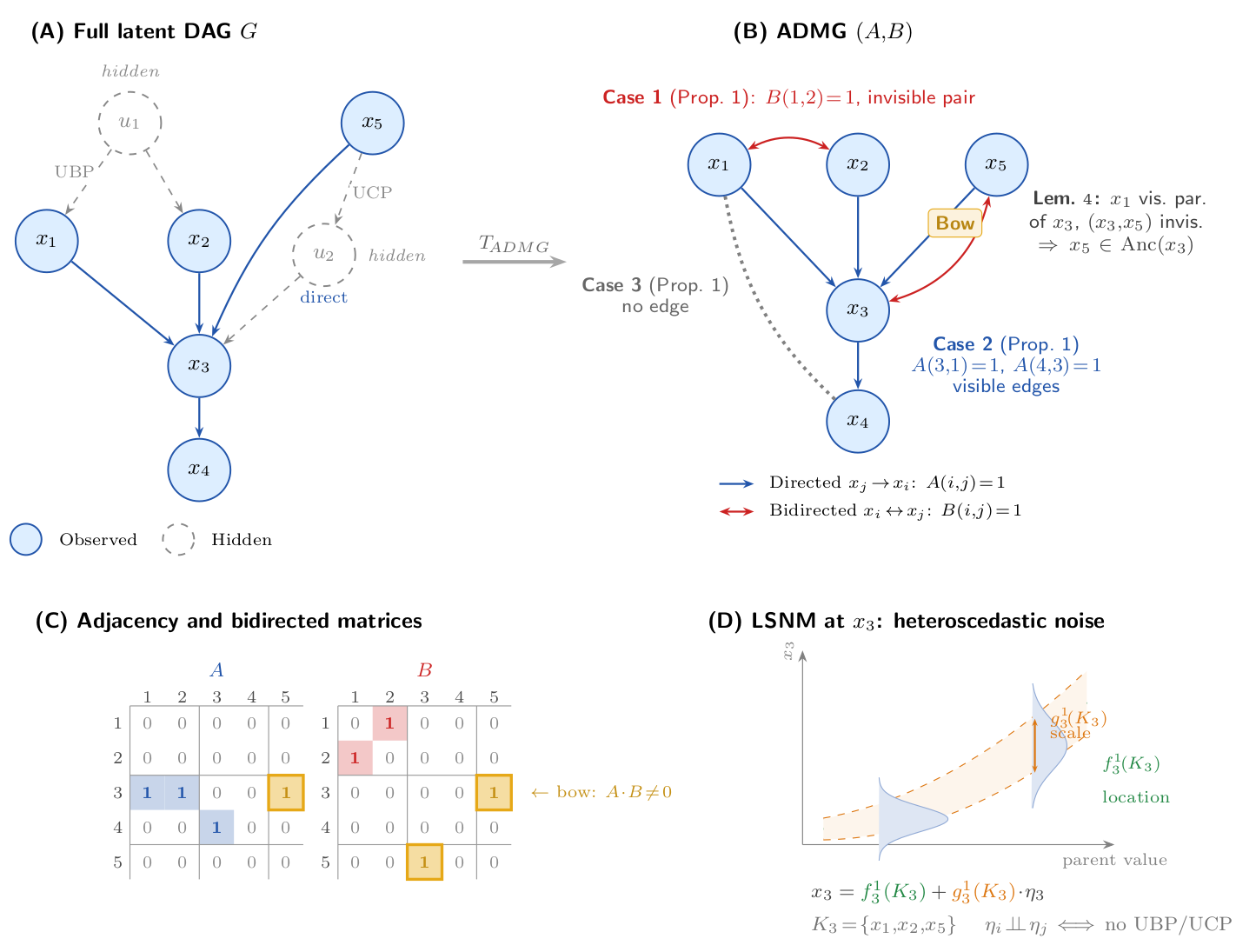}
\caption{Overview of the LSNM-UV framework.
\textbf{(A)}~Full latent DAG with observed variables $\{x_1,\ldots,x_5\}$ and hidden causes $\{u_1, u_2\}$.
\textbf{(B)}~Projected ADMG: directed edges ($\to$), bidirected edges ($\leftrightarrow$), and the three cases of Proposition~\ref{prop:residual_independence_pattern} (invisible pair, visible edge, non-edge).
\textbf{(C)}~Ground-truth adjacency matrix $A$ and bidirected matrix $B$.
\textbf{(D)}~LSNM at $x_3$: parents $K_3$ determine both location $f_3^1$ and scale $g_3^1$; the residual $\eta_3$ isolates the hidden-variable signal used for UBP/UCP detection.}
\label{fig:overview}
\end{figure}

\paragraph{Example for Proposition~\ref{prop:residual_independence_pattern}.}
We illustrate Proposition~\ref{prop:residual_independence_pattern} using the graph in Figure~\ref{fig:overview}. The structural equations (Equation~\ref{eq: LSNM-dependent-noise}) are:
\begin{align*}
x_1 &= \eta_1, &\eta_1 &= f_1^2(u_1) + g_1^2(u_1)\,\epsilon_1 & &(K_1 = \emptyset,\; Q_1 = \{u_1\}) \\
x_2 &= \eta_2, &\eta_2 &= f_2^2(u_1) + g_2^2(u_1)\,\epsilon_2 & &(K_2 = \emptyset,\; Q_2 = \{u_1\}) \\
x_5 &= \epsilon_5 & & & &(K_5 = Q_5 = \emptyset) \\
x_3 &= f_3^1(K_3) + g_3^1(K_3)\,\eta_3,\quad &\eta_3 &= f_3^2(u_2) + g_3^2(u_2)\,\epsilon_3 & &(K_3 = \{x_1,x_2,x_5\},\; Q_3 = \{u_2\}) \\
x_4 &= f_4^1(x_3) + g_4^1(x_3)\,\epsilon_4 & & & &(K_4 = \{x_3\},\; Q_4 = \emptyset)
\end{align*}
Here $u_2$ is a hidden descendant of $x_5$ (creating the UCP $x_5 \!\to\! u_2 \!\to\! x_3$) and $u_1$ is a hidden parent of $x_1$ and $x_2$ (creating the UBP $x_1 \leftarrow u_1 \to x_2$). For each observed variable $x_i$, the effective noise $\eta_i$ is a deterministic function of the noise set $N_i = \{ \epsilon_l: v_l \in \mathrm{an}(Q_i) \cup Q_i \cup\{x_i\}\}$. 
The noise sets here are $N_1 = \{\epsilon_{u_1}, \epsilon_1\}$, $N_2 = \{\epsilon_{u_1}, \epsilon_2\}$, $N_5 = \{\epsilon_5\}$, $N_3 = \{\epsilon_5, \epsilon_{u_2}, \epsilon_3\}$, $N_4 = \{\epsilon_4\}$.

With $S = \{3, 1\}$, the residual is $h(3, \{3,1\}) = (x_3 - f(\mathbf{x}_{X \setminus \{3,1\}}))\, /\, g(\mathbf{x}_{X \setminus \{3,1\}})$, where $f$ and $g$ range over all functions of variables in the regression set $X \setminus \{3,1\} = \{x_2, x_4, x_5\}$.


\medskip
\noindent\textbf{Example 1 (Invisible pair): $(x_1, x_2)$, \; $B(1,2) = 1$.}
Claim: $\forall\, h_1, h_2$: $h_1(1, \{1\}) \notcind h_2(2, \{2\})$.

With $S = \{1\}$, both $f$ and $g$ use the observed variables $X \setminus \{1\} = \{x_2, x_3, x_4, x_5\}$. Since $x_1 = \eta_1$ depends on $\epsilon_{u_1}$ through hidden $u_1$, no choice of $f, g$ removes $\epsilon_{u_1}$ from $h_1$. By symmetry, $\epsilon_{u_1}$ is irremovable from $h_2(2, \{2\})$. The shared noise gives $h_1 \notcind h_2$ for all $h_1, h_2$ (Assumption~\ref{assumption:residual_faithfulness}). In particular, no $h_1, h_2$ achieves $h_1(1,\{1\}) \cind h_2(2,\{2\})$, so the test of Case~2 (Equation \eqref{eq:case2b-main}) fails.

\medskip
\noindent\textbf{Example 2 (Visible parent): $x_1 \to x_3$, \; $A(3,1) = 1, \; B(3,1) = 0$.} For \emph{Equation~\eqref{eq:case2a-main}}, we have $\forall\, h_1, h_2$: $h_1(3, \{3, 1\}) \notcind h_2(1, \{1\})$. With $S = \{3, 1\}$, both $f$ and $g$ use $\{x_2, x_4, x_5\}$ and neither has access to $x_1$. Since $x_1$ is a true parent appearing nonlinearly in both $f_3^1$ and $g_3^1$, no $f, g$ on $\{x_2, x_4, x_5\}$ can cancel $x_1$'s contribution (Lemma~\ref{lemma:non_decomp}). Therefore $\epsilon_1$ remains in $h_1(3, \{3,1\})$ for all $f, g$. Since $\epsilon_1$ is also irremovable from $h_2(1,\{1\})$ (Lemma~\ref{lemma:noise-irremovability-LSNM-residuals}), we get universal dependence. For \emph{Equation~\eqref{eq:case2b-main}}, we show $\exists\, h_1, h_2$: $h_1(3, \{3\}) \cind h_2(1, \{1\})$. When $S = \{3\}$: the regression set is $\{x_1, x_2, x_4, x_5\}$, so $x_1$ is available to both $f$ and $g$. With the true structural functions:
\[
h_1^*(3,\{3\}) = \frac{x_3 - f_3^1(x_1,x_2,x_5)}{g_3^1(x_1,x_2,x_5)} = \eta_3 = f_3^2(u_2) + g_3^2(u_2)\,\epsilon_3.
\]
Similarly, $h_2^*(1, \{1\}) = x_1 = \eta_1$. The noise sets of $\eta_3$ and $\eta_1$ are $\{\epsilon_5, \epsilon_{u_2}, \epsilon_3\}$ and $\{\epsilon_{u_1}, \epsilon_1\}$, which are disjoint ($B(3,1) = 0$), so $\eta_3 \cind \eta_1$. This independence is also the negation of Case~1's Equation~\eqref{eq:case1-main}, confirming $B(3,1) = 0$.

More examples on \emph{Case 3 (No edge)} $(x_1, x_5)$ and \emph{indirect path} $(x_1, x_4)$, can be found in the appendix.
Define the oracle $W_{RI}$ with query $q = (i,j)$ as follows. When there is a UBP or UCP between $x_i$ and $x_j$, the oracle returns $B(i,j) = 1$. When there is no UBP or UCP, the oracle returns $B(i,j) = 0$ and additionally returns the values of $A(i,j)$ and $A(j,i)$.
Our main result is:
\begin{theorem}\label{thm:ADMG_identifiability}
Let 
$$
\mathcal{G}_{full} = \{G : \text{DAG and } T_{ADMG}(G) \text{ is bow-free}\}.
$$ 
$T_{ADMG}$ is identifiable over $\mathcal{G}_{full}$.
\end{theorem}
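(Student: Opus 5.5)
The plan is to go through oracle identifiability (Lemma~\ref{lemma:oracle_implies_ident}), using the residual-independence oracle $W_{RI}$ defined just before the statement. It suffices to show two things. First, $W_{RI}$ is a deterministic function of $P_X$ alone, so that $W_{RI}(P_{G,\theta},q)$ depends on $(G,\theta)$ only through $P_X$. Second, the collection of answers $\{W_{RI}(P_X,(i,j))\}_{i\neq j}$ determines $T_{ADMG}(G)=(A,B)$ for every $G\in\mathcal{G}_{full}$. Lemma~\ref{lemma:oracle_implies_ident} then gives identifiability.

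For the first point, I will realize $W_{RI}$ explicitly through the residual tests of Proposition~\ref{prop:residual_independence_pattern}. These tests ask whether there exist $h_1,h_2\in\mathcal{H}$ such that certain residuals, built from $x_i,x_j$ and regressions on observed sets $X\setminus S$, are independent. Such a question refers only to the joint law of observed variables, so its answer is a function of $P_X$.

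Given a query $(i,j)$, the oracle proceeds in three steps.
\begin{enumerate}
\item It tests \eqref{eq:case1-main}. If every pair $h_1(i,\{i\}),h_2(j,\{j\})$ is dependent, it outputs $B(i,j)=1$.
\item Otherwise it outputs $B(i,j)=0$. It then tests \eqref{eq:case3-main}; if that independence holds, it outputs $A(i,j)=A(j,i)=0$.
\item Otherwise it tests \eqref{eq:case2a-main}--\eqref{eq:case2b-main} in both orientations $(i,j)$ and $(j,i)$, and outputs whichever directed edge satisfies the pattern.
\end{enumerate}

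Correctness of each step is exactly the corresponding biconditional of Proposition~\ref{prop:residual_independence_pattern}, which holds under Assumption~\ref{assumption:residual_faithfulness} and is valid for data generated by \eqref{eq: LSNM-UV}. I must also check that the three cases exhaust the possibilities and are mutually exclusive. This is where bow-freeness enters. For $G\in\mathcal{G}_{full}$, each pair $(i,j)$ falls into exactly one of four situations: $B(i,j)=1$ with no directed edge; $A(i,j)=1,B=0$; $A(j,i)=1,B=0$; or no edge at all. By acyclicity, $A(i,j)$ and $A(j,i)$ cannot both equal 1. The bow case $A(i,j)=1,B(i,j)=1$ is excluded by assumption, so when $B(i,j)=1$ the oracle's answer $B=1$ already pins down $A(i,j)=A(j,i)=0$.

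Collecting the answers over all pairs reconstructs $(A,B)$ completely. Now suppose $W_{RI}(P_{G,\theta},q)=W_{RI}(P_{G',\theta'},q)$ for all $q$, with $G,G'\in\mathcal{G}_{full}$. Both reconstructions coincide, so $T_{ADMG}(G)=T_{ADMG}(G')$. This is oracle identifiability, and identifiability follows from Lemma~\ref{lemma:oracle_implies_ident}.

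The main obstacle I expect is the claim that the oracle answers agree across two different models. Proposition~\ref{prop:residual_independence_pattern} is stated for a single true $(G,\theta)$. I must argue that the test outcomes are properties of $P_X$, so that the same tests applied to $P_{G',\theta'}$ return the same outcomes and, by the Proposition applied to $(G',\theta')$, decode $T_{ADMG}(G')$. This requires that $\mathcal{H}$ be a fixed class independent of the model, and that residual faithfulness be assumed for every $(G,\theta)$ in the parameter space. I will state both explicitly.

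A secondary subtlety is that the bow-free restriction must hold for $G'$ as well as $G$. Without it, a bowed pair in $G'$ could yield the answer $B=1$ while carrying a hidden directed edge, and the decoding would not be injective. Since $\mathcal{G}_{full}$ imposes bow-freeness on every member, this case cannot arise.
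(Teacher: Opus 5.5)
Your proposal is correct and follows the paper's proof. Both define $W_{RI}$ through the residual tests of Proposition~\ref{prop:residual_independence_pattern} (Case~1 decides $B(i,j)$, then Cases~2 and~3 decide $A(i,j)$ and $A(j,i)$), use bow-freeness only to make the three cases exhaustive, and conclude through Lemma~\ref{lemma:oracle_implies_ident}. Your version is slightly more careful than the paper's: the paper phrases the oracle's answers in terms of the ground truth, and it leaves implicit that $\mathcal{H}$ is fixed and that residual faithfulness must hold for every $(G,\theta)$ in the parameter space.
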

\begin{proofsketch}
Based on Proposition~\ref{prop:residual_independence_pattern}, $W_{RI}$ is a function of the observed distribution, and $T_{ADMG}$ is identifiable w.r.t. the oracle $W_{RI}$ over $\mathcal{G}_{full}$. Therefore, $T_{ADMG}$ is also identifiable over $\mathcal{G}_{full}$.     
\end{proofsketch}
Beyond bow-free ADMGs, we provide sufficient conditions to identify causal direction even in the presence of bows. The following result identifies observed variables that lie on invisible paths between a visible parent pair, without requiring the bow-free assumption (see Remark~\ref{rem:no_subset_visibility} for comparison with \cite{pham2025}).
\begin{lemma}[Identifying ancestors on invisible paths]
\label{lem:checkonpath}
Consider distinct $x_i, x_j, x_k \in X$.
Suppose:
\begin{enumerate}
  \item $x_j$ is a visible parent of $x_i$
    (Definition~\ref{def:vis_parent}): $A(i,j) = 1$ and $B(i,j) = 0$.
  \item The pair $(x_i, x_k)$ is invisible
    (Definition~\ref{def:invisible}):
    $\forall\, h_1, h_2 \in \mathcal{H}$:
    $h_1(i, \{i\}) \notcind h_2(k, \{k\})$.
\end{enumerate}
Under Assumptions~\ref{assumption:CFC}
and~\ref{assumption:residual_faithfulness}, suppose further that the following holds:
\begin{equation}\label{eq:cop-necessary1}
  \forall\, h_1, h_2 \in \mathcal{H} :\quad
  h_1(i, \{i, k\})
  \;\notcind\;
  h_2(j, \{j, k\}).
\end{equation}

\noindent Then $x_k$ is an ancestor of $x_i$.
\end{lemma}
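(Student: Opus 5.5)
Our plan is to argue by contraposition: assume $x_k$ is \emph{not} an ancestor of $x_i$, and build explicit $h_1,h_2\in\mathcal{H}$ with $h_1(i,\{i,k\}) \cind h_2(j,\{j,k\})$, contradicting \eqref{eq:cop-necessary1}. As in the proof of Proposition~\ref{prop:residual_independence_pattern}, we take residuals built from true structural functions. Since $x_k\notin\mathrm{an}(x_i)$, the observed parents $K_i$ do not contain $x_k$. Hence regressing $x_i$ on $X\setminus\{i,k\}$ via $f_i^1,g_i^1$ is allowed, which gives $h_1^*(i,\{i,k\})=\eta_i$.

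For $x_j$, we note that $x_j$ is a parent of $x_i$, so $x_k\notin\mathrm{an}(x_j)$ as well. Thus $x_k\notin K_j$, and the analogous choice gives $h_2^*(j,\{j,k\})=\eta_j$. (If $\mathcal{H}$ requires the exclusion of $x_k$ to matter, we may let $f,g$ simply ignore it; functions of a subset of the regression set are still admissible.)

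It then remains to show $\eta_i\cind\eta_j$. This follows from Lemma~\ref{lemma:independence_patterns} as soon as there is no UBP or UCP between $x_i$ and $x_j$, which is exactly the visibility hypothesis $B(i,j)=0$ in condition~1. This yields the contradiction, so $x_k\in\mathrm{an}(x_i)$.

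The invisibility of $(x_i,x_k)$ (condition~2) is not used in this contrapositive. We would comment that it serves to place $x_k$ on an invisible path rather than merely being an ancestor, and that it is consistent with the conclusion.

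The main obstacle is to justify carefully that $x_k\notin\mathrm{an}(x_i)$ really implies $x_k\notin K_i\cup K_j$. This is immediate, because a parent is an ancestor and $x_j\to x_i$. A second point needs the most care: the paper's $h(i,S)$ requires $f,g$ to be functions of $\mathbf{x}_{X\setminus S}$, and the structural $f_i^1(K_i)$ uses only observed parents. We must check that $K_i\subseteq X\setminus\{i,k\}$, in particular that $x_j$ and the other parents remain available, and likewise $K_j\subseteq X\setminus\{j,k\}$. If $x_i\in K_j$ it would create a cycle, so $x_i\notin K_j$ and this is fine.

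If the intended statement instead needs the invisible-path refinement, we would additionally use Case~1 of Proposition~\ref{prop:residual_independence_pattern} to get $B(i,k)=1$ from condition~2.
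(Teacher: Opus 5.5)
Your proposal is correct and matches the paper's proof step for step. Both argue by contrapositive, obtain $x_k\notin K_i\cup K_j$ from ancestry through the edge $x_j\to x_i$, use the true structural functions to get $h_1^*(i,\{i,k\})=\eta_i$ and $h_2^*(j,\{j,k\})=\eta_j$, and conclude $\eta_i\cind\eta_j$ from $B(i,j)=0$ via the noise-set disjointness behind Lemma~\ref{lemma:independence_patterns}; as you observed, the paper's proof does not use the invisibility of $(x_i,x_k)$ either.
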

\begin{proofsketch}
We prove the contrapositive: assume $x_k \notin \mathrm{Anc}(x_i)$.
Since parents are ancestors, $x_k \notin K_i$.
Since $x_j \to x_i$ implies $\mathrm{Anc}(x_j) \subseteq \mathrm{Anc}(x_i)$, we also have $x_k \notin K_j$.
Choosing the true structural functions yields $h_1^*(i,\{i,k\}) = \eta_i$ and $h_2^*(j,\{j,k\}) = \eta_j$, since neither $K_i$ nor $K_j$ contains $x_k$.
Since $B(i,j) = 0$, by the noise-set argument in Lemma~\ref{lemma:independence_patterns}: the noise sets $N_i$ and $N_j$ are disjoint, so $\eta_i \cind \eta_j$, contradicting Equation~\eqref{eq:cop-necessary1}. Full proof in Appendix~\ref{sec:checkonpath}.
\end{proofsketch}
\noindent\textbf{Example 3 (Bow at $(x_3, x_5)$), \; $A(3,5) = B(3,5) = 1$.} In Figure~\ref{fig:overview}(B), $x_5$ is a direct parent of $x_3$ \emph{and} there is a UCP $x_5 \!\to\! u_2 \!\to\! x_3$. Proposition~\ref{prop:residual_independence_pattern} alone cannot determine $A(3,5)$ when $B(3,5) = 1$, since in Case~1 Equation~\eqref{eq:case1-main} $\epsilon_5 \in N_3 \cap N_5$ is irremovable because of the UCP. Also Case~2 Equation~\eqref{eq:case2b-main} also fails because again $\eta_3$ depends on $\epsilon_5$ through the UCP regardless of the regression set chosen. However, Lemma~\ref{lem:checkonpath} recovers the causal direction by taking $x_1$ as a visible parent of $x_3$ ($A(3,1) = 1$, $B(3,1) = 0$) and $(x_3, x_5)$ as the invisible pair, the dependence in Equation~\eqref{eq:cop-necessary1} holds because excluding $x_5$ from the regression for $x_3$ leaves $x_5$'s contribution irremovable, identifying $x_5$ as an ancestor of $x_3$. The bow-free assumption (Definition~\ref{def:bowfree}) is still needed for full ADMG recovery (Theorem~\ref{thm:ADMG_identifiability}), but Lemma~\ref{lem:checkonpath} extends partial identifiability beyond the bow-free case.
\section{Search method}
\begin{algorithm}[H]
\caption{LSNM-UV}
\label{alg:main}
\begin{algorithmic}[1]
\Require Data matrix $\mathbf{X}$ for $p$ observed variables, max parents $d$, significance level $\alpha$
\Ensure Adjacency matrix $A$
\State \textbf{Stage 1:} $A \leftarrow \text{LSNM-UV-Base}(\mathbf{X}, d, \alpha)$ \hfill\emph{[Algorithm~\ref{alg:lsnm-uv-base}]}

\medskip
\State \textbf{Stage 2:} Re-examine invisible pairs
\State $S \leftarrow \{(i,j) \mid A(i,j) = A(j,i) = \text{NaN}\}$
\For{each $(i,j) \in S$ with $i < j$}
    \State \textsc{checkVisible}$(i, j)$ \hfill\emph{[Algorithm~\ref{alg:checkvisible}]}
\EndFor

\State \Return $A$
\end{algorithmic}
\end{algorithm}
For the output of the algorithm, we would have $A(i,j) = 1$, then $x_j$ is a parent of $x_i$; $A(i,j) = 0$ then  $x_j$ is not a parent of $x_i$ (non-edge if $A(j,i) = 0$ as well) and $A(i,j) = \text{NaN}$, then UBP or UCP exists between $x_i$ and $x_j$.

\begin{assumption}\label{ass:HSIC}
For any $x_i, x_j \in X$ and any valid sets $S_1, S_2$, let $h_1(i,S_1)$ and $h_2(j,S_2)$ be residuals computed using GAMLSS regression functions fitted from data and let $e = p\text{-HSIC}(h_1(i,S_1), h_2(j,S_2))$. For a given significance level $\alpha$, the following equation holds
\[
\exists\, h_1, h_2 \in \mathcal{H} :\; h_1(i, S_1) \cind h_2(j, S_2) \quad \iff \quad e > \alpha.
\]
That is, independence-achieving functions exist within $\mathcal{H}$ if and only if the fitted GAMLSS residuals pass the HSIC independence test at level $\alpha$.
\end{assumption}
LSNM-UV-Base is the LSNM analogue of CAM-UV~\cite{maeda21a}, Algorithm~1. We adapt Algorithms~1 and~2 of \cite{maeda21a} from additive residuals to location-scale residuals. The algorithm has three phases:  In \emph{Phase~1}, we iterate over subsets $K \subseteq X$ of increasing size and identify \emph{sinks}. A variable $x_i \in K$ is a sink if no other member of $K$ is a descendant of $x_i$, so $K \setminus \{x_i\}$ can be added to $x_i$'s parent candidate set~$M_i$. The sink test (Lemma~\ref{lemma:sink}) checks two conditions detailed in Appendix \ref{base-algorithm-section}, first in Equation \eqref{eq:sink_indep}, after regressing $x_i$ on $M_i \cup K \setminus \{x_i\}$, the residual is independent of each of the other member's residual, meaning $K$'s parental contribution has been absorbed analogous to Equation~\eqref{eq:case2b-main}; and second in Equation \eqref{eq:sink_dep}, without $K$ in the regression (regressing on $M_i$ only), all residual pairs remain dependent, confirming $K$ carried genuine parent information that cannot be removed, analogous to Equation~\eqref{eq:case2a-main}. 



In \emph{Phase~2}, we determine the parents of each variable. For each $x_j \in M_i$, if the residual of $x_i$ regressed on $M_i \setminus \{x_j\}$ is independent of the residual of $x_j$ regressed on $M_j$, then $x_j$ is not a parent of $x_i$ (Proposition~\ref{prop:residual_independence_pattern}, Case~3,~\eqref{eq:case3-main}) and is removed from~$M_i$. If $x_j$ is a true parent, the direct causal effect $x_j \to x_i$ is not blocked by $M_i \setminus \{x_j\}$, so the residuals remain dependent (Case~2,~\eqref{eq:case2a-main}) and $x_j$ is retained. Retained non-parents (UBP/UCP partners) are distinguished in Phase~3.

In \emph{Phase~3}, we set $A(i,j) = 1$ for each $x_j \in M_i$ and mark non-adjacent pairs as invisible when their residuals $h_i(i, X \setminus M_i)$ and $h_j(j, X \setminus M_j)$ are dependent (Proposition~\ref{prop:residual_independence_pattern}, Case~1).

The sink test requires independence with \emph{every} member of $K \setminus \{x_i\}$. If any $x_j \in K$ satisfies $B(i,j) = 1$, the shared hidden noise forces $h_i \notcind h_j$ regardless of regression set (Proposition~\ref{prop:residual_independence_pattern}, Case~1), so the sink test fails for \emph{any} $K$ containing~$x_j$. True parents of $x_i$ that appear only in subsets alongside such $x_j$ are never discovered, leaving $M_i$ incomplete. 


In Stage~2, we apply the \textsc{checkVisible} procedure of \citet{pham2025} unmodified (Algorithm~\ref{alg:checkvisible}), which re-examines each pair $(x_i, x_j)$ marked invisible by Stage~1. It builds a search set $Q$ containing all identified parents and all variables with unresolved relationships to $x_i$ or~$x_j$. It checks if adding $x_j$ to $x_i$'s regression along with their respective parents. If independence holds, then $(x_i, x_j)$ is not invisible and $x_i$ is not a parent of~$x_j$ (Lemma~\ref{lem:nonparent-inclusion}, Equation ~\eqref{eq:lem9-a}).  Symmetric test adding $x_i$ to $x_j$'s regression (Lemma~\ref{lem:nonparent-inclusion}, Equation~\eqref{eq:lem9-b}). If both non-parentship tests pass, the pair is a visible non-edge.



Soundness and completeness are formally defined in Definition~\ref{def:soundness_completeness} (Appendix).
\begin{theorem}\label{thm:soundness_completeness}
With Assumptions~\ref{assumption:CFC},~\ref{assumption:residual_faithfulness}, and~\ref{ass:HSIC}, Algorithm~\ref{alg:main} (LSNM-UV) is sound and complete in identifying visible edges, visible non-edges, and invisible pairs.
\end{theorem}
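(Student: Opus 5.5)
Soundness and completeness will be proved separately for the two stages, with the theorem reduced to population-level statements through Assumption~\ref{ass:HSIC}. That assumption says every HSIC decision the algorithm makes agrees with the oracle statement ``$\exists\, h_1,h_2\in\mathcal{H}$ with $h_1(i,S_1)\cind h_2(j,S_2)$'', and its negation agrees with universal dependence. So I can replace each test in Algorithms~\ref{alg:lsnm-uv-base} and~\ref{alg:checkvisible} by the corresponding existential/universal residual (in)dependence statement. After that replacement the algorithm becomes a deterministic procedure querying the oracle $W_{RI}$ of Proposition~\ref{prop:residual_independence_pattern}, and it remains to check that each label it writes into $A$ is correct (soundness) and that every identifiable label is written (completeness). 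The three labels are: visible edge $A(i,j)=1,B(i,j)=0$; visible non-edge; and invisible pair $B(i,j)=1$, output as NaN.

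For Stage~1 I will follow the proof structure of CAM-UV \citep{maeda21a}, replacing the additive residuals by location-scale residuals throughout. The non-decomposability and irremovability facts (Lemma~\ref{lemma:non_decomp} and Lemma~\ref{lemma:noise-irremovability-LSNM-residuals}) play the role their additive counterparts played there. The argument has four steps.
\begin{enumerate}
\item \emph{Sink invariant.} By induction on $|K|$, every element added to $M_i$ in Phase~1 is either a parent of $x_i$ or a non-descendant of $x_i$. The sink test, Lemma~\ref{lemma:sink}, checks both conditions \eqref{eq:sink_indep} and \eqref{eq:sink_dep}; these mirror \eqref{eq:case2b-main} and \eqref{eq:case2a-main}, so the same noise-set argument as in Lemma~\ref{lemma:independence_patterns} applies.
\item \emph{Phase~1 completeness.} Every parent $x_j$ of $x_i$ with $B(i,j)=0$ is eventually added to $M_i$. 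This holds provided no subset used to find $x_j$ is forced to contain a partner $x_k$ with $B(i,k)=1$. Where that provision fails, I will not claim completeness for the pair; it is exactly the case deferred to Stage~2.
\item \emph{Phase~2.} Pruning is correct by Case~3 \eqref{eq:case3-main} and Case~2 \eqref{eq:case2a-main}: true visible parents are retained, and non-parents whose pair is visible are removed.
\item \emph{Phase~3.} Remaining pairs are labeled by Case~1 \eqref{eq:case1-main}. Whenever $B(i,j)=1$, dependence holds for every regression set, so such pairs can never be mislabeled as visible.
\end{enumerate}

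For Stage~2 I rely on the guarantees of \textsc{checkVisible} from \citet{pham2025}, transported through Lemma~\ref{lem:nonparent-inclusion}. For a pair marked NaN, the search set $Q$ contains all parents found so far together with the unresolved neighbors. Soundness of the test is argued as follows. Independence in \eqref{eq:lem9-a} or \eqref{eq:lem9-b} can only occur when the noise sets $N_i,N_j$ are disjoint after regression on the true parents, so $B(i,j)=0$ and the excluded variable is not a parent. Completeness is the converse. If the pair is truly visible but was marked NaN only because of the Phase~1 obstruction noted above, then some choice of regressors inside $Q$ contains the true parent sets $K_i$ and $K_j$; the true structural $f_i^1,g_i^1$ then give residuals $\eta_i,\eta_j$, which are independent by Lemma~\ref{lemma:independence_patterns}. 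The oracle therefore reports independence and the pair is resolved.

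I expect the completeness direction to be the main obstacle, specifically showing that $Q$ is always large enough to contain $K_i\cup K_j$ for every visible pair left unresolved. This needs an inductive argument: pairs are processed in order, and each resolution can only enlarge the known parent sets. The new ingredient relative to \citet{pham2025} is that scale functions $g$ must also be fitted on $Q$. I would first try to show that fitting $f$ and $g$ on the same superset of the true parents still recovers $\eta_i$ exactly. This should follow because the true $(f_i^1,g_i^1)$ lie in $\mathcal{H}$ and extra regressors may simply be ignored, after which the rest of the additive proof carries over verbatim.
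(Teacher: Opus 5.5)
Your architecture matches the paper's: translate each HSIC decision through Assumption~\ref{ass:HSIC}, use that LSNM-UV-Base never commits to a wrong visible label, and resolve NaN pairs in \textsc{checkVisible} with true-structural-function witnesses $\eta_i,\eta_j$ built inside $Q$. Your Step~2, completeness of Phase~1, is never needed. The gap is the step you flag yourself, $Q\supseteq K_i\cup K_j$, and your plan for it does not work. Growth of the known parent sets is the wrong invariant. $Q$ also contains the NaN-neighbours, and the real danger is a true parent whose pair received a wrong non-NaN label. Nothing in Stage~2 can repair that, because \textsc{checkVisible} only revisits NaN pairs.

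The missing idea is the paper's property~(Q), which it derives from soundness of the base stage: a parent missed by Phase~1 is necessarily marked NaN. Suppose $x_k\in K_i$ but $x_k\notin M_i$. Then $x_i\notin M_k$ by your sink invariant, so Phase~3 tests $h_i(i,X\setminus M_i)$ against $h_k(k,X\setminus M_k)$. The first residual excludes $x_k$ from $f$ or $g$ and so retains $\epsilon_k$ (Proposition~\ref{prop:regression_sets_fg}(a,b)). The second retains $\epsilon_k$ by Lemma~\ref{lemma:noise-irremovability-LSNM-residuals}. Residual faithfulness then forces dependence, so $A(i,k)=\mathrm{NaN}$ and $x_k\in Q$ by definition. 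Your Step~4 treats only pairs with $B(i,j)=1$ and never establishes this. Across successive \textsc{checkVisible} calls, the invariant ``every true parent of $x_i$ is in $P_i$ or NaN-paired with $x_i$'' is preserved because each resolution is sound, not because parent sets grow.

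Second, you need $Q$-coverage for \emph{soundness} of Stage-2 directed edges, not only for completeness. \textsc{checkVisible} outputs $x_j\to x_i$ when $a_1>\alpha$ for some $M,N$ while $e\le\alpha$ and $a_2\le\alpha$ for all $M,N\subseteq Q$. Your soundness argument uses only observed independences. Via \eqref{eq:lem9-a} these give that the pair is not invisible and $x_i$ is not a parent of $x_j$, which leaves ``visible non-edge'' open. Ruling that out is a claim that an exhaustive search failed. If the pair were a visible non-edge, the witness $M=K_i$, $N=K_j$ with the true $f^1,g^1$ lies in $Q$ and would have fired line~7.

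Relatedly, for a visible edge $x_j\to x_i$ the completeness witness is the $a_1$ test with $x_j$ appended to $M$, not a regression ``inside $Q$'', since $x_j\in K_i$ but $x_j\notin Q$. Finally, state how the invisible-pair claims follow. In the paper they come by complementation: soundness for invisible pairs from completeness for visible edges and non-edges, and completeness for invisible pairs from their soundness.
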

For full proof, see Section \ref{sound-complete-proof}.
\section{Experiments}
\subsection{Performance on artificial data}

We follow the setup of \citet{maeda21a} Section~5.1: $p = 10$ observed variables with an Erd\H{o}s--R\'enyi DAG (edge probability $0.3$), augmented with $2$ hidden common causes (inducing UBPs) and $2$ hidden intermediates (inducing UCPs); a representative graph is shown in Figure~\ref{fig:datagen}. Data are generated from the two-level LSNM in Equation~\eqref{eq: LSNM-UV} using nonlinear families (RBF, tanh, softplus, logarithmic) for the structural functions $f^1, g^1, f^2, g^2$ respectively; full details are in Appendix~\ref{app:datagen}. We evaluate on sample sizes $n \in \{200, 400, \ldots, 1000\}$ over $400$ independent trials, and compare against \textbf{CAM-UV}~\citep{maeda21a}, \textbf{FCI}~\citep{spirtes2000,zhang2008}, and \textbf{BANG}~\citep{wang2020bang}. Precision, recall, and F1 are reported separately for directed and bidirected edges.
\begin{figure}[H]
\centering
\begin{minipage}[t]{0.48\textwidth}
\centering
\textbf{(A) Full latent DAG $G$}\\[6pt]
\begin{tikzpicture}[
  obs/.style={circle, draw, thick, minimum size=20pt, inner sep=0pt, font=\small},
  hid/.style={circle, draw, dashed, thick, minimum size=20pt, inner sep=0pt,
              font=\small, fill=gray!10},
  arr/.style={->, >=stealth, semithick},
  scale=0.72,
]
\node[obs] (x0) at (0, 2.5)    {$x_0$};
\node[obs] (x1) at (1.4, 2.5)  {$x_1$};
\node[obs] (x2) at (2.8, 2.5)  {$x_2$};
\node[obs] (x3) at (4.2, 2.5)  {$x_3$};
\node[obs] (x4) at (5.6, 2.5)  {$x_4$};
\node[obs] (x5) at (0, 0)      {$x_5$};
\node[obs] (x6) at (1.4, 0)    {$x_6$};
\node[obs] (x7) at (2.8, 0)    {$x_7$};
\node[obs] (x8) at (4.2, 0)    {$x_8$};
\node[obs] (x9) at (5.6, 0)    {$x_9$};
\node[hid] (u1) at (2.1, 4.2)  {$u_1$};
\node[hid] (u2) at (2.8, -1.8) {$u_2$};
\node[hid] (y1) at (2.8, 1.25) {$y_1$};
\node[hid] (y2) at (4.2, 1.25) {$y_2$};

\draw[arr] (x0) -- (x1);
\draw[arr] (x1) -- (x2);
\draw[arr] (x2) -- (x3);
\draw[arr] (x3) -- (x4);
\draw[arr] (x5) -- (x6);
\draw[arr] (x6) -- (x7);
\draw[arr] (x8) -- (x9);
\draw[arr] (x1) -- (x6);
\draw[arr] (x4) -- (x9);

\draw[arr] (u1) -- (x0);
\draw[arr] (u1) -- (x3);

\draw[arr] (u2) -- (x5);
\draw[arr] (u2) -- (x9);

\draw[arr] (x2) -- (y1);
\draw[arr] (y1) -- (x7);

\draw[arr] (x3) -- (y2);
\draw[arr] (y2) -- (x8);
\end{tikzpicture}
\end{minipage}%
\hfill
\begin{minipage}[t]{0.48\textwidth}
\centering
\textbf{(B) Projected ADMG}\\[6pt]
\begin{tikzpicture}[
  obs/.style={circle, draw, thick, minimum size=20pt, inner sep=0pt, font=\small},
  dir/.style={->, >=stealth, semithick, blue!70!black},
  bidir/.style={<->, >=stealth, semithick, red!70!black, dashed},
  scale=0.72,
]
\node[obs] (x0) at (0, 2.5)    {$x_0$};
\node[obs] (x1) at (1.4, 2.5)  {$x_1$};
\node[obs] (x2) at (2.8, 2.5)  {$x_2$};
\node[obs] (x3) at (4.2, 2.5)  {$x_3$};
\node[obs] (x4) at (5.6, 2.5)  {$x_4$};
\node[obs] (x5) at (0, 0)      {$x_5$};
\node[obs] (x6) at (1.4, 0)    {$x_6$};
\node[obs] (x7) at (2.8, 0)    {$x_7$};
\node[obs] (x8) at (4.2, 0)    {$x_8$};
\node[obs] (x9) at (5.6, 0)    {$x_9$};

\draw[dir] (x0) -- (x1);
\draw[dir] (x1) -- (x2);
\draw[dir] (x2) -- (x3);
\draw[dir] (x3) -- (x4);
\draw[dir] (x5) -- (x6);
\draw[dir] (x6) -- (x7);
\draw[dir] (x8) -- (x9);
\draw[dir] (x1) -- (x6);
\draw[dir] (x4) -- (x9);

\draw[bidir, bend left=30] (x0) to (x3);   
\draw[bidir, bend right=30] (x5) to (x9);  
\draw[bidir, bend right=20] (x2) to (x7);  
\draw[bidir, bend left=20] (x3) to (x8);   
\end{tikzpicture}
\end{minipage}

\caption{Simulation graph ($p{=}10$, $2$ UBPs, $2$ UCPs).
\textbf{(A)}~Full latent DAG: solid = observed, dashed = hidden. Common causes $u_1, u_2$ induce UBPs; intermediates $y_1, y_2$ induce UCPs.
\textbf{(B)}~Projected bow-free ADMG: \textcolor{blue!70!black}{blue} = directed, \textcolor{red!70!black}{red dashed} = bidirected.}
\label{fig:datagen}
\end{figure}
\paragraph{Results.}
Precision is the fraction of predicted edges that are correct; recall is the fraction of true edges recovered. Figure~\ref{fig:results} shows the results. LSNM-UV achieves the highest directed F1 across all sample sizes ($0.53$ at $n{=}200$, $0.68$ at $n{=}1000$), with precision $0.66$ and recall $0.71$ at $n{=}1000$. CAM-UV reaches only $0.13$ directed F1 at $n{=}1000$ (precision $0.25$, recall $0.10$), as its additive residuals cannot account for heteroscedasticity; FCI and BANG remain below $0.04$. For bidirected edges, CAM-UV and BANG obtain higher recall ($0.62$ and $0.75$) but low precision ($0.14$), producing many false positives. LSNM-UV recovers fewer bidirected edges (recall $0.07$) but with the highest precision ($0.21$).

\begin{figure}[H]
\centering
\includegraphics[width=0.9\textwidth]{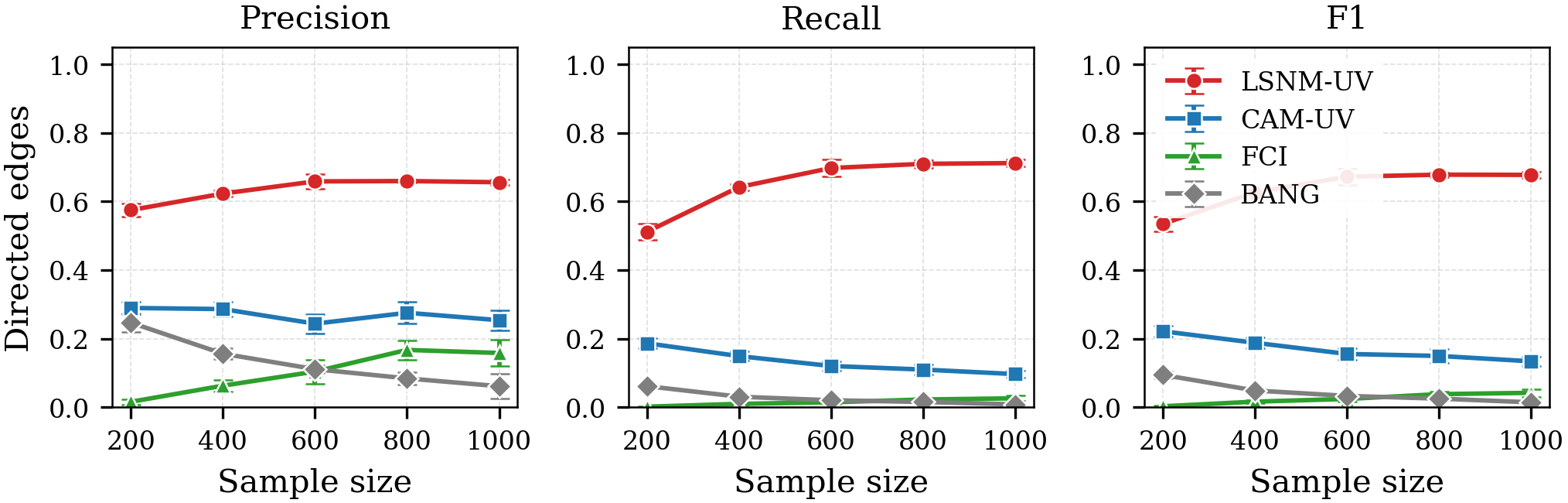}\\[4pt]
\includegraphics[width=0.9\textwidth]{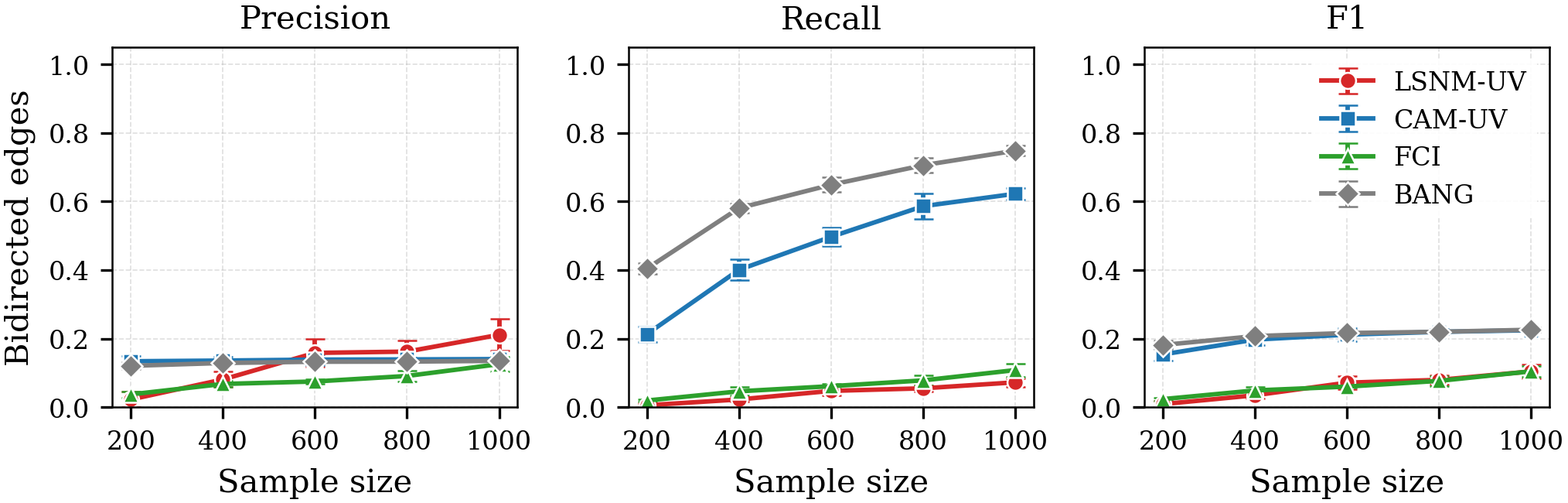}
\caption{Precision, recall, and F1 for directed edges (top row) and bidirected edges (bottom row). Data are generated from the LSNM-UV model (Equation~\eqref{eq: LSNM-UV}) with $f^1{=}\text{RBF}$, $g^1{=}\text{tanh}$, $f^2{=}\text{softplus}$, $g^2{=}\text{logarithmic}$, using the graph in Figure~\ref{fig:datagen}. Each point averages over $400$ independent trials at sample sizes $n \in \{200, 400, \ldots, 1000\}$; error bars show $\pm 1$ standard deviation across four batches of $100$ trials.}
\label{fig:results}
\end{figure}

\section{Conclusion}

We extended causal additive models to the location-scale noise setting with unobserved variables. Our theoretical analysis showed that LSNM residuals characterise directed edges, bidirected edges, and non-edges (Proposition~\ref{prop:residual_independence_pattern}), that bow-free ADMGs are identifiable under this model (Theorem~\ref{thm:ADMG_identifiability}), and provided sufficient conditions for identifying causal direction even in the presence of bows (Lemma~\ref{lem:checkonpath}). The proposed algorithm is provably sound and complete (Theorem~\ref{thm:soundness_completeness}), and experiments confirm that it substantially outperforms additive-noise baselines in directed-edge recovery on heteroscedastic data. A current limitation is low bidirected-edge recall; future work will focus on improving bidirected detection and on applying the method to real-world observational data.
 See Appendix~\ref{appendix:broader_impacts} for discussion on broader impacts.
%
\section*{Acknowledgements}
This work was partially supported by the Japan Science and Technology Agency (JST) under CREST Grant Number JPMJCR22D2 and by the Japan Society for the Promotion of Science (JSPS) under KAKENHI Grant Numbers  JP24K20741 and JP25K03084.




\appendix

\section{Technical appendices and supplementary material}
\subsection{Theoretical proofs}
\paragraph{Notation.} In the main text, the regression function $h(i,S)$ uses a single exclusion set $S$ for both the location function $f$ and the scale function $g$. The proofs below establish all results under the more general formulation where $f$ and $g$ may exclude different subsets $S_f, S_g \subseteq S$. Since the unified case $S_f = S_g = S$ is a special case of the general formulation, all main-text results follow immediately.

\subsubsection{Proof Lemma~\ref{lemma:independence_patterns}}
\begin{proof}[Proof of Lemma~\ref{lemma:independence_patterns}]

We assume mutual independence of exogenous noises $\epsilon_l$ for all $v_l \in V$ in~\eqref{eq: LSNM-UV} and the residual faithfulness condition (Assumption~\ref{assumption:residual_faithfulness}).

Recall from~\eqref{eq: LSNM-dependent-noise} that the effective noise for each observed variable $x_i$ is 
\begin{align}
\label{eq:eta_def}
    \eta_i = f_{i}^2(Q_i) + g_{i}^2(Q_i)\epsilon_i.
\end{align}
where $Q_i$ is the set of unobserved parents of $x_i$. We note that $\eta_i$ can equivalently be written as a location-scale regression residual of observed variables
\begin{align}
    \label{eq:eta_as_residual}
    \eta_i = \frac{x_i - f_i^1(K_i)}{g_i^1(K_i)} = h_1(i,S)
\end{align}
where $K_i$ is the set of observed parents of $x_i$. For $j$, analogously we would have some $\eta_j = h_2(j,S')$.

Since the causal graph $G$ is a DAG, every variable $v_l \in V$ can be written as a deterministic function of the exogenous noises of its ancestors and itself. In particular, each unobserved parent $u_k \in Q_i$ is a deterministic function of $\{\epsilon_l : v_l \in \mathrm{an}(u_k) \cup \{u_k\} \}$. Therefore, $\eta_i$ in \eqref{eq:eta_def} is a deterministic function of the noise set
\[
N_i = \{ \epsilon_l: v_l \in \mathrm{an}(Q_i) \cup Q_i \cup\{x_i\}\}
\]
i.e.  $\eta_i = \phi_i(N_i)$ for some deterministic function $\phi_i$. We define $N_j$ analogously for $\eta_j$.

\paragraph{No UBP/UCP $\Longrightarrow \eta_i \cind \eta_j$.}
We first prove that if there is no UBP and no UCP between $x_i$ and $x_j$, then $N_i \cap N_j = \emptyset$ which implies that $\eta_i$ and $\eta_j$ are independent. 

Let's say there does exist some $\epsilon_l \in N_i \cap N_j$, for some $l$. We consider three cases depending on whether $v_l$ is one of $x_i, x_j$, a hidden variable or another observed variable.

\medskip
\noindent\emph{Case 1: $v_l = x_i$ (or symmetrically $v_l = x_j$)} Then $\epsilon_i \in N_j$, which means $x_i \in \mathrm{an}(Q_j) \cup Q_j$. Since $x_i$ is observed and $Q_j$ consists of hidden variables, we have $x_i \notin Q_j$, so $x_i \in \mathrm{an}(Q_j)$. This means there exists a hidden variable $u_k \in Q_j$ such that $x_i$ is an ancestor of $u_k$. Tracing the path: $x_i \to \cdots \to u_k \to x_j$, where $u_k \notin X$. This is a UCP from $x_i$ to $x_j$, contradicting our assumption. The case $v_l = x_j$ follows symmetrically, producing a UCP from $x_j$ to $x_i$.

\medskip
\noindent\emph{Case 2: $v_l \in U$ (an unobserved variable)} Then $v_l \in \mathrm{an}(Q_i) \cup Q_i$ and $v_l \in \mathrm{an}(Q_j) \cup Q_j$. Hence there exist directed paths from $v_l$ to some $u_{k_1} \in Q_i$ (a hidden parent of $x_i$) and from $v_l$ to some $u_{k_2} \in Q_j$ (a hidden parent of $x_j$). Together these give a path of the form $x_i \leftarrow u_{k_1} \leftarrow \cdots \leftarrow v_l \to \cdots \to u_{k_2} \to x_j$ where $u_{k_1}, u_{k_2} \notin X$. This is a UBP between $x_i$ and $x_j$, contradicting our assumption.

\medskip
\noindent\emph{Case 3: $v_l \in X$ with $v_l \neq x_i$ and $v_l \neq x_j$ (an observed variable)} Then $v_l$ is an observed variable that is an ancestor of hidden parents of both $x_i$ and $x_j$. That is, there exists $u_{k_1} \in Q_i$ and $u_{k_2} \in Q_j$ such that $v_l \in \mathrm{an}(u_{k_1})$ and $v_l \in \mathrm{an}(u_{k_2})$. The path $x_i \leftarrow u_{k_1} \leftarrow \cdots \leftarrow v_l \to \cdots \to u_{k_2} \to x_j$ constitutes a UBP between $x_i$ and $x_j$ (the source node $v_l$ may be observed, but $u_{k_1}, u_{k_2} \notin X$), contradicting our assumption.

In all cases, the existence of a shared noise term $\epsilon_l \in N_i \cap N_j$ implies the existence of a UCP or UBP, which contradicts the hypothesis. Therefore, $N_i \cap N_j = \emptyset$.

Since the exogenous noise terms $\epsilon_l$ are mutually independent, the sets $N_i$ and $N_j$ are collections of independent random variables with no overlap. Therefore $\eta_i = \phi_i(N_i)$ and $\eta_j = \phi_j(N_j)$ are functions of independent noise terms. This implies that 

\[
\eta_i \cind \eta_j
\]
\paragraph{$\eta_i \cind \eta_j \Longrightarrow $ No UBP/UCP.}
We prove that if $\eta_i$ and $\eta_j$ are independent then there is no UBP and no UCP between $x_i$ and $x_j$.

\medskip
\noindent\emph{Case A: A UBP exists between $x_i$ and $x_j$.} By definition, there is a path of the form $x_i \leftarrow u_{k_1} \leftarrow \cdots \leftarrow v_a \to \cdots \to u_{k_2} \to x_j$ where $u_{k_1} \in Q_i$ and $u_{k_2} \in Q_j$ are hidden variables. Then there is a common ancestor $v_a$ to both $x_i$ and $x_j$ with exogenous noise $\epsilon_a$. 

Since $v_a \in \mathrm{an}(u_{k_1})$ and $u_{k_1} \in Q_i$, we have that $\epsilon_a \in N_i$.

Since $v_a \in \mathrm{an}(u_{k_2})$ and $u_{k_2} \in Q_j$, we have that $\epsilon_a \in N_j$.

Thus $\epsilon_a \in N_i \cap N_j$ and both $\eta_i$ and $\eta_j$ are functions of $\epsilon_a$. This implies that both $h_1(i,S)$ and $h_2(j,S')$ depend on the same noise term $\epsilon_a$. Thus by the \emph{residual faithfulness condition} (Assumption~\ref{assumption:residual_faithfulness}) gives $ \eta_i \notcind \eta_j$.

\medskip
\noindent\emph{Case B: A UCP exists from $x_i$ to $x_j$ (or vice versa).} 
If a UCP exists between $x_i$ to $x_j$, then by definition, we would have a path $x_i \to \cdots \to u_k \to x_j$ where $u_k \in Q_j$ and $u_k \notin X$. Since $x_i$ is an ancestor of $u_k$, the variable $u_k$ depends on $\epsilon_i$ through the directed path $x_i \to \cdots \to u_k$. Since $u_k \in Q_j$, we have $\epsilon_i \in N_j$. 

Meanwhile $\epsilon_i \in N_i$ directly (since $x_i \in \{x_i\} \subseteq \{x_i\}  \cup Q_i \cup \mathrm{an}(Q_i)$). 

Thus $\epsilon_i \in N_i \cap N_j$ and both $\eta_i$ and $\eta_j$ are functions of $\epsilon_i$. This implies that both $h_1(i,S)$ and $h_2(j,S')$ depend on the same noise term $\epsilon_i$. By the \emph{residual faithfulness condition} (Assumption~\ref{assumption:residual_faithfulness}) gives $ \eta_i \notcind \eta_j$.

The case where a UCP exists from  $x_j$ and $x_i$ follows by symmetry. 

In all cases, the existence of a UCP or UBP implies that $ \eta_i \notcind \eta_j$, hence the implication that  $ \eta_i \cind \eta_j$ implies No UCP/UBP.
\end{proof}

\subsubsection{Proof of Proposition \ref{prop:residual_independence_pattern}}
For proof of Proposition \ref{prop:residual_independence_pattern},  we will use two results. First is from \cite{maeda21a}, 
\begin{lemma}[Non-decomposability of nonlinear compositions]
\label{lemma:non_decomp}
Let $\varphi$ be a nonlinear function, and let $a$ and $b$ be two (independent) random variables. Then $\varphi(a + b)$ cannot be decomposed as $s(a) + t(b)$ for any functions $s, t$. Hence for any non-linear function $\varphi(a + b)$, it cannot be represented as a linear sum of functions of $a$ and $b$ \cite{maeda21a}. 
\end{lemma}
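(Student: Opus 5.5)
We argue by contradiction. Suppose $\varphi(a+b) = s(a) + t(b)$ for some functions $s,t$, with $a$ and $b$ independent and each taking at least two values; for the result to be meaningful, the joint support of $(a,b)$ must be a product set $\mathcal{A}\times\mathcal{B}$ containing enough points. The idea is to show that such an additive decomposition forces $\varphi$ to be affine on the relevant range, which contradicts nonlinearity. The independence of $a$ and $b$ enters only to guarantee the product support, so that the identity holds for every pair $(a,b)\in\mathcal{A}\times\mathcal{B}$ and not just along some curve.

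The key step is a second-difference argument. Fix $a_1,a_2\in\mathcal{A}$ and $b_1,b_2\in\mathcal{B}$. The mixed difference
\[
\varphi(a_1+b_1)-\varphi(a_1+b_2)-\varphi(a_2+b_1)+\varphi(a_2+b_2)
\]
equals $[s(a_1)+t(b_1)]-[s(a_1)+t(b_2)]-[s(a_2)+t(b_1)]+[s(a_2)+t(b_2)]=0$. Writing $u=a_1+b_1$ and choosing $a_2=a_1+\delta$, $b_2=b_1+\delta'$ (possible when the supports contain intervals), we get $\varphi(u)-\varphi(u+\delta')-\varphi(u+\delta)+\varphi(u+\delta+\delta')=0$ on an open set. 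This says the increment $\varphi(u+\delta)-\varphi(u)$ is independent of $u$, i.e.\ Cauchy's equation for $\psi(\delta)=\varphi(u+\delta)-\varphi(u)$. Under a mild regularity condition (measurability or continuity, which we assume for structural functions), this forces $\varphi$ to be affine on the interior of $\mathcal{A}+\mathcal{B}$. If $\varphi$ is differentiable, the same conclusion follows more directly: differentiating in $a$ and then in $b$ gives $\varphi''(a+b)=0$.

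Affinity contradicts the hypothesis that $\varphi$ is nonlinear on the support, so no such $s,t$ exist. The second sentence of the lemma is just a restatement of this conclusion.

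The main obstacle is the support condition. The argument needs $\mathcal{A}$ and $\mathcal{B}$ to contain intervals, or at least enough structure for the differences to cover a neighbourhood. With discrete or degenerate supports the lemma is false: if $a$ is constant, any $\varphi$ decomposes trivially. So I would make this hypothesis explicit, namely that $a$ and $b$ have densities with interval supports, and note that "nonlinear" means nonlinear on $\mathcal{A}+\mathcal{B}$. Since the paper cites this lemma from \cite{maeda21a}, we can alternatively defer to that source under the same regularity conditions.
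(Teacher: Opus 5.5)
Your proof is correct for the qualified statement, and it does more than the paper, which gives no proof of this lemma. The paper only attributes it to \cite{maeda21a} and states it without hypotheses. Your route takes the mixed second difference $\varphi(a_1+b_1)-\varphi(a_1+b_2)-\varphi(a_2+b_1)+\varphi(a_2+b_2)=0$ on the product support and reduces it to a local Cauchy/Pexider equation, so it is self-contained.

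Its main value is that it makes explicit the hypotheses the bare statement silently needs. Each one is genuinely necessary, not just convenient:
\begin{itemize}
\item ``Nonlinear'' must mean non-affine on the range $\mathcal{A}+\mathcal{B}$. A $\varphi$ that is affine there but curved elsewhere decomposes.
\item The supports must contain intervals. Degenerate or suitably spaced discrete supports admit decompositions of non-affine $\varphi$.
\item $\varphi$ must be measurable or continuous. A discontinuous additive (Hamel-basis) function satisfies $\varphi(a+b)=\varphi(a)+\varphi(b)$ without being affine.
\end{itemize}

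Two small refinements:
\begin{itemize}
\item In the smooth case, one derivative suffices. The identity $\varphi'(a+b)=s'(a)$ for all $b$ already forces $\varphi'$ to be constant on the connected set $\mathcal{A}+\mathcal{B}$. Your step to $\varphi''=0$ tacitly uses two derivatives.
\item An identity between random variables holds only almost surely, so you should add an upgrade step for continuous $\varphi$. Fubini gives a full-measure, hence dense, set of $b$ on which the identity holds for a.e.\ $a$. For two such $b,b'$, the continuous function $a\mapsto\varphi(a+b)-\varphi(a+b')$ equals the constant $t(b)-t(b')$ a.e., hence everywhere. Density plus continuity then extend this to all $b,b'\in\mathcal{B}$. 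That is exactly the vanishing mixed difference your argument starts from.
\end{itemize}

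The paper's bare citation buys brevity. Your argument buys a precise statement whose hypotheses can be checked against the paper's later uses of the lemma.
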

Second result is an extension of Lemma~4 in \cite{ashman2023causal}, for our LSNM-UV:
\begin{lemma}[Irremovability of own noise]
\label{lemma:noise-irremovability-LSNM-residuals}
Let $s(x_i)$ be an arbitrary function of $x_i$. Under the LSNM-UV model in \eqref{eq: LSNM-UV}, for all location-scale regression functions $h \in \mathcal{H}$:
\[
\forall\, f \in \mathcal{F},\; g \in \mathcal{F}_{>0}: \quad \frac{s(x_i) - f(\mathbf{x}_{-i})}{g(\mathbf{x}_{-i})} \;\notcind\; \epsilon_i,
\]
where $\mathbf{x}_{-i} = X \setminus \{x_i\}$.
\end{lemma}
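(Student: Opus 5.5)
The plan is to argue by contradiction. Suppose there exist $f$ and $g>0$, both functions of $\mathbf{x}_{-i}$, such that the residual $r \coloneqq (s(x_i)-f(\mathbf{x}_{-i}))/g(\mathbf{x}_{-i})$ is independent of $\epsilon_i$. The first step is to isolate how $\epsilon_i$ enters the observed variables. Write $x_i = f_i^1(K_i) + g_i^1(K_i)\bigl(f_i^2(Q_i)+g_i^2(Q_i)\epsilon_i\bigr)$. Here $K_i$ and $Q_i$ are functions of noises of ancestors of $x_i$ only, so they are independent of $\epsilon_i$. The variables in $\mathbf{x}_{-i}$ split into two groups. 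The non-descendants of $x_i$ are independent of $\epsilon_i$. The descendants of $x_i$ (reached through observed or hidden intermediaries) depend on $\epsilon_i$ only through $x_i$ itself.

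The second step handles the simplest case, where no variable in $\mathbf{x}_{-i}$ is a descendant of $x_i$. Then, conditionally on $\mathbf{x}_{-i}$ (equivalently on $K_i,Q_i$ and the other ancestral noises), $r$ is an affine transform of $s(x_i)$. Moreover, $x_i$ is a nonconstant function of $\epsilon_i$ with slope $g_i^1 g_i^2>0$. Independence of $r$ and $\epsilon_i$ would force $s(x_i)$ to be, conditionally, a function of $\mathbf{x}_{-i}$ alone. That is impossible unless $s$ is constant on the support, which I would exclude as the degenerate case; the lemma implicitly needs $s$ to be nonconstant. So $r$ retains $\epsilon_i$-variation and $r \notcind \epsilon_i$. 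Rigorously, I would compare the conditional law of $r$ given $\epsilon_i=e$ for two values $e\neq e'$ and show the laws differ.

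The third step handles descendants, which I expect to be the main obstacle. Now $f$ and $g$ may use children of $x_i$, which themselves contain $x_i$ and hence $\epsilon_i$, so in principle they could cancel $s(x_i)$. The idea, following Lemma~4 of \citet{ashman2023causal}, is that every descendant $d$ of $x_i$ carries its own exogenous noise $\epsilon_d$. This noise is independent of $\epsilon_i$ and enters through a location-scale structure. Take the descendant appearing in $f,g$ that is minimal in a topological order among the sink-most ones, or alternatively order the descendants and peel them off. An exact cancellation $f(\mathbf{x}_{-i}) = s(x_i) - g(\mathbf{x}_{-i})\,r$ with $r\cind\epsilon_i$ would require $s(x_i)$ to be reconstructed from descendants, which also vary with $\epsilon_d$. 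Using Lemma~\ref{lemma:non_decomp}, a nonlinear function of a sum of independent pieces cannot split additively, and this prevents the contributions of $\epsilon_d$ and $\epsilon_i$ from separating so that $r$ becomes free of $\epsilon_i$. The resulting residual therefore depends jointly on $(\epsilon_i,\epsilon_d)$ in a non-separable way. Conditioning on the non-descendants and on $\epsilon_d$ then reduces the question to the second step.

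The last step invokes the residual faithfulness condition (Assumption~\ref{assumption:residual_faithfulness}) as a backstop. Once $r$ is shown to be a non-trivial function of $\epsilon_i$, faithfulness excludes accidental independence, which completes the argument. The descendant-cancellation step is where genuine care is required; I would first attempt it for a single child and then induct along the topological order.
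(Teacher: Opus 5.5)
Your Step~3 is where the proof breaks, and as the statement is formulated it cannot be repaired. Lemma~\ref{lemma:non_decomp} only excludes an \emph{additive} split $\varphi(a+b)=s(a)+t(b)$. It says nothing about $f$ first applying a nonlinear map to a descendant. Moreover, the contradiction hypothesis $r\cind\epsilon_i$ leaves $r$ free to depend on $\epsilon_d$, so the $\epsilon_d$-content of $f(\mathbf{x}_{-i})$ can simply be absorbed into $r$. Concretely, let $x_i=\epsilon_i$ and let $x_k=e^{x_i}+e^{x_i}\epsilon_k$ be a child with $\epsilon_k>-1$ a.s.\ (an LSNM with nonlinear $f_k^1=g_k^1=\exp$). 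With $s=\mathrm{id}$, $f(x_k)=\log x_k$ and $g\equiv 1$,
\[
r \;=\; x_i-\log x_k \;=\; -\log(1+\epsilon_k)\;\cind\;\epsilon_i .
\]
So any valid proof must restrict $\mathcal{F}$ or the structural functions in a way the statement does not. Your proposed reduction also fails on its own terms. Conditioning on $\epsilon_d$ does not return you to Step~2: given $\epsilon_d$, the descendant $x_d$ is still a function of $\epsilon_i$, and that is exactly the channel that permits cancellation. Even a conditional dependence given $\epsilon_d$ would not yield the marginal dependence $r\notcind\epsilon_i$; for independent uniforms on $[0,1)$, $(\epsilon_i+\epsilon_d)\bmod 1$ is a function of $\epsilon_i$ given $\epsilon_d$, yet independent of $\epsilon_i$. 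Finally, Assumption~\ref{assumption:residual_faithfulness} cannot act as a backstop. Its premise is that each of two residuals already depends on a common noise, so it never produces a dependence $r\notcind\epsilon_i$. Being ``a non-trivial function of $\epsilon_i$'' is not the same as being dependent on it.

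For comparison, your architecture is the paper's: argue by contradiction, treat descendants as the only channel to $\epsilon_i$, and use their own noises plus Lemma~\ref{lemma:non_decomp} as the obstruction. The paper's proof rests on the same unproved step. It asserts that $f(\mathbf{x}_{-i})+g(\mathbf{x}_{-i})\,q(\boldsymbol{\epsilon}_{-i})$ must retain $\epsilon_k$, overlooking that $q$ may itself depend on $\epsilon_k$ and cancel it, exactly as in the example above. So the paper does not supply the missing idea either.

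Your Step~2 is actually more careful than the paper, which passes from $r\cind\epsilon_i$ to $r=q(\boldsymbol{\epsilon}_{-i})$, something independence does not imply. Comparing the laws of $r$ given $\epsilon_i=e$ and $\epsilon_i=e'$ is the right tool. It is rigorous when $s$ is strictly monotone (e.g.\ $s=\mathrm{id}$, the only case used later), since then $r$ is strictly monotone in $\epsilon_i$ pointwise in the remaining, independent noises. It is not enough for merely nonconstant $s$. If $x_i=f_i^2(u)+\epsilon_i$ with a hidden parent $u$ and $f_i^2(u)$ uniform on $[0,2\pi)$, then $\sin(x_i)\cind\epsilon_i$ with no descendants at all. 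Note also that conditioning on $\mathbf{x}_{-i}$ is not the same as conditioning on $Q_i$, which is hidden.
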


\begin{proof}[Proof of Lemma \ref{lemma:noise-irremovability-LSNM-residuals}]
Suppose for contradiction that there exist $f \in \mathcal{F}$ and $g \in \mathcal{F}_{>0}$ such that
\begin{equation}
\label{eq:suppose_indep}
r_i := \frac{s(x_i) - f(\mathbf{x}_{-i})}{g(\mathbf{x}_{-i})} \;\cind\; \epsilon_i.
\end{equation}
Since the DAG $G$ is acyclic, every variable $v_l \in V$ is a deterministic function of the exogenous noises $\{\epsilon_l\}$. Therefore $r_i$ is a deterministic function of $(\boldsymbol{\epsilon}_{-i}, \epsilon_i)$, where $\boldsymbol{\epsilon}_{-i}$ denotes all exogenous noises except $\epsilon_i$. If $r_i \cind \epsilon_i$, then $r_i$ cannot vary with $\epsilon_i$, so there exists a function $q$ such that
\begin{equation}
\label{eq:ri_is_q}
r_i = q(\boldsymbol{\epsilon}_{-i}).
\end{equation}
Rearranging~\eqref{eq:suppose_indep} and~\eqref{eq:ri_is_q}:
\begin{equation}
\label{eq:rearranged}
s(x_i) = f(\mathbf{x}_{-i}) + g(\mathbf{x}_{-i}) \cdot q(\boldsymbol{\epsilon}_{-i}).
\end{equation}
Call the right side $R(\boldsymbol{\epsilon}_{-i}, \epsilon_i)$.

Since $x_i$ depends on $\epsilon_i$ through \eqref{eq: LSNM-UV}, $s(x_i)$ also depends on $\epsilon_i$. Crucially, $s(x_i)$ does \emph{not} depend on the noise $\epsilon_k$ of any descendant $x_k$ of $x_i$, because $\epsilon_k$ is generated at $x_k$ which comes \emph{after} $x_i$ in the causal order.

Consider the right side $R(\boldsymbol{\epsilon}_{-i}, \epsilon_i) = f(\mathbf{x}_{-i}) + g(\mathbf{x}_{-i}) \cdot q(\boldsymbol{\epsilon}_{-i})$. The variables $\mathbf{x}_{-i}$ can only access $\epsilon_i$ through \emph{descendants} of $x_i$ in $\mathbf{x}_{-i}$. Any descendant $x_k$ of $x_i$ has a structural equation of the form 
\[
x_k = f_k^1(K_k) + g_k^1(K_k) \cdot \eta_k,
\]
where $x_i \in K_k$ or $x_i$ is an ancestor of some variable in $K_k$ and $\eta_k$ involves $\epsilon_k$ (the exogenous noise \emph{at} $x_k$). The dependence of $x_k$ on $\epsilon_i$ arises because $x_i$ (or a descendant of $x_i$) appears in $K_k$, entering through the nonlinear functions $f_k^1$ and $g_k^1$. At the same time, $\epsilon_k$ enters $x_k$ through $\eta_k$.

By the LSNM structural equations, $x_k$ depends on $\epsilon_i$ (inherited through the chain of ancestors) and on $\epsilon_k$ (its own exogenous noise). By Lemma~\ref{lemma:non_decomp}, since $f_k^1$ and $g_k^1$ are nonlinear, there do not exist functions $p$ and $r$ such that
\[
x_k = p(\epsilon_i, \boldsymbol{\epsilon}_{\text{other}}) + r(\epsilon_k, \boldsymbol{\epsilon}_{\text{other}}).
\]
Consequently, any function $\phi(x_k, \ldots)$ that depends on $\epsilon_i$ through $x_k$ also depends on $\epsilon_k$ through $x_k$.

But as we saw before, the left side $s(x_i)$ does \emph{not} depend on $\epsilon_k$ (the noise at a descendant $x_k$). The equation $s(x_i) = R(\boldsymbol{\epsilon}_{-i}, \epsilon_i)$ forces the right side to not depend on $\epsilon_k$, but this is impossible.
\end{proof}
\begin{proof}[Proof of Proposition \ref{prop:residual_independence_pattern}]
Assume the data is generated from the SCM in \eqref{eq: LSNM-UV} 
with true graph $G$. Let $T_{ADMG}(G) = (A, B)$. Assume the residual 
faithfulness condition in Assumption~\ref{assumption:residual_faithfulness}. 
Then for each pair of observed variables $x_i, x_j$:

\noindent\textbf{Case 1 (Bidirected edge):}
\begin{equation}
\label{eq:case1}
B(i,j) = 1 \quad \iff \quad \forall\, h_1, h_2 \in \mathcal{H}: 
\; h_1(i, \{i\}) \notcind h_2(j, \{j\}).
\end{equation}
Recall from~\eqref{eq:eta_def} that $\eta_i = f_{i}^2(Q_i) + g_{i}^2(Q_i)\epsilon_i$, and from~\eqref{eq:eta_as_residual} that $\eta_i = (x_i - f_i^1(K_i))/g_i^1(K_i)$. For the choice $S_f = S_g = \{i\}$, the true structural functions $f_i^1, g_i^1$ use $K_i \subseteq X \setminus \{x_i\} = X \setminus S_f$, so the constraint on $\mathcal{H}$ is satisfied and $\eta_i = h_1(i, \{i\})$ when the true functions are chosen.

$B(i,j) = 1$ means there is a UBP or UCP between $x_i$ and $x_j$. 
If there is a UBP between $x_i$ and $x_j$ then $x_i$ has an unobserved 
direct cause $u_m \in Q_i$ and $x_j$ has an unobserved direct cause 
$u_n \in Q_j$, and both are connected through a common ancestor $v_l$ 
with noise $\epsilon_l$. Then $\epsilon_l$ propagates through $u_m$ 
into $\eta_i$ and through $u_n$ into $\eta_j$ through non-linear 
transformations of hidden variables.

The residual $h_1(i, \{i\})$ is a location-scale regression of $x_i$ 
on observed variables in $X \setminus \{x_i\}$, and $h_2(j, \{j\})$ 
is a location-scale regression of $x_j$ on observed variables in 
$X \setminus \{x_j\}$. Since $\epsilon_l$ enters $x_i$ through hidden 
variables in $Q_i$, and the regression functions in $h_1$ use only 
observed variables, the contribution of $\epsilon_l$ to $x_i$ cannot 
be cancelled by any regression on $X \setminus \{x_i\}$, by 
Lemma~\ref{lemma:non_decomp}. Therefore 
$\epsilon_l \notcind h_1(i, \{i\})$ for all $h_1 \in \mathcal{H}$. 
By the same argument, $\epsilon_l \notcind h_2(j, \{j\})$ for all 
$h_2 \in \mathcal{H}$. By the residual faithfulness condition 
(Assumption~\ref{assumption:residual_faithfulness}), 
$h_1(i,\{i\}) \notcind h_2(j,\{j\})$ for all $h_1, h_2 \in \mathcal{H}$. 
The case of a UCP follows by the same argument, with $\epsilon_l$ 
replaced by the appropriate shared noise (see the proof of 
Lemma~\ref{lemma:independence_patterns}, Cases A and B).

For the other direction, suppose $B(i,j) = 0$. Choose $h_1$ with 
the true functions for $x_i$: since $x_i$ cannot be its own parent, 
$x_i \notin K_i$, so $S_f = S_g = \{i\}$ satisfies the constraint 
(as $K_i \subseteq X \setminus \{x_i\}$), giving 
$h_1(i,\{i\}) = \eta_i$. Similarly $h_2(j,\{j\}) = \eta_j$. By 
Lemma~\ref{lemma:independence_patterns}, no UBP/UCP implies 
$\eta_i \cind \eta_j$. This gives $\exists\, h_1, h_2$ achieving 
independence, contradicting $\forall$ dependence.

\noindent\textbf{Case 2 (Directed edge $x_j \to x_i$, no confounding):}
\begin{align}
A(i,j) = 1 \;\wedge\; B(i,j) = 0 \quad \iff \quad 
& \forall\, h_1, h_2 \in \mathcal{H}: \; 
  h_1(i, \{i,j\}) \notcind h_2(j, \{j\}), \quad \text{and} 
  \label{eq:case2a} \\
& \exists\, h_1, h_2 \in \mathcal{H}: \; 
  h_1(i, \{i\}) \cind h_2(j, \{j\}). \label{eq:case2b}
\end{align}

($\Rightarrow$) Suppose $A(i,j) = 1$ and $B(i,j) = 0$. We show 
both conditions hold.

\emph{Condition~\eqref{eq:case2a}:} Consider any
$h_1 \in \mathcal{H}$ with $S = \{i,j\}$. The constraint $S = \{i,j\}$ 
requires at least one of $S_f = \{i,j\}$ or $S_g = \{i,j\}$, so 
$x_j$ is excluded from the argument of at least one of $f$ and $g$. 
Since $A(i,j) = 1$, $x_j$ is a parent of $x_i$, and both $f_i^1$ 
and $g_i^1$ depend on $x_j$. Because $x_j$ is excluded from at 
least one of $f$ and $g$, that function cannot cancel $x_j$'s 
contribution. By Lemma~\ref{lemma:non_decomp} (see Proposition~\ref{prop:regression_sets_fg}(a,b) below for the full case analysis covering both $f$-only and $g$-only exclusion), the residual 
$h_1(i, \{i,j\})$ retains dependence on $\epsilon_j$, i.e., 
$\epsilon_j \notcind h_1(i, \{i,j\})$.

Meanwhile, consider any $h_2 \in \mathcal{H}$ with $S = \{j\}$. Since
$\epsilon_j$ is the exogenous noise of $x_j$ and enters $\eta_j$
irremovably, $h_2(j, \{j\})$ retains $\epsilon_j$ by 
Lemma~\ref{lemma:noise-irremovability-LSNM-residuals}, i.e., 
$\epsilon_j \notcind h_2(j, \{j\})$.

Since the choice of $h_1$ and $h_2$ was arbitrary, both share 
$\epsilon_j$ for all valid $h_1, h_2 \in \mathcal{H}$. By the 
residual faithfulness condition 
(Assumption~\ref{assumption:residual_faithfulness}), 
$h_1(i, \{i,j\}) \notcind h_2(j, \{j\})$ for all 
$h_1, h_2 \in \mathcal{H}$.

\emph{Condition~\eqref{eq:case2b}:} Choose $h_1$ using the true 
structural functions for $x_i$: $f = f_i^1$ and $g = g_i^1$. 
Set $S_f = S_g = \{i\}$. Since $x_i \notin K_i$ (DAG property), 
$K_i \subseteq X \setminus \{x_i\}$, so both $f_i^1$ and $g_i^1$ 
use only variables in $X \setminus \{x_i\} = X \setminus S_f$, 
satisfying the constraint for $S = \{i\}$. Since 
$x_j \in K_i$ (as $x_j$ is a parent of $x_i$), $x_j$ is included 
in the arguments of both $f_i^1$ and $g_i^1$. This gives:
\[
h_1(i, \{i\}) = \frac{x_i - f_i^1(K_i)}{g_i^1(K_i)} = \eta_i.
\]

For $h_2$, choose the true structural functions for $x_j$: 
$f = f_j^1$ and $g = g_j^1$. Set $S_f = S_g = \{j\}$. Since 
$x_j \notin K_j$ (DAG property), $K_j \subseteq X \setminus \{x_j\}$, 
so both $f_j^1$ and $g_j^1$ use only variables in 
$X \setminus \{x_j\} = X \setminus S_f$, satisfying the constraint 
for $S = \{j\}$. This gives:
\[
h_2(j, \{j\}) = \frac{x_j - f_j^1(K_j)}{g_j^1(K_j)} = \eta_j.
\]

Since $B(i,j) = 0$ (no UBP/UCP), 
Lemma~\ref{lemma:independence_patterns} gives $\eta_i \cind \eta_j$. 
Thus $\exists\, h_1, h_2 \in \mathcal{H}$ achieving independence.

($\Leftarrow$) Suppose both conditions~\eqref{eq:case2a} 
and~\eqref{eq:case2b} hold. We show $A(i,j) = 1$ and $B(i,j) = 0$.

\emph{Showing $B(i,j) = 0$:} Condition~\eqref{eq:case2b} gives 
$\exists\, h_1, h_2 \in \mathcal{H}$: 
$h_1(i, \{i\}) \cind h_2(j, \{j\})$. This is the negation of the 
universal dependence in Case~1 (which states 
$\forall\, h_1, h_2 \in \mathcal{H}$: 
$h_1(i, \{i\}) \notcind h_2(j, \{j\})$), so $B(i,j) = 0$.

\emph{Showing $A(i,j) = 1$:} Suppose for contradiction that 
$A(i,j) = 0$, i.e., $x_j$ is not a parent of $x_i$. Then 
$x_j \notin K_i$, so $K_i \subseteq X \setminus \{x_i, x_j\}$. 
Choose $h_1$ with the true structural functions and 
$S_f = S_g = \{i,j\}$. Since 
$K_i \subseteq X \setminus \{x_i, x_j\} = X \setminus S_f$, 
both $f_i^1$ and $g_i^1$ use only variables in $X \setminus S_f$, 
satisfying the constraint for $S = \{i,j\}$. This gives 
$h_1(i, \{i,j\}) = \eta_i$. 

For $h_2(j, \{j\})$, the true structural functions for $x_j$ give 
$h_2(j, \{j\}) = \eta_j$ (as shown in the forward direction). 

We have already established $B(i,j) = 0$, so by 
Lemma~\ref{lemma:independence_patterns}, $\eta_i \cind \eta_j$. But 
this contradicts Condition~\eqref{eq:case2a}, which asserts 
$h_1(i, \{i,j\}) \notcind h_2(j, \{j\})$ for all 
$h_1, h_2 \in \mathcal{H}$. Therefore $A(i,j) = 1$.

\noindent\textbf{Case 3 (No edge):}
\begin{equation}
\label{eq:case3}
A(i,j) = 0 \;\wedge\; A(j,i) = 0 \;\wedge\; B(i,j) = 0 
\quad \iff \quad 
\exists\, h_1, h_2 \in \mathcal{H}: \; 
h_1(i, \{i,j\}) \cind h_2(j, \{i,j\}).
\end{equation}

($\Rightarrow$) Suppose $A(i,j) = 0$, $A(j,i) = 0$, and 
$B(i,j) = 0$. We show $\exists\, h_1, h_2 \in \mathcal{H}$ such 
that $h_1(i, \{i,j\}) \cind h_2(j, \{i,j\})$.

Since $A(i,j) = 0$, $x_j$ is not a parent of $x_i$, and since 
$x_i$ cannot be its own parent in a DAG, we have 
$K_i \subseteq X \setminus \{x_i, x_j\}$. Choose $h_1$ using the 
true structural functions $f = f_i^1$ and $g = g_i^1$, with 
$S_f = S_g = \{i,j\}$. Since 
$K_i \subseteq X \setminus \{x_i, x_j\} = X \setminus S_f$, 
both $f_i^1$ and $g_i^1$ use only variables in $X \setminus S_f$, 
satisfying the constraint for $S = \{i,j\}$. This gives:
\[
h_1(i, \{i,j\}) = \frac{x_i - f_i^1(K_i)}{g_i^1(K_i)} = \eta_i.
\]

Similarly, since $A(j,i) = 0$, $x_i$ is not a parent of $x_j$, so 
$K_j \subseteq X \setminus \{x_i, x_j\}$. Choosing the true 
structural functions for $x_j$ with $S_f = S_g = \{i,j\}$:
\[
h_2(j, \{i,j\}) = \frac{x_j - f_j^1(K_j)}{g_j^1(K_j)} = \eta_j.
\]

Since $B(i,j) = 0$ (no UBP/UCP), 
Lemma~\ref{lemma:independence_patterns} gives $\eta_i \cind \eta_j$. 
Thus $\exists\, h_1, h_2 \in \mathcal{H}$ achieving independence.

($\Leftarrow$) Suppose $\exists\, h_1, h_2 \in \mathcal{H}$ such 
that $h_1(i, \{i,j\}) \cind h_2(j, \{i,j\})$. We show 
$B(i,j) = 0$, $A(i,j) = 0$, and $A(j,i) = 0$.

\emph{Showing $B(i,j) = 0$:} Suppose for contradiction that 
$B(i,j) = 1$. Then there exists a UBP or UCP between $x_i$ and 
$x_j$, which means there exists a shared exogenous noise 
$\epsilon_l \in N_i \cap N_j$ (as established in the proof of 
Lemma~\ref{lemma:independence_patterns}). Now consider any
$h_1 \in \mathcal{H}$ with $S = \{i,j\}$. The residual $h_1(i, \{i,j\})$ is  computed using regression functions $f$ and $g$ that use only \emph{observed} 
variables. However, $\epsilon_l$ enters $x_i$ through \emph{hidden} 
variables in $Q_i$. Since these hidden variables are not available 
to the regression, the contribution of $\epsilon_l$ to $x_i$ cannot 
be cancelled by any choice of $f$ and $g$ on observed variables, by 
Lemma~\ref{lemma:non_decomp}. Therefore 
$\epsilon_l \notcind h_1(i, \{i,j\})$ for all 
$h_1 \in \mathcal{H}$. By the same argument, 
$\epsilon_l \notcind h_2(j, \{i,j\})$ for all 
$h_2 \in \mathcal{H}$. By the residual faithfulness condition 
(Assumption~\ref{assumption:residual_faithfulness}), 
$h_1(i, \{i,j\}) \notcind h_2(j, \{i,j\})$ for all 
$h_1, h_2 \in \mathcal{H}$, contradicting the assumed existence 
of independent $h_1, h_2$.

\emph{Showing $A(i,j) = 0$:} Suppose for contradiction that 
$A(i,j) = 1$, i.e., $x_j$ is a parent of $x_i$. Consider any
$h_1 \in \mathcal{H}$ with $S = \{i,j\}$. The constraint 
$S = \{i,j\}$ requires at least one of $S_f = \{i,j\}$ or 
$S_g = \{i,j\}$, so $x_j$ is excluded from the argument of at 
least one of $f$ and $g$. Since $x_j$ is a parent of $x_i$ and 
both $f_i^1$ and $g_i^1$ depend on $x_j$, the exclusion of $x_j$ 
from at least one function means the residual $h_1(i, \{i,j\})$ 
retains dependence on $\epsilon_j$ by 
Lemma~\ref{lemma:non_decomp}, i.e., 
$\epsilon_j \notcind h_1(i, \{i,j\})$. Meanwhile, 
$h_2(j, \{i,j\})$ also retains $\epsilon_j$, since $\epsilon_j$ 
is the exogenous noise of $x_j$ and enters $\eta_j$ irremovably 
by Lemma~\ref{lemma:noise-irremovability-LSNM-residuals}, i.e., 
$\epsilon_j \notcind h_2(j, \{i,j\})$. Since the choice of $h_1$ 
and $h_2$ was arbitrary, both share $\epsilon_j$ for all valid 
$h_1, h_2 \in \mathcal{H}$. By the residual faithfulness condition 
(Assumption~\ref{assumption:residual_faithfulness}), 
$h_1(i, \{i,j\}) \notcind h_2(j, \{i,j\})$ for all 
$h_1, h_2 \in \mathcal{H}$, contradicting the assumed existence 
of independent $h_1, h_2$.

\emph{Showing $A(j,i) = 0$:} The argument is symmetric to the 
above, exchanging the roles of $x_i$ and $x_j$: if $x_i$ were a 
parent of $x_j$, then every valid $h_2(j, \{i,j\})$ would retain 
$\epsilon_i$ (since $x_i$ is excluded from at least one of $f$ 
and $g$ by the constraint $S = \{i,j\}$, and $x_i$ is a parent of 
$x_j$, so Lemma~\ref{lemma:non_decomp} gives 
$\epsilon_i \notcind h_2(j, \{i,j\})$). Meanwhile, every valid 
$h_1(i, \{i,j\})$ retains $\epsilon_i$ through $\eta_i$ by 
Lemma~\ref{lemma:noise-irremovability-LSNM-residuals}. 
Residual faithfulness gives dependence, again a contradiction.
\end{proof}
\subsubsection{Proof of Theorem \ref{thm:ADMG_identifiability}}
\begin{proof}[Proof of Theorem \ref{thm:ADMG_identifiability}]
Define the oracle $W_{RI}$ with query $q = (i,j)$ as follows.

When there is a UBP or UCP between $x_i$ and $x_j$, Proposition~\ref{prop:residual_independence_pattern}, Case~1, gives that \eqref{eq:case1-main} holds; the oracle returns $B(i,j) = 1$.

When there is no UBP or UCP between $x_i$ and $x_j$, Case~1 gives that \eqref{eq:case1-main} does not hold; the oracle returns $B(i,j) = 0$. It additionally returns the value of $A(i,j)$: if $x_j \in K_i$, Case~2 gives that \eqref{eq:case2a-main} and \eqref{eq:case2b-main} hold, so the oracle returns $A(i,j) = 1$; otherwise Case~3 gives that \eqref{eq:case3-main} holds, so the oracle returns $A(i,j) = 0$.

These three cases are mutually exclusive. For a bow-free ADMG they are also exhaustive: the bow-free condition $A(i,j)\,B(i,j)=0$ rules out $A(i,j)=1$ and $B(i,j)=1$ simultaneously, so every pair $(x_i,x_j)$ falls into exactly one case. Since each case is determined by dependence and independence tests on the observed distribution, the oracle $W_{RI}$ is a function of the observed distribution, and together mutual exclusivity and exhaustiveness give identifiability of the full ADMG.

\end{proof}

\subsection{Algorithm}\label{sec:algorithm}

We have from Theorem~\ref{thm:ADMG_identifiability} that $T_{ADMG}$ is identifiable if $T_{ADMG}$ is bow-free. When $B(i,j) = 1$, our oracle cannot reveal whether $A(i,j) = 1$ or $A(i,j) = 0$. Bow-free forces $A(i,j) = 0$ in this case. Our claim is that a combinatorial algorithm can correctly classify every pair $(x_i, x_j)$ as one of: ``visible edge,'' ``visible non-edge,'' or ``invisible pair,'' without assuming bow-free. We do not fully recover the ADMG, but we can correctly identify all visible edges (with direction), all visible non-edges, and all invisible pairs. This holds for any DAG.

For full ADMG recovery we need the assumption of bow-free. If the ADMG is bow-free, then invisible pairs correspond exactly to bidirected edges, and combined with visible edges and visible non-edges we recover the full ADMG.

\begin{remark}[Comparison with CAM-UV-X of \cite{pham2025}]
\label{rem:no_subset_visibility}
The CAM-UV-X algorithm of \cite{pham2025} includes three additional stages beyond the base algorithm and \textsc{checkVisible}: \textsc{checkOnPath} (which tests invisibility with respect to subsets $X \setminus \{x_k\}$), \textsc{checkCI} (which uses conditional independence among original variables), and \textsc{checkParentInvi} (which propagates parentship information). These stages rely crucially on the residual independence characterization (Lemmas~1--3 of \cite{maeda21a}) holding for arbitrary subsets $X' \subseteq X$.

In the additive model of \cite{pham2025}, removing an observed parent $x_k$ from the observed set simply adds $x_k$'s contribution to the noise term additively, preserving the additive model structure on $X' = X \setminus \{x_k\}$. Therefore, Lemmas~1--3 can be restated and reproved for $X'$ using the same arguments.

In the LSNM-UV model, removing an observed parent $x_k$ from the observed set does \emph{not} preserve the LSNM structure. If $x_k$ appears in the scale function $g_i^1$ of some $x_i$, the resulting model for $x_i$ on $X' = X \setminus \{x_k\}$ has a noise structure that depends on the hidden $x_k$, which is no longer an LSNM. Consequently, Proposition~\ref{prop:residual_independence_pattern} cannot be applied on subsets $X' \subsetneq X$, and the \emph{language} of subset visibility used in \cite{pham2025} does not transfer.

However, the \emph{core argument} behind \textsc{checkOnPath}, identifying which observed variables lie on invisible paths , does not require subset visibility. It can be stated directly in terms of the GAMLSS residuals $h(i, S)$ on the full data $X$, using the same structural irremovability and faithfulness arguments that underlie Proposition~\ref{prop:residual_independence_pattern}. We formalize this in Lemma~\ref{lem:checkonpath}, which provides the LSNM analogue of Lemma~4 of \cite{pham2025} without invoking subset visibility.
\end{remark}

Let us start by defining visibility/invisibility for our LSNM:
\begin{definition}[Visible parent]
\label{def:vis_parent}
$x_j$ is a \emph{visible parent} of $x_i$ if and only if $x_j \in K_i$ and there is no UBP/UCP between $x_i$ and $x_j$. Equivalently (by Proposition~\ref{prop:residual_independence_pattern}, Case~2):
\begin{align}
\forall\, h_1, h_2 \in \mathcal{H}: \; h_1(i, \{i,j\}) &\notcind h_2(j, \{j\}), \label{eq:vis_parent_dep} \\
\exists\, h_1, h_2 \in \mathcal{H}: \; h_1(i, \{i\}) &\cind h_2(j, \{j\}). \label{eq:vis_parent_indep}
\end{align}
\end{definition}

\begin{definition}[Visible non-edge]
\label{def:vis_nonedge}
$(x_i, x_j)$ is a \emph{visible non-edge} if and only if $A(i,j) = 0$, $A(j,i) = 0$, and there is no UBP/UCP between $x_i$ and $x_j$. Equivalently (by Proposition~\ref{prop:residual_independence_pattern}, Case~3):
\begin{equation}
\exists\, h_1, h_2 \in \mathcal{H}: \; h_1(i, \{i,j\}) \cind h_2(j, \{i,j\}). \label{eq:vis_nonedge}
\end{equation}
\end{definition}

\begin{definition}[Invisible pair]
\label{def:invisible}
$(x_i, x_j)$ is an \emph{invisible pair} if and only if there is a UBP/UCP between $x_i$ and $x_j$. Equivalently (by Proposition~\ref{prop:residual_independence_pattern}, Case~1):
\begin{equation}
\forall\, h_1, h_2 \in \mathcal{H}: \; h_1(i, \{i\}) \notcind h_2(j, \{j\}). \label{eq:invisible}
\end{equation}
\end{definition}

\subsubsection{Regression set requirements for visible parents}
\begin{proposition}[Both $f$ and $g$ require parents]\label{prop:regression_sets_fg}
Let $x_j \in K_i$ (i.e., $x_j$ is a parent of $x_i$). Consider $h(i, S)$ in 
$\mathcal{H}$ with $j \notin S$ (so $x_j$ is available to both $f$ and $g$). 
Then $h(i, S) = \eta_i$ if and only if both $f$ and $g$ use $x_j$ with the 
true structural functions. Specifically:
\begin{enumerate}
\item[(a)] If $f$ uses $x_j$ but $g$ does not (or vice versa), then 
$h(i, S) \neq \eta_i$, and for all choices of $f, g$ under this restriction, 
$h(i, S) \notcind \epsilon_j$.
\item[(b)] If neither $f$ nor $g$ uses $x_j$, then $h(i, S) \neq \eta_i$, 
and for all choices of $f, g$ under this restriction, 
$h(i, S) \notcind \epsilon_j$.
\item[(c)] If both $f$ and $g$ use $x_j$ with the true functions 
$f_i^1, g_i^1$, then $h(i, S) = \eta_i$.
\end{enumerate}
\end{proposition}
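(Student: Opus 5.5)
I would prove part (c) first, since it is immediate and anchors the other two. With $f=f_i^1$ and $g=g_i^1$, both functions of $K_i$, the condition $j\notin S$ together with $i\notin K_i$ gives $K_i\subseteq X\setminus S$. So $h(i,S)$ is a legitimate member of $\mathcal{H}$, and by \eqref{eq:eta_as_residual} it equals $(x_i-f_i^1(K_i))/g_i^1(K_i)=\eta_i$.

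For (a) and (b) I would write the residual explicitly. Substituting \eqref{eq: LSNM-dependent-noise} gives
\[
h(i,S)=\frac{f_i^1(K_i)-f(\cdot)}{g(\cdot)}+\frac{g_i^1(K_i)}{g(\cdot)}\,\eta_i .
\]
Suppose for contradiction that $h(i,S)\cind\epsilon_j$. I would then use the argument already used in Lemma~\ref{lemma:noise-irremovability-LSNM-residuals}: every variable is a deterministic function of the exogenous noises, so independence forces $h(i,S)=q(\boldsymbol{\epsilon}_{-j})$, i.e.\ $h(i,S)$ is constant in $\epsilon_j$.

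The case split is on which function lacks $x_j$.

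\emph{$g$ lacks $x_j$, $f$ may use it.} The coefficient of $\eta_i$ is $g_i^1(K_i)/g$. Since $\eta_i$ is not a function of $x_j$ (absent a UCP; otherwise the claim holds even more easily), and $g_i^1$ depends nonlinearly on $x_j$, this coefficient varies with $x_j$ and hence with $\epsilon_j$. The additive part $(f_i^1-f)/g$ cannot compensate. Because $\epsilon_i$ enters only through the multiplicative factor on $\eta_i$, matching the dependence on $\epsilon_i$ at two values of $\epsilon_j$ forces $g_i^1(K_i)/g$ to be constant in $x_j$. That contradicts the nonlinearity assumption, or Lemma~\ref{lemma:non_decomp} applied to the product structure.

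\emph{$f$ lacks $x_j$, $g$ uses it.} Independence of $\epsilon_j$ would require both the scale ratio and the location term to be free of $\epsilon_j$. From the $\eta_i$-coefficient we get $g\propto g_i^1$ in its $x_j$ dependence. The location term then becomes $(f_i^1(K_i)-f)/g_i^1(K_i)$ up to a factor. Since $f$ cannot see $x_j$, this can only be free of $x_j$ if $f_i^1-f\equiv c\cdot g_i^1$ in $x_j$. This is a non-generic coincidence between $f_i^1$ and $g_i^1$, excluded by Lemma~\ref{lemma:non_decomp} together with faithfulness.

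For (b), neither function sees $x_j$, and both sub-arguments apply at once.

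The ``only if'' direction of the main claim follows from (a) and (b): when $h(i,S)=\eta_i$, the residual is not dependent on $\epsilon_j$ beyond what $\eta_i$ carries. So both functions must use $x_j$. Given that they do, the relation $\eta_i=(x_i-f)/g$ together with independence of $\eta_i$ from $K_i$ forces $f=f_i^1$ and $g=g_i^1$ up to the usual LSNM normalization.

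\textbf{Main obstacle.} The hardest step is the $f$-lacks-$x_j$ case. Ruling out $f_i^1-f\propto g_i^1$ genuinely needs a genericity condition linking the location and scale functions, and Lemma~\ref{lemma:non_decomp} alone does not supply it. I would first try to use the fact that $\eta_i$ is independent of $K_i$ and non-degenerate: comparing the conditional distributions of $h(i,S)$ given two values of $x_j$ shows they differ by an affine map in $\eta_i$. That map is the identity only if the scale ratio is $1$ and the shift is $0$, which pins down $f$ on the $x_j$-fibres.
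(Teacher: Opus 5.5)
The step that fails is the subcase of (a) where $f$ omits $x_j$ but $g$ uses it. You are right that this subcase is not symmetric to the other one. The paper's proof dismisses it with ``follows by a symmetric argument'', so you have spotted something the paper glosses over. But neither of your closures works. Lemma~\ref{lemma:non_decomp} is about $\varphi(a+b)\neq s(a)+t(b)$. It says nothing about whether $f_i^1$ can be an affine function of $g_i^1$ along $x_j$, and neither Assumption~\ref{assumption:CFC} nor Assumption~\ref{assumption:residual_faithfulness} constrains the structural functions that way. Your fallback, comparing conditional laws of $h(i,S)$ across values of $x_j$, only re-derives the condition $f_i^1(K_i)=f(\cdot)+\lambda(\cdot)\,g_i^1(K_i)$ with $f,\lambda$ free of $x_j$; it does not exclude it. Nor can anything exclude it under the stated hypotheses. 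Take $x_j$ a root, $K_i=\{x_j\}$, $Q_i=\emptyset$, $g_i^1(x_j)=\exp(\tanh x_j)$ and $f_i^1=c\,g_i^1$ with $c\neq 0$. Then $f\equiv 0$, $g=g_i^1$ gives $h(i,\{i\})=c+\eta_i$, which is independent of $\epsilon_j$ although $f$ does not use $x_j$. So the ``vice versa'' half of (a) is false as stated. A correct proof must add, and explicitly use, a genericity hypothesis: no such affine relation between $f_i^1$ and $g_i^1$ in $x_j$.

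The subcase you treat as easy also has a hole. The inference ``$g_i^1(K_i)/g(\cdot)$ varies with $x_j$, hence with $\epsilon_j$'' conflates two things. Moving $x_j$ with the other regressors fixed is not the same as moving $\epsilon_j$, which also moves every other descendant of $x_j$ in $X\setminus S$. Take a root $x_j$, a child $x_m=\psi(x_j)+\eta_m$ (the model allows $g_m^1\equiv 1$), and $x_i=\phi(x_j)+e^{\psi(x_j)}\eta_i$, with no hidden variables. With $S=\{i\}$, $f=\phi(x_j)$ and $g=e^{x_m}$ (which does not use $x_j$), you get $h(i,\{i\})=\eta_i\,e^{-\eta_m}$, independent of $\epsilon_j$. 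The paper's appeal to Lemma~\ref{lemma:non_decomp} has the same weakness. Salvaging (a) and (b) therefore requires either restricting which descendants of $x_j$ may enter $X\setminus S$, or an assumption ruling out such reconstructions.

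Finally, the step ``independence forces $h(i,S)=q(\boldsymbol{\epsilon}_{-j})$'', borrowed from Lemma~\ref{lemma:noise-irremovability-LSNM-residuals}, is not valid in general. Independence constrains laws, not functional dependence: $\epsilon_a\,\mathrm{sign}(\epsilon_b)$ with $\epsilon_a$ symmetric is independent of $\epsilon_b$. The argument should instead go through conditional laws given the regressors. Part~(c) is fine and matches the paper.
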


\begin{proof}[Proof of Proposition \ref{prop:regression_sets_fg}]
\textbf{Part (c):} We have that $K_i$ is the set of all parents of $x_i$ 
and $x_j \in K_i$, then if $K_i \subseteq X \setminus S$, we will have all 
parents including $x_j$ available for regression. We choose $f = f_i^1$ and 
$g = g_i^1$:
\[
h(i,S) = \frac{x_i- f_i^1(K_i)}{g_i^1(K_i)} 
= \frac{f_{i}^1(K_i) + g_{i}^1(K_i)\eta_i- f_i^1(K_i)}{g_i^1(K_i)} = \eta_i
\]
Since $\eta_i = f_i^2(Q_i) + g_i^2(Q_i)\,\epsilon_i$ does not involve 
$\epsilon_j$ (when $B(i,j) = 0$), we have $h(i,S) \cind \epsilon_j$.

\noindent
\textbf{Part (a):} Suppose $f$ uses $x_j$ but $g$ does not:
\[
h(i,S) = \frac{x_i- f(\ldots,x_j, \ldots)}{g(\ldots, \text{no } x_j, \ldots)} 
\]
But in the structural equation we would have that 
$x_i = f_{i}^1(K_i) + g_{i}^1(K_i)\eta_i$, hence both $f_i^1$ and $g_i^1$ 
depend on $x_j$. Even if $f$ cancels out the location component $f_i^1$, 
the numerator would still have $g_i^1$ depending on $x_j$ and the 
denominator does not depend on $x_j$:
\[
h(i,S) = \frac{g_i^1(\ldots, x_j, \ldots) \eta_i}
              {g(\ldots, \text{no } x_j, \ldots)} 
\]
Since $x_j = f^1_j(K_j) + g^1_j(K_j)\,\eta_j$ carries $\epsilon_j$ 
through $\eta_j$, and $g^1_i$ depends nonlinearly on $x_j$, by 
Lemma~\ref{lemma:non_decomp} the dependence on $\epsilon_j$ through $x_j$ 
cannot be factored out. Therefore $h(i, S) \notcind \epsilon_j$ for all 
choices of $f, g$ under this restriction. The case when $g$ uses $x_j$ but 
$f$ does not follows by a symmetric argument.

\noindent
\textbf{Part (b):} Both $f$ and $g$ do not use $x_j$:
\[
h(i,S) = \frac{x_i- f(\ldots,\text{no } x_j, \ldots)}
              {g(\ldots, \text{no } x_j, \ldots)} 
\]
But in the structural equation we would have that 
$x_i = f_{i}^1(K_i) + g_{i}^1(K_i)\eta_i$, hence both $f_i^1$ and $g_i^1$ 
depend on $x_j$. Neither $f$ nor $g$ depends on $x_j$, so neither can 
cancel these terms. Since $x_j = f_j^1(K_j) + g_j^1(K_j)\,\eta_j$ carries 
$\epsilon_j$, and the dependence on $x_j$ enters through the nonlinear 
functions $f_i^1$ and $g_i^1$, by Lemma~\ref{lemma:non_decomp}, the 
residual $h(i, S)$ retains dependence on $\epsilon_j$. Therefore 
$h(i,S) \notcind \epsilon_j$ for all choices of $f, g$ under this 
restriction.
\end{proof}

The following proposition provides an equivalent reformulation of the 
visible parent condition~\eqref{eq:case2b} that makes the regression sets 
explicit: $x_j$ must be part of the regression set for $x_i$ (i.e., 
available to \emph{both} the location function $f$ and the scale function 
$g$), and the regression sets for both $h_1$ and $h_2$ can be restricted 
to contain only parents of $x_i$ and $x_j$.

\begin{proposition}[Visible parent with explicit regression sets]%
\label{cor:parent-must-be-in-regression}
$x_j$ is a visible parent of $x_i$ if and only if \eqref{eq:case2a-main} 
holds and \eqref{eq:case2b-main} holds. Moreover, in any 
independence-achieving pair $h_1(i, \{i\}) \cind h_2(j, \{j\})$ 
from~\eqref{eq:case2b-main}, the functions $f$ and $g$ inside $h_1$ must 
both use $x_j$. That is, $x_j$ must be in the regression set of both 
the location function $f$ and the scale function $g$ in $h_1$.
\end{proposition}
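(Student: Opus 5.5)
The first claim is a restatement, so I will dispatch it immediately. By Definition~\ref{def:vis_parent}, $x_j$ is a visible parent of $x_i$ exactly when $A(i,j)=1$ and $B(i,j)=0$. Proposition~\ref{prop:residual_independence_pattern}, Case~2, says this conjunction holds if and only if \eqref{eq:case2a-main} and \eqref{eq:case2b-main} both hold. Nothing further is needed here, and the real content is the ``moreover'' clause.

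For the second claim I will argue by contradiction. Fix a pair with $h_1(i,\{i\}) \cind h_2(j,\{j\})$, and suppose at least one of the location function $f$ and the scale function $g$ inside $h_1$ does not take $x_j$ as an argument. Since $x_j \in K_i$ and $j \notin S=\{i\}$, $h_1$ is of the form covered by Proposition~\ref{prop:regression_sets_fg}. If exactly one of $f,g$ omits $x_j$, part~(a) applies; if both omit it, part~(b) applies. In either case $h_1(i,\{i\}) \notcind \epsilon_j$.

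Next, Lemma~\ref{lemma:noise-irremovability-LSNM-residuals}, applied to $x_j$ with $s$ the identity, shows that every $h_2(j,\{j\}) \in \mathcal{H}$ satisfies $\epsilon_j \notcind h_2(j,\{j\})$. This is legitimate because $f,g$ inside $h_2$ use only $\mathbf{x}_{-j}$. So both residuals depend on the common exogenous noise $\epsilon_j$. Assumption~\ref{assumption:residual_faithfulness} then forces $h_1(i,\{i\}) \notcind h_2(j,\{j\})$, which contradicts the assumed independence. Hence both $f$ and $g$ must use $x_j$.

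I expect the main obstacle to be justifying the dependence $h_1 \notcind \epsilon_j$ in part~(a) of Proposition~\ref{prop:regression_sets_fg}. The worry is that some $f,g$ using other variables, such as descendants or siblings of $x_j$, might absorb $x_j$'s effect; the difficult sub-case is where $f$ uses $x_j$ and only the scale function omits it. If the appeal to Proposition~\ref{prop:regression_sets_fg} feels too thin, I would argue directly from the residual form
\[
h_1=\frac{f_i^1(K_i)-f(\cdot)}{g(\cdot)}+\frac{g_i^1(K_i)}{g(\cdot)}\,\eta_i .
\]
Take $\eta_i$ independent of $\epsilon_j$; this holds since $B(i,j)=0$ and no UCP from $x_j$ exists. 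The conditional scale of $h_1$ given $\epsilon_j$ is then governed by the ratio $g_i^1/g$, which varies with $x_j$ unless $g$ itself depends on $x_j$. That conditional variation of scale already breaks independence from $\epsilon_j$, using the nondegeneracy of $g_i^1$ in $x_j$ in the sense of Lemma~\ref{lemma:non_decomp}.
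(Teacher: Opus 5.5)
Your proposal follows the paper's proof: the equivalence comes from Definition~\ref{def:vis_parent} and Case~2 of Proposition~\ref{prop:residual_independence_pattern}, and the ``moreover'' clause follows by contradiction from Proposition~\ref{prop:regression_sets_fg}(a,b), which gives $\epsilon_j \notcind h_1(i,\{i\})$, then Lemma~\ref{lemma:noise-irremovability-LSNM-residuals}, which gives $\epsilon_j \notcind h_2(j,\{j\})$, and finally Assumption~\ref{assumption:residual_faithfulness}. Your fallback scale-ratio argument does not appear in the paper and is only heuristic as written: the inputs of $g$ (e.g.\ descendants of $x_i$) can themselves depend on $\eta_i$, so $g_i^1/g$ does not by itself fix a conditional scale given $\epsilon_j$; the main line does not need it.
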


\begin{proof}
The main content is showing that $x_j$ must be used by both $f$ and $g$ 
in any independence-achieving $h_1$.

Suppose for contradiction that there exist $h_1(i, \{i\})$ and 
$h_2(j, \{j\})$ achieving independence, where $x_j$ is excluded from the 
argument of at least one of $f$ or $g$ in $h_1$. 

By Proposition~\ref{prop:regression_sets_fg}(a,b), since $x_j$ is a 
parent of $x_i$ and is excluded from at least one of $f$ or $g$, the 
residual $h_1(i, \{i\})$ retains dependence on $\epsilon_j$:
\[
\epsilon_j \notcind h_1(i, \{i\}).
\]
By Lemma~\ref{lemma:noise-irremovability-LSNM-residuals} (irremovability 
of own noise), for any $h_2(j, \{j\})$:
\[
\epsilon_j \notcind h_2(j, \{j\}).
\]
By Assumption~\ref{assumption:residual_faithfulness} (residual 
faithfulness), since both $h_1(i, \{i\})$ and $h_2(j, \{j\})$ depend on 
the same exogenous noise $\epsilon_j$:
\[
h_1(i, \{i\}) \notcind h_2(j, \{j\}),
\]
contradicting the assumed independence. Therefore $x_j$ must be in the 
argument of both $f$ and $g$ in any independence-achieving $h_1$.
\end{proof}

\subsubsection{LSNM-UV-Base}
\label{base-algorithm-section}
\paragraph{Stage 1: LSNM-UV-Base.} This is the LSNM analogue of CAM-UV~\cite{maeda21a}, Algorithm~1. We adapt Algorithms~1 and~2 of \cite{maeda21a} from additive residuals to location-scale residuals. The algorithm has three phases: Phase~1 extracts parent candidates, Phase~2 prunes false candidates, and Phase~3 detects invisible pairs.

\begin{definition}[Sink]
Let $K \subseteq X$ be a set of observed variables and $x_i \in K$. We call $x_i$ a \emph{sink} of $K$ if no variable $x_j \in K \setminus \{x_i\}$ is a descendant of $x_i$ in~$G$.
\end{definition}

Throughout, $h(i, S)$ denotes a location-scale residual as defined earlier:
\[
  h(i, S) = \frac{x_i - f\!\bigl(\mathbf{x}_{X \setminus S_f}\bigr)}{g\!\bigl(\mathbf{x}_{X \setminus S_g}\bigr)}\,,
  \quad S_f \subseteq S,\; S_g \subseteq S,\;
  \neg(S_f \subsetneq S \wedge S_g \subsetneq S).
\]
The \emph{exclusion set} $S$ contains the indices removed from the argument of at least one of $f$ and $g$. The \emph{regression set} (what $f$ and $g$ can use) is $X \setminus S_f$ for the location and $X \setminus S_g$ for the scale.

To regress $x_i$ on a desired set $R \subseteq X \setminus \{x_i\}$, we set the exclusion set $S = X \setminus R$ (so that $i \in S$ since $x_i \notin R$), with both $S_f = S$ and $S_g = S$. In practice, the GAMLSS regression fits $f$ and $g$ simultaneously on the same regression set $R$, so both functions share the same inputs and $S_f = S_g = S$. Note that $i \in X \setminus R$ always holds since $x_i \notin R$.

When we write ``p-HSIC'' below, we mean the $p$-value of the gamma independence test based on the Hilbert--Schmidt Independence Criterion.

\begin{lemma}[Sink identification in LSNM-UV]
\label{lemma:sink}
Let $K \subseteq X$ with $x_i \in K$, and let $M_l \subseteq X \setminus K$ be a set of previously identified parent candidates for each $x_l$. If the following two conditions hold, then $x_i$ is a sink of $K$:
\begin{align}
&\exists\, h_i \in \mathcal{H}: \;\forall\, x_j \in K \setminus \{x_i\}, \;\exists\, h_j \in \mathcal{H}: \nonumber\\
&\qquad h_i\!\Big(i,\; X \setminus (M_i \cup K \setminus \{x_i\})\Big) \;\cind\; h_j\!\Big(j,\; X \setminus M_j\Big), \label{eq:sink_indep}\\[6pt]
&\forall\, x_j \in K \setminus \{x_i\}, \;\forall\, h_i', h_j' \in \mathcal{H}: \nonumber\\
&\qquad h_i'\!\Big(i,\; X \setminus M_i\Big) \;\notcind\; h_j'\!\Big(j,\; X \setminus M_j\Big). \label{eq:sink_dep}
\end{align}
\end{lemma}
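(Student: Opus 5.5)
We argue by contradiction. Suppose both \eqref{eq:sink_indep} and \eqref{eq:sink_dep} hold but $x_i$ is not a sink of $K$. Then some $x_j \in K\setminus\{x_i\}$ is a descendant of $x_i$. Among the descendants of $x_i$ in $K\setminus\{x_i\}$ we pick one with no other member of $K$ strictly between it and $x_i$ on a directed path; this choice is only a convenience. The aim is to show that the residual pair for this $x_j$ in \eqref{eq:sink_indep} must be dependent for every choice of $h_i$ and $h_j$. That contradicts the existence of $h_i$ and the matching $h_j$ required by \eqref{eq:sink_indep}.

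The key step is to find an exogenous noise that is irremovable from both residuals, and then apply Assumption~\ref{assumption:residual_faithfulness}. The candidate is $\epsilon_i$.

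\textbf{Residual of $x_i$.} In \eqref{eq:sink_indep}, $h_i$ regresses $x_i$ on $M_i \cup K\setminus\{x_i\}$, and this set contains the descendant $x_j$. Lemma~\ref{lemma:noise-irremovability-LSNM-residuals} says that no location-scale regression of (a function of) $x_i$ on $\mathbf{x}_{-i}$ removes $\epsilon_i$. A regression on a subset of $\mathbf{x}_{-i}$ is a special case, so $\epsilon_i \notcind h_i$ for every admissible $h_i$.

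\textbf{Residual of $x_j$.} Here $h_j$ regresses $x_j$ on $M_j$. Since $x_j$ is a descendant of $x_i$, $x_j$ depends on $\epsilon_i$ along a directed path $x_i \to \cdots \to x_j$. If the path passes through a hidden node, there is a UCP and $\epsilon_i \in N_j$. Otherwise the path enters $x_j$ through an observed parent $x_p \in K_j$, and I would use Proposition~\ref{prop:regression_sets_fg}(a,b) and Lemma~\ref{lemma:non_decomp}. If $x_p \notin M_j$, the contribution of $\epsilon_i$ carried by $x_p$ cannot be cancelled, because $f$ and $g$ do not see $x_p$. If $x_p \in M_j$ is visible, we could hope to follow the chain. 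This is the weak point, so instead I would use \eqref{eq:sink_dep} to close the gap. The idea is that the dependence of $h_i'(i,X\setminus M_i)$ and $h_j'(j,X\setminus M_j)$ for all fits certifies a shared noise in the residuals computed from the candidate sets $M_i$ and $M_j$ alone. Condition \eqref{eq:sink_indep} removes that shared noise only by adding $K$ to $x_i$'s regression set. Adding a descendant $x_j$ to $x_i$'s regressors cannot eliminate $\epsilon_i$ (by Lemma~\ref{lemma:noise-irremovability-LSNM-residuals}), nor can it eliminate a noise that $h_j$ keeps, since $h_j$ itself is unchanged.

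\textbf{Main obstacle.} The hard part is the $h_j$ side. We need that $\epsilon_i$, or some other noise certified by \eqref{eq:sink_dep}, stays in every $h_j(j,X\setminus M_j)$ while also staying in $h_i(i, X\setminus(M_i\cup K\setminus\{x_i\}))$. To get this, I would try to show that the noise shared in \eqref{eq:sink_dep} is either $\epsilon_i$ itself or a noise upstream of $x_i$ that reaches $x_j$ through $x_i$. In both cases, Lemma~\ref{lemma:non_decomp} applied to the nonlinear $f_j^1, g_j^1$ keeps it in $h_j$, and Lemma~\ref{lemma:noise-irremovability-LSNM-residuals} keeps it in $h_i$. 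Assumption~\ref{assumption:residual_faithfulness} then gives $h_i\notcind h_j$ for all $h_i,h_j$, contradicting \eqref{eq:sink_indep}. Therefore $x_i$ is a sink of $K$.
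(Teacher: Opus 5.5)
There is a genuine gap, and it is the case you flagged yourself. Your plan is to make \eqref{eq:sink_indep} fail in every non-sink configuration. That works in two cases. It works when $x_i$ is a direct parent of $x_j$: since $x_i\in K$ and $M_j\subseteq X\setminus K$, we get $x_i\notin M_j$, so Proposition~\ref{prop:regression_sets_fg}(a,b) keeps $\epsilon_i$ in every $h_j$. It also works when $B(i,j)=1$, via a shared hidden noise. That noise need not be $\epsilon_i$, and a hidden node on the path $x_i\to\cdots\to x_j$ puts $\epsilon_i$ in $N_j$ only if it is a hidden \emph{parent} of $x_j$.

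If $x_j$ is an indirect descendant with $B(i,j)=0$, however, \eqref{eq:sink_indep} can genuinely hold. Once the intermediate parents of $x_j$ lie in $M_j$, the true functions give $h_j=\eta_j$, which is free of $\epsilon_i$. Lemma~\ref{lemma:noise-irremovability-LSNM-residuals} protects only $\epsilon_i$, not noises upstream of $x_i$. Those disappear from $h_i$ as soon as the corresponding parents, which are members of $K$, enter its regression set.

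Concretely, take $x_q\to x_i\to x_p\to x_j$ and a hidden $u$ with $x_q\to u\to x_j$; this is bow-free. Set $K=\{x_i,x_j,x_q\}$, $M_i=M_q=\emptyset$, $M_j=\{x_p\}$.
\begin{itemize}
\item Condition \eqref{eq:sink_indep} holds with $h_i=\eta_i$ (the true functions of $x_q$), $h_j=\eta_j$, and $h_q$ affine in $x_q$.
\item Condition \eqref{eq:sink_dep} also holds. For $x_j$, $\epsilon_q$ sits in every $h_i'(i,X)$ and, through the hidden $u$, in every $h_j'(j,X\setminus\{x_p\})$. For $x_q$, dependence follows from $x_q\to x_i$.
\end{itemize}
Yet $x_j$ is a descendant of $x_i$. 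The noise certified by \eqref{eq:sink_dep} is $\epsilon_q$, and adding $x_q\in K$ to $x_i$'s regressors removes it. So the claim in your ``main obstacle'' paragraph is false. No argument that uses only $M_\ell\subseteq X\setminus K$ can close this case.

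The paper handles this case (its Case~C) the other way round: it shows that \eqref{eq:sink_dep} fails. To do so it imports an invariant of the algorithm, $K_\ell\subseteq M_\ell$ for every $x_\ell\in K$. This is justified by bow-freeness and by the reset $t\leftarrow 2$, which makes visible parent pairs get picked up at the pair level. With the invariant, the true structural functions give $h_i'(i,X\setminus M_i)=\eta_i$ and $h_j'(j,X\setminus M_j)=\eta_j$, and these are independent by Lemma~\ref{lemma:independence_patterns} since $B(i,j)=0$.

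That assumption about the candidate sets is the missing ingredient in your proposal. Be aware that it is not among the lemma's stated hypotheses. It also conflicts with $M_\ell\subseteq X\setminus K$ whenever a parent of $x_\ell$ lies in $K$, as $x_q$ does above. A rigorous version of either argument therefore has to add an explicit hypothesis of this kind on the $M_\ell$.
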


\begin{proof}[Proof of Lemma~\ref{lemma:sink}]
We prove the contrapositive: if $x_i$ is not a sink of $K$, then at
least one of~\eqref{eq:sink_indep} or~\eqref{eq:sink_dep} fails.

Since $x_i$ is not a sink, there exists $x_j \in K \setminus \{x_i\}$
that is a descendant of $x_i$ in~$G$. The following three cases are
exhaustive: either $x_i$ is a direct parent of $x_j$ (Case~A), or
$A(j,i) = 0$ with $B(i,j) = 1$ (Case~B), or $A(j,i) = 0$ with
$B(i,j) = 0$ (Case~C).

\paragraph{Case~A: $x_i$ is a direct parent of $x_j$ ($A(j,i) = 1$).}
We show~\eqref{eq:sink_indep} fails.
Since $M_j \subseteq X \setminus K$ and $x_i \in K$, we have
$x_i \notin M_j$. This follows directly from the lemma hypothesis $M_j \subseteq X \setminus K$: since $x_i \in K$, $M_j$ and $K$ are disjoint, so $x_i \notin M_j$.
Since $x_i \in K_j$ is excluded from at least one of $f$ and $g$ in any $h_j(j, X \setminus M_j)$ (since $x_i \in X \setminus M_j$ and at least one of $S_f, S_g$ must equal $X \setminus M_j$),
Proposition~\ref{prop:regression_sets_fg}(a,b) gives:
\[
  \epsilon_i \notcind h_j(j, X \setminus M_j)
  \quad \text{for all } h_j \in \mathcal{H}.
\]
This holds because $x_i$ is a true parent of $x_j$, so any regression of $x_j$ that excludes $x_i$ retains its dependence on $\epsilon_i$, the exogenous noise of $x_i$.
By Lemma~\ref{lemma:noise-irremovability-LSNM-residuals}
(irremovability of own noise), $\epsilon_i \notcind h_i(i, S)$ for
all $h_i \in \mathcal{H}$ and any valid exclusion set~$S \ni i$.
In particular,
$\epsilon_i \notcind h_i\!\big(i,\; X \setminus (M_i \cup
K \setminus \{x_i\})\big)$ for all $h_i \in \mathcal{H}$.
Since both residuals depend on the same exogenous noise~$\epsilon_i$,
Assumption~\ref{assumption:residual_faithfulness} gives:
\[
  h_i\!\big(i,\; X \setminus (M_i \cup K \setminus \{x_i\})\big)
  \;\notcind\; h_j(j,\; X \setminus M_j)
  \quad \text{for all } h_i, h_j \in \mathcal{H}.
\]
No $h_i$ can achieve independence with~$h_j$,
so~\eqref{eq:sink_indep} fails.

\paragraph{Case~B: $B(i,j) = 1$ and $A(j,i) = 0$ (invisible pair).}
We show~\eqref{eq:sink_indep} fails.
Since $B(i,j) = 1$, there exists shared exogenous noise
$\epsilon_l \in N_i \cap N_j$
(Lemma~\ref{lemma:independence_patterns}). This noise enters both
$x_i$ and $x_j$ through hidden variables in~$Q_i$ and~$Q_j$
respectively. Since hidden variables are never available to any
regression function in~$\mathcal{H}$, the contribution of~$\epsilon_l$
cannot be cancelled by any choice of observed-variable regression, by
Lemma~\ref{lemma:non_decomp}. Therefore:
\[
  \epsilon_l \notcind h_i(i, S_1)
  \quad \text{and} \quad
  \epsilon_l \notcind h_j(j, S_2)
  \quad \text{for all } h_i, h_j \in \mathcal{H}
  \text{ and all valid } S_1, S_2.
\]
By Assumption~\ref{assumption:residual_faithfulness},
$h_i \notcind h_j$ for all $h_i, h_j \in \mathcal{H}$,
and~\eqref{eq:sink_indep} fails.

\paragraph{Case~C: $A(j,i) = 0$ and $B(i,j) = 0$ (indirect
descendant, no confounding).} We show~\eqref{eq:sink_dep} fails.
We first establish the following invariant, which the algorithm
maintains under bow-freeness:
\[
  K_\ell \subseteq M_\ell \quad \text{for every } x_\ell \in K.
\]
Parents of $x_\ell$ outside $K$ are covered by the standard inductive
hypothesis $K_\ell \setminus K \subseteq M_\ell$. For a parent
$x_k \in K_\ell \cap K$, bow-freeness gives
$A(\ell, k)\, B(\ell, k) = 0$; since $A(\ell, k) = 1$, we have
$B(\ell, k) = 0$, so $(x_\ell, x_k)$ is a visible pair. The pair
$\{x_\ell, x_k\}$ was processed in an earlier iteration with $t = 2$
(the reset rule $t \leftarrow 2$ in
Algorithm LSNM-UV-Base, ensures pair-level subsets
are revisited whenever a new parent is added). Since $A(\ell, k) = 1$
and $B(\ell, k) = 0$, by Case~A the sink test for $\{x_\ell, x_k\}$
correctly identifies $x_\ell$ as the sink, adding $x_k$ to $M_\ell$.
Hence $K_\ell \subseteq M_\ell$.

By acyclicity, $x_j$ being a descendant of $x_i$ rules out
$x_j \in K_i$. Symmetrically, $A(j,i) = 0$ gives $x_i \notin K_j$.
Combined with the invariant above, $K_i \subseteq M_i$ and
$K_j \subseteq M_j$.

Choose $h_i'$ using the true structural functions $f = f_i^1$,
$g = g_i^1$, with exclusion set $X \setminus M_i$ (so the regression
uses $M_i$). Since $K_i \subseteq M_i$, both $f_i^1$ and $g_i^1$ have
access to all true parents of $x_i$, and
Proposition~\ref{prop:regression_sets_fg}(c) gives:
\[
  h_i'(i,\; X \setminus M_i)
  = \frac{x_i - f_i^1(K_i)}{g_i^1(K_i)} = \eta_i.
\]
Symmetrically, $h_j'(j,\; X \setminus M_j) = \eta_j$. Since
$B(i,j) = 0$, Lemma~\ref{lemma:independence_patterns} gives
$\eta_i \cind \eta_j$. Therefore
$\exists\, h_i', h_j' \in \mathcal{H}$ with
\[
  h_i'(i,\; X \setminus M_i) \;\cind\; h_j'(j,\; X \setminus M_j),
\]
and~\eqref{eq:sink_dep} fails.


\end{proof}
\begin{algorithm}[H]
\footnotesize
\caption{LSNM-UV-Base}
\label{alg:lsnm-uv-base}
\begin{algorithmic}[1]
\Require Data matrix $\mathbf{X}$ for $p$ observed variables,
  max subset size $d$, significance level $\alpha$
\Ensure Adjacency matrix $A$, parent sets $\{M_1, \ldots, M_p\}$
 
\medskip
\State \textbf{Phase 1: Extract parent candidates}
\For{$i = 1$ to $p$}
  \State $M_i \leftarrow \emptyset$
\EndFor
\State $t \leftarrow 2$
\While{$t \leq d$}
  \State $\textit{noChange} \leftarrow \textbf{True}$
  \For{each $K \subseteq X$ with $|K| = t$}
 
    \medskip
    \State \textbf{Step A: Find the most endogenous variable in $K$}
    \For{each $x_i \in K$}
      \State $\mathrm{score}(x_i)
        \leftarrow \text{p-HSIC}\!\bigg(
          h_i\!\Big(i,\; X \setminus \bigl(M_i \cup K \setminus \{x_i\}\bigr)\Big),\;
          \Big\{h_j\!\bigl(j,\; X \setminus M_j\bigr)\Big\}_{x_j \in K \setminus \{x_i\}}
        \bigg)$
    \EndFor
    \State $x_b \leftarrow \arg\max_{x_i \in K} \mathrm{score}(x_i)$
 
    \medskip
    \State \textbf{Step B: Verify $x_b$ is a sink (Lemma~\ref{lemma:sink})}
    \State $e \leftarrow \text{p-HSIC}\!\bigg(
      h_b\!\Big(b,\; X \setminus \bigl(M_b \cup K \setminus \{x_b\}\bigr)\Big),\;
      \Big\{h_j\!\bigl(j,\; X \setminus M_j\bigr)\Big\}_{x_j \in K \setminus \{x_b\}}
    \bigg)$
    \State $d_{\max} \leftarrow \displaystyle\max_{x_j \in K \setminus \{x_b\}}
      \text{p-HSIC}\!\bigg(
        h_b\!\bigl(b,\; X \setminus M_b\bigr),\;
        h_j\!\bigl(j,\; X \setminus M_j\bigr)
      \bigg)$
    \If{$e > \alpha$ \textbf{and} $d_{\max} < \alpha$}
      \State $M_b \leftarrow M_b \cup (K \setminus \{x_b\})$
        \Comment{$x_b$ is sink; others are parent candidates}
      \State $\textit{noChange} \leftarrow \textbf{False}$
    \EndIf
 
  \EndFor
  \If{$\textit{noChange}$}
    \State $t \leftarrow t + 1$
  \Else
    \State $t \leftarrow 2$
      \Comment{Restart: new parents may unlock new sinks}
  \EndIf
\EndWhile
 
\medskip
\State \textbf{Phase 2: Prune false parent candidates}
\For{$i = 1$ to $p$}
  \For{each $x_j \in M_i$}
    \If{$\text{p-HSIC}\!\bigg(
         h_i\!\Big(i,\; X \setminus \bigl(M_i \setminus \{x_j\}\bigr)\Big),\;
         h_j\!\bigl(j,\; X \setminus M_j\bigr)
       \bigg) > \alpha$}
      \State $M_i \leftarrow M_i \setminus \{x_j\}$
        \Comment{$x_j$ not a direct parent}
    \EndIf
  \EndFor
\EndFor
 
\medskip
\State \textbf{Phase 3: Build adjacency matrix and detect invisible pairs}
\State Initialize $A(i,j) \leftarrow 0$ for all $i, j$
\For{$i = 1$ to $p$}
  \For{each $x_j \in M_i$}
    \State $A(i,j) \leftarrow 1$
      \Comment{$x_j$ is a parent of $x_i$}
  \EndFor
\EndFor
\For{each pair $(i,j)$ with $i < j$,\;
     $x_j \notin M_i$,\; $x_i \notin M_j$}
  \If{$\text{p-HSIC}\!\bigg(
       h_i\!\bigl(i,\; X \setminus M_i\bigr),\;
       h_j\!\bigl(j,\; X \setminus M_j\bigr)
     \bigg) < \alpha$}
    \State $A(i,j) \leftarrow \mathrm{NaN};\; A(j,i) \leftarrow \mathrm{NaN}$
      \Comment{Invisible pair: UBP/UCP exists}
  \EndIf
\EndFor
 
\medskip
\Return $A$, $\{M_1, \ldots, M_p\}$
\end{algorithmic}
\end{algorithm}


\begin{itemize}
\item \textbf{Lines 8--13 (Step A):} We start with all subsets $K$ of size $t$ contained in $X$. To find a \emph{sink} for the subset $K$, we first find the most endogenous element in $K$. For each element $x_i$ in $K$, we compute a score: after regressing $x_i$ on its parent candidates $M_i$ and all other members of the subset $K$, is $x_i$'s residual jointly independent of the residuals of all other members of $K$, each regressed on their own parent candidates? A high score (high p-value) means independence, and hence $x_b$ is chosen as the element achieving the maximum score. When a sink is confirmed in Step~B, the algorithm resets $t \leftarrow 2$ (line~22): the newly discovered parents in $M_b$ may unlock sinks in other subsets that previously failed the test, so the search restarts from pairs.

\item \textbf{Lines 14--17 (Step B):} We check if $x_b$ is indeed a sink. Condition~\eqref{eq:sink_indep} (independence with $K$ included) is checked in line~15, and condition~\eqref{eq:sink_dep} (dependence without $K$) is checked in line~16. In line~17, if $e > \alpha$ (independence) and $d_{\max} < \alpha$ (dependence) are both verified, $x_b$ is confirmed as a sink and $K \setminus \{x_b\}$ is added to $M_b$.

\item \textbf{Lines 28--36 (Phase 2):} The pruning test checks whether $h_i(i, X \setminus (M_i \setminus \{x_j\})) \cind h_j(j, X \setminus M_j)$. Note that $X \setminus (M_i \setminus \{x_j\})$ contains $j$ (since $x_j$ was removed from $M_i$), so $x_j$ is in the exclusion set of~$h_i$. In practice, since we set $S_f = S_g = X \setminus (M_i \setminus \{x_j\})$, $x_j$ is excluded from both $f$ and $g$. By \textbf{Proposition~\ref{prop:regression_sets_fg}(b)}, if $x_j$ is a true parent of $x_i$, then excluding $x_j$ from both $f$ and $g$ forces the residual to retain dependence on $\epsilon_j$. Combined with \textbf{Lemma~\ref{lemma:noise-irremovability-LSNM-residuals}} (irremovability of own noise) and \textbf{Assumption~\ref{assumption:residual_faithfulness}} (residual faithfulness), this guarantees $h_i \notcind h_j$ for all $h_i, h_j \in \mathcal{H}$, so true parents are never pruned.

Conversely, if $x_j$ is not a direct parent, then $M_i \setminus \{x_j\}$ still contains all true parents $K_i$. The true structural functions $f_i^1, g_i^1$ use only $K_i \subseteq M_i \setminus \{x_j\}$, so setting $S_f = S_g = X \setminus (M_i \setminus \{x_j\})$ is valid for the true functions. This gives $h_i(i, X \setminus (M_i \setminus \{x_j\})) = \eta_i$. Similarly, $h_j(j, X \setminus M_j) = \eta_j$. Note that any $x_j \in M_i$ that is not a true parent must form a visible non-edge with $x_i$ (invisible pair partners cannot enter $M_i$ because the sink test, which requires joint independence, always fails when an invisible partner is in $K$). Since the pair is visible (no UBP/UCP), \textbf{Lemma~\ref{lemma:independence_patterns}} gives $\eta_i \cind \eta_j$. Non-parents are correctly pruned.

\item \textbf{Lines 38--48 (Phase 3):} For each non-adjacent pair $(x_i, x_j)$, the test $h_i(i, X \setminus M_i) \notcind h_j(j, X \setminus M_j)$ detects invisible pairs. When $M_i = K_i$ (the true parents), the true structural functions give $h_i(i, X \setminus M_i) = \eta_i$, and \textbf{Lemma~\ref{lemma:independence_patterns}} gives $\eta_i \notcind \eta_j$ if and only if a UBP/UCP exists. This corresponds to \textbf{Proposition~\ref{prop:residual_independence_pattern}}, Case~1 (bidirected edge).

However, when $M_i \subsetneq K_i$ (some true parents were not identified in Phase~1, e.g., because invisible pairs blocked sink identification), the residual $h_i(i, X \setminus M_i)$ does not equal $\eta_i$ and may show dependence with $h_j$ even when no UBP/UCP exists. This causes visible pairs to be incorrectly marked as invisible. This is a known limitation, discussed further below.
\end{itemize}


\subsubsection{Limitations of LSNM-UV-Base}
\label{sec:limitations_base}
LSNM-UV-Base is \emph{incomplete} in identifying visible pairs and \emph{unsound} in identifying invisible pairs. Soundness and completeness can be defined as follows.

\begin{definition}[Soundness and completeness]\label{def:soundness_completeness}
\begin{itemize}
    \item The algorithm is \emph{sound for visible edges} if every visible edge in the output is also a visible edge in the ground truth.
    \item The algorithm is \emph{complete for visible edges} if every visible edge in the ground truth also appears as a visible edge in the output.
    \item Analogous definitions apply to visible non-edges and invisible pairs.
\end{itemize}
\end{definition}

The source of these limitations is that Phase~1 may fail to identify all parents of $x_i$. Specifically, the sink identification in Lemma~\ref{lemma:sink} requires that~\eqref{eq:sink_indep} holds: including $K \setminus \{x_i\}$ in the regression achieves independence. If some $x_j \in K$ has a UBP/UCP with $x_i$ (i.e., the pair is invisible), the shared hidden noise persists in the residuals regardless of the regression set, so~\eqref{eq:sink_indep} fails. The sink test does not pass, and the invisible parent is never added to $M_i$.



\begin{remark}
These limitations are the same as those of CAM-UV~\citep{maeda21a} in the additive setting, and are not specific to the LSNM extension. They arise because Phase~1 cannot identify invisible parents, not because of any issue with the location-scale structure.
\end{remark}

\subsubsection{CheckVisible}
\label{checkvisible-section}
\paragraph{Stage 2: checkVisible.} For each pair $(x_i, x_j)$ marked as invisible ($\mathrm{NaN}$) by LSNM-UV-Base, this procedure tests whether it can be re-classified as a visible edge or visible non-edge. The key insight is that LSNM-UV-Base searches over regression sets of the form $M_i$ (parent candidates found in Phase~1), which may be incomplete. \textsc{checkVisible} searches over a larger set $Q$ that includes all parents identified by LSNM-UV-Base \emph{plus} all variables whose relationship to $x_i$ or $x_j$ is unclear (marked NaN). This broader search can discover independence-achieving regression sets that Phase~1 missed.

For this procedure, we need the following lemma, which provides an alternative way to certify that a pair is not invisible and that one variable is not a parent of the other:
\begin{lemma}[Non-parentship and non-invisibility via inclusion]
\label{lem:nonparent-inclusion}
If
\begin{equation}\label{eq:lem9-a}
  \exists\, h_1, h_2 \in \mathcal{H},\;
  S_1, S_2 \subseteq X
  \text{ with } i \in S_1,\; x_j \notin S_1,\; \{i,j\} \subseteq S_2 :\quad
  h_1(i,\, S_1) \;\cind\; h_2(j,\, S_2),
\end{equation}
then $(x_i, x_j)$ is not invisible, and $x_i$ is not a parent of $x_j$.

Similarly, if
\begin{equation}\label{eq:lem9-b}
  \exists\, h_1, h_2 \in \mathcal{H},\;
  S_1, S_2 \subseteq X
  \text{ with } \{i,j\} \subseteq S_1,\; x_i \notin S_2,\; j \in S_2 :\quad
  h_1(i,\, S_1) \;\cind\; h_2(j,\, S_2),
\end{equation}
then $(x_i, x_j)$ is not invisible, and $x_j$ is not a parent of $x_i$.

If both~\eqref{eq:lem9-a} and~\eqref{eq:lem9-b} hold,
then $(x_i, x_j)$ is a visible non-edge.
\end{lemma}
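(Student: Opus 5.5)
I will prove both implications by contraposition. The argument uses the same toolkit as the proof of Proposition~\ref{prop:residual_independence_pattern}: shared exogenous noise together with irremovability, fed into residual faithfulness (Assumption~\ref{assumption:residual_faithfulness}).

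For \eqref{eq:lem9-a}, fix $h_1(i,S_1)$ and $h_2(j,S_2)$ with $i\in S_1$, $j\notin S_1$, $\{i,j\}\subseteq S_2$, and suppose they are independent.

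\emph{Step 1 ($(x_i,x_j)$ is not invisible).} Suppose instead that there is a UBP or UCP between $x_i$ and $x_j$. By the proof of Lemma~\ref{lemma:independence_patterns} there is some $\epsilon_l\in N_i\cap N_j$. The hidden-noise argument from Case~1 and Case~3 ($\Leftarrow$) of Proposition~\ref{prop:residual_independence_pattern} does not depend on the exclusion set: $\epsilon_l$ reaches $x_i$ and $x_j$ only through hidden parents, which no regression on observed variables can cancel (Lemma~\ref{lemma:non_decomp}). Hence $\epsilon_l\notcind h_1(i,S_1)$ and $\epsilon_l\notcind h_2(j,S_2)$ for every admissible $S_1,S_2$. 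Residual faithfulness then gives dependence, a contradiction.

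One subtlety concerns the UCP case where the shared noise is $\epsilon_j$ itself and $x_j$ is available to $h_1$. If $h_1$ could use the observed $x_j$ to absorb $\epsilon_j$, the argument would break. I would argue that $\epsilon_j$ enters $x_i$ through a hidden $u_k\in Q_i$, which is a nonlinear function of $\epsilon_j$ and of other noises (at least $\epsilon_{u_k}$). This contribution is therefore not a function of the observed $x_j$ alone, and by Lemma~\ref{lemma:non_decomp} it cannot be cancelled. This is the step I expect to be the main obstacle. To make it rigorous I would lean on the paper's established convention that hidden-path contributions are irremovable regardless of $S$, as already used in Proposition~\ref{prop:residual_independence_pattern}, Case~3.

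\emph{Step 2 ($x_i$ is not a parent of $x_j$).} Suppose $x_i\in K_j$. Since $i\in S_2$, $x_i$ is excluded from at least one of $f,g$ in $h_2$, so Proposition~\ref{prop:regression_sets_fg}(a,b) gives $\epsilon_i\notcind h_2(j,S_2)$. Lemma~\ref{lemma:noise-irremovability-LSNM-residuals} gives $\epsilon_i\notcind h_1(i,S_1)$, because $h_1$ regresses only on $X\setminus\{x_i\}$ and $s(x_i)=x_i$. Residual faithfulness then yields $h_1\notcind h_2$, a contradiction. This mirrors the ``Showing $A(j,i)=0$'' step in Case~3. The hypothesis $j\notin S_1$ is not needed for this direction; it only matters algorithmically.

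The second statement, \eqref{eq:lem9-b}, follows by exchanging the roles of $i$ and $j$. For the final claim, if both \eqref{eq:lem9-a} and \eqref{eq:lem9-b} hold, then $B(i,j)=0$, $A(j,i)=0$ and $A(i,j)=0$, which is exactly Definition~\ref{def:vis_nonedge}.
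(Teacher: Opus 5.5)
Your proposal is correct and follows the paper's proof essentially step for step. A shared noise $\epsilon_l\in N_i\cap N_j$ that is irremovable by Lemma~\ref{lemma:non_decomp}, combined with residual faithfulness, rules out invisibility; Lemma~\ref{lemma:noise-irremovability-LSNM-residuals} together with Proposition~\ref{prop:regression_sets_fg}(a,b) applied to $\epsilon_i$ rules out $x_i\to x_j$; and the two conclusions combine into a visible non-edge. The UCP subtlety you flag (the shared noise is $\epsilon_j$ while $x_j$ is available to $h_1$) is handled in the paper by the same appeal to Lemma~\ref{lemma:non_decomp}, but the apt precedent is Case~1 of Proposition~\ref{prop:residual_independence_pattern} rather than Case~3, because there $h_1(i,\{i\})$ may already use $x_j$. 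In fact your Step~1 follows directly from Case~1, since $h_1(i,S_1)$ and $h_2(j,S_2)$ are admissible as $h(i,\{i\})$ and $h(j,\{j\})$.
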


\begin{proof}
We prove~\eqref{eq:lem9-a}; the proof of~\eqref{eq:lem9-b} is symmetric.

\noindent\textbf{$(x_i, x_j)$ is not invisible.}
Suppose for contradiction that $(x_i, x_j)$ is invisible, i.e., $B(i,j) = 1$.
By Lemma~\ref{lemma:independence_patterns}, there exists a shared exogenous noise $\epsilon_l \in N_i \cap N_j$.

Since $\epsilon_l$ enters $x_i$ through hidden variables in $Q_i$ and the regression in $h_1$ uses only observed variables, $\epsilon_l \notcind h_1(i, S_1)$ by Lemma~\ref{lemma:non_decomp}. Since $j \in S_2$ (from $\{i,j\} \subseteq S_2$), the same argument gives $\epsilon_l \notcind h_2(j, S_2)$. By residual faithfulness (Assumption~\ref{assumption:residual_faithfulness}), $h_1 \notcind h_2$. But~\eqref{eq:lem9-a} gives independence. Contradiction.

\noindent\textbf{$x_i$ is not a parent of $x_j$.}
Suppose for contradiction that $x_i \to x_j$ is a visible edge, i.e., $A(j,i) = 1$ and $B(j,i) = 0$.

First, by Lemma~\ref{lemma:noise-irremovability-LSNM-residuals}, $\epsilon_i \notcind h_1(i, S_1)$ for any valid $h_1 \in \mathcal{H}$, regardless of regression set. 

Second, $\epsilon_i \notcind h_2(j, S_2)$: since $x_i \to x_j$, $x_i \in K_j$; and $i \in S_2$ (from $\{i,j\} \subseteq S_2$), so $x_i$ is excluded from at least one of $f$ and $g$ in $h_2$. By Proposition~\ref{prop:regression_sets_fg}(a,b), excluding a true parent from at least one of $f$ and $g$ forces the residual to retain dependence on $\epsilon_i$.

By residual faithfulness (Assumption~\ref{assumption:residual_faithfulness}), $h_1(i, S_1) \notcind h_2(j, S_2)$. But~\eqref{eq:lem9-a} gives independence. Contradiction.

\noindent\textbf{Visible non-edge when both hold.}
If both~\eqref{eq:lem9-a} and~\eqref{eq:lem9-b} hold, then $(x_i, x_j)$ is not invisible. One of three cases must hold: (1)~$x_i$ is a visible parent of $x_j$, (2)~$x_j$ is a visible parent of $x_i$, or (3)~visible non-edge. But~\eqref{eq:lem9-a} rules out case~(1), and~\eqref{eq:lem9-b} rules out case~(2). Therefore $(x_i, x_j)$ is a visible non-edge.
\end{proof}

\begin{algorithm}[H]
\caption{\textsc{checkVisible LSNM-UV}}
\label{alg:checkvisible}
\begin{algorithmic}[1]
\Require Indices $i$ and $j$
\State $P_i \leftarrow \{v \mid A(i,v) = 1\}$;\quad
       $P_j \leftarrow \{v \mid A(j,v) = 1\}$
\State $Q \leftarrow P_i \cup P_j \cup \{k \mid A(i,k) = \mathrm{NaN} \text{ or } A(j,k) = \mathrm{NaN}\}$
\State $\textit{iNotParent} \leftarrow \textbf{False}$;\quad
       $\textit{jNotParent} \leftarrow \textbf{False}$
 
\For{each $M \subseteq Q$ and $N \subseteq Q$}
  \State $e \leftarrow \text{p-HSIC}\!\Big(
    h_i\!\bigl(i,\; X \setminus M\bigr),\;
    h_j\!\bigl(j,\; X \setminus N\bigr)
  \Big)$
  \State \Comment{Test visible non-edge: Prop~\ref{prop:residual_independence_pattern}, Case~3}
  \If{$e > \alpha$}
    \State $A(i,j) \leftarrow 0$;\; $A(j,i) \leftarrow 0$;\; \textbf{return}
  \Else
    \State $a_1 \leftarrow \text{p-HSIC}\!\Big(
      h_i\!\bigl(i,\; X \setminus (M \cup \{x_j\})\bigr),\;
      h_j\!\bigl(j,\; X \setminus N\bigr)
    \Big)$
    \State $a_2 \leftarrow \text{p-HSIC}\!\Big(
      h_i\!\bigl(i,\; X \setminus M\bigr),\;
      h_j\!\bigl(j,\; X \setminus (N \cup \{x_i\})\bigr)
    \Big)$
    \If{$a_1 > \alpha$}
      \State $\textit{iNotParent} \leftarrow \textbf{True}$
        \Comment{$x_i$ is not a parent of $x_j$}
    \EndIf
    \If{$a_2 > \alpha$}
      \State $\textit{jNotParent} \leftarrow \textbf{True}$
        \Comment{$x_j$ is not a parent of $x_i$}
    \EndIf
    \If{$\textit{iNotParent} \wedge \textit{jNotParent}$}
      \State $A(i,j) \leftarrow 0$;\; $A(j,i) \leftarrow 0$;\; \textbf{return}
        \Comment{Visible non-edge}
    \EndIf
  \EndIf
\EndFor
 
\If{$\textit{iNotParent}$}
  \State $A(i,j) \leftarrow 1$;\; $A(j,i) \leftarrow 0$
    \Comment{$x_j \to x_i$ is a visible edge}
\EndIf
\If{$\textit{jNotParent}$}
  \State $A(j,i) \leftarrow 1$;\; $A(i,j) \leftarrow 0$
    \Comment{$x_i \to x_j$ is a visible edge}
\EndIf
\end{algorithmic}
\end{algorithm}

For a pair $(x_i, x_j)$, the search set $Q$ consists of all parents identified by LSNM-UV-Base, together with all variables whose relationship to $x_i$ or $x_j$ is invisible. Note that $Q$ does not contain $x_i$ or $x_j$. Lines~10--11 search for exclusion sets $S_1$ and $S_2$ satisfying the conditions of Lemma~\ref{lem:nonparent-inclusion}: iterating over $M, N \subseteq Q$, the algorithm sets $S_1 = X \setminus (M \cup \{x_j\})$ (so $i \in S_1$ and $x_j \notin S_1$, i.e., $x_j$ is available in $x_i$'s regression) and $S_2 = X \setminus N$ (so $\{i,j\} \subseteq S_2$ since $Q \subseteq X \setminus \{x_i,x_j\}$), matching~\eqref{eq:lem9-a}; and symmetrically for~\eqref{eq:lem9-b}. For all subsets of $Q$, we check the following:
\begin{itemize}
\item \textbf{Line 7} ($e > \alpha$):
Tests \textbf{Proposition~\ref{prop:residual_independence_pattern}}, Case~3 (visible non-edge), \eqref{eq:case3}.
If $\exists\, h_i, h_j \in \mathcal{H}$:
$h_i(i, \{i,j\}) \cind h_j(j, \{i,j\})$,
then $(x_i, x_j)$ is a visible non-edge.

\item \textbf{Line 10} ($a_1 > \alpha$):
  Tests \textbf{Lemma~\ref{lem:nonparent-inclusion}}, Equation~\eqref{eq:lem9-a}:
  $\exists\, S_1, S_2$ with $i \in S_1$, $x_j \notin S_1$, $\{i,j\} \subseteq S_2$:
  $h_1(i, S_1) \cind h_2(j, S_2)$.
  Then the pair is not invisible and $x_i$ is not a parent of $x_j$.

\item \textbf{Line 11} ($a_2 > \alpha$):
  Symmetric test for Equation~\eqref{eq:lem9-b}:
  $\exists\, S_1, S_2$ with $\{i,j\} \subseteq S_1$, $x_i \notin S_2$, $j \in S_2$:
  $h_1(i, S_1) \cind h_2(j, S_2)$.
  Then the pair is not invisible and $x_j$ is not a parent of $x_i$.
 
\item \textbf{Line 17}:
  If neither is a parent of the other, then $(x_i, x_j)$ is a visible non-edge
  (\textbf{Lemma~\ref{lem:nonparent-inclusion}}).
 
\item \textbf{Lines 21--24}:
  If $\textit{iNotParent}$ but not $\textit{jNotParent}$:
  universal dependence (\textbf{Proposition~\ref{prop:residual_independence_pattern}}, Case~2, \eqref{eq:case2a}) holds for
  all $h_i(i, \{i,j\})$ and $h_j(j, \{j\})$ (since line~7 was never triggered for any $M, N \subseteq Q$),
  and existence of independence (\eqref{eq:case2b}) is witnessed by $a_1 > \alpha$. Together these imply $x_j$ is a visible parent of $x_i$.
\end{itemize}


\subsubsection{Soundness and completeness of LSNM-UV}
\label{sound-complete-proof}
Before proving the theorem \ref{thm:soundness_completeness}, we establish that non-parent variables can be removed from regression sets without affecting independence. These are needed for the completeness arguments.

\begin{proposition}[Non-parent removal for visible parent regression sets]
\label{prop:remove_vis_parent}
Assume $x_j$ is a visible parent of $x_i$, and $x_k$ is not a parent of $x_i$ and not a parent of $x_j$. If
$\exists\, h_1, h_2 \in \mathcal{H}$: $h_1(i, \{i\}) \cind h_2(j, \{j\})$
where $h_1$ or $h_2$ uses $x_k$ in its regression, then there exist $h_1', h_2'$ that do NOT use $x_k$ and still achieve independence.
\end{proposition}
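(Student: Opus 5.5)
The plan is to discard the given pair $h_1,h_2$ and build a new witness directly from the true structural functions, as in the proof of Proposition~\ref{prop:residual_independence_pattern}, Case~2. The hypotheses give $x_k \notin K_i$ and $x_k \notin K_j$, and these are the only facts about $x_k$ the construction uses. The existence of the original independent pair is not needed for the conclusion; it only fixes which exclusion sets we are working with, namely $S=\{i\}$ for $h_1$ and $S=\{j\}$ for $h_2$.

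\emph{Step 1 (construct $h_1'$).} Take $f=f_i^1$ and $g=g_i^1$. Their arguments are $K_i$, and $K_i \subseteq X\setminus\{x_i,x_k\}$ because $x_i\notin K_i$ by acyclicity and $x_k\notin K_i$ by hypothesis. So $h_1'$ is a valid element of $\mathcal{H}$ with exclusion set $\{i\}$, and it does not use $x_k$. Formally we can put $k$ in $S_f$ and $S_g$, or simply note that the functions ignore that coordinate. Proposition~\ref{prop:regression_sets_fg}(c) then gives $h_1'(i,\{i\})=\eta_i$; this part needs $x_j\in K_i$, which is available to both $f$ and $g$ here.

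\emph{Step 2 (construct $h_2'$).} Take $h_2'$ with $f=f_j^1$ and $g=g_j^1$. By the same argument, $K_j\subseteq X\setminus\{x_j,x_k\}$, so $h_2'(j,\{j\})=\eta_j$ and $h_2'$ does not use $x_k$.

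\emph{Step 3 (independence).} Because $x_j$ is a visible parent of $x_i$, we have $B(i,j)=0$. Lemma~\ref{lemma:independence_patterns} then gives $\eta_i\cind\eta_j$, which is the required independence $h_1'(i,\{i\})\cind h_2'(j,\{j\})$.

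The main obstacle is a formal one. It is making precise what it means for $h\in\mathcal{H}$ to ``not use $x_k$'' while still being a member of the class with exclusion set exactly $\{i\}$ (respectively $\{j\}$). I would handle this with the general formulation $S_f,S_g\subseteq S$ from the appendix notation. The simplest route is to interpret ``uses'' as functional dependence: the true functions are constant in $x_k$, so the residual is unchanged whether or not $x_k$ is formally included. I would also check that $\mathcal{H}$ contains the true structural functions, since this is implicitly assumed in every earlier existence argument of this kind.
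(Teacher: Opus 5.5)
Your proposal is correct and matches the paper's proof essentially step for step: build $h_1'$ and $h_2'$ from the true structural functions $f_i^1,g_i^1$ and $f_j^1,g_j^1$, obtain $h_1'(i,\{i\})=\eta_i$ and $h_2'(j,\{j\})=\eta_j$ without using $x_k$, and conclude $\eta_i\cind\eta_j$ from $B(i,j)=0$ via Lemma~\ref{lemma:independence_patterns}. One small slip: you cannot formally put $k$ into $S_f$ or $S_g$, since these must be subsets of $S=\{i\}$ (respectively $\{j\}$). The functional-dependence reading you settle on, where the true functions simply ignore $x_k$, is the right one and is what the paper uses implicitly.
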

\begin{proof}
Since $x_k$ is not a parent of $x_i$, the true structural functions $f_i^1$ and $g_i^1$ do not depend on $x_k$. Since $x_k$ is not a parent of $x_j$, the true structural functions $f_j^1$ and $g_j^1$ do not depend on $x_k$.

Choose $h'_1$ using the true structural functions for $x_i$: $f = f_i^1$ and $g = g_i^1$. Set $S_f = S_g = \{i\}$. Since $x_i \notin K_i$ (DAG property), $K_i \subseteq X \setminus \{x_i\}$, so both $f_i^1$ and $g_i^1$ use only variables in $X \setminus \{x_i\} = X \setminus S_f$, satisfying the constraint for $S = \{i\}$. Since $f_i^1$ and $g_i^1$ do not use $x_k$, $h'_1$ does not use $x_k$. This gives:
\[
h'_1(i, \{i\}) = \frac{x_i - f_i^1(K_i)}{g_i^1(K_i)} = \eta_i.
\]

Choose $h'_2$ using the true structural functions for $x_j$: $f = f_j^1$ and $g = g_j^1$. Set $S_f = S_g = \{j\}$. Since $x_j \notin K_j$ (DAG property), $K_j \subseteq X \setminus \{x_j\}$, satisfying the constraint for $S = \{j\}$. Since $f_j^1$ and $g_j^1$ do not use $x_k$, $h'_2$ does not use $x_k$. This gives $h'_2(j, \{j\}) = \eta_j$.

Since $B(i,j) = 0$ (the pair is visible), Lemma~\ref{lemma:independence_patterns} gives $\eta_i \cind \eta_j$. Therefore $h'_1(i, \{i\}) \cind h'_2(j, \{j\})$, and neither $h'_1$ nor $h'_2$ uses $x_k$.
\end{proof}

\begin{proposition}[Non-parent removal for visible non-edge regression sets]
\label{prop:remove_vis_non_edge}
Assume $(x_i,x_j)$ is a visible non-edge, and $x_k$ is not a parent of $x_i$ and not a parent of $x_j$. If
$\exists\, h_1, h_2 \in \mathcal{H}$: $h_1(i, \{i, j\}) \cind h_2(j, \{i,j\})$
where $h_1$ or $h_2$ uses $x_k$ in its regression, then there exist $h_1', h_2'$ that do NOT use $x_k$ and still achieve independence.
\end{proposition}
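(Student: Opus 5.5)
The plan is to mirror the proof of Proposition~\ref{prop:remove_vis_parent}. We discard the given witnesses $h_1,h_2$ (which may use $x_k$) and build new witnesses directly from the true structural functions. Since $(x_i,x_j)$ is a visible non-edge, we have $A(i,j)=A(j,i)=0$ and $B(i,j)=0$ (Definition~\ref{def:vis_nonedge}). Hence $x_j\notin K_i$ and $x_i\notin K_j$. By the DAG property, $x_i\notin K_i$ and $x_j\notin K_j$. Together these give $K_i\subseteq X\setminus\{x_i,x_j\}$ and $K_j\subseteq X\setminus\{x_i,x_j\}$. By hypothesis, $x_k\notin K_i$ and $x_k\notin K_j$.

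First, we define $h_1'$ with $f=f_i^1$, $g=g_i^1$ and $S_f=S_g=\{i,j\}$. The constraint on $\mathcal{H}$ is met because $K_i\subseteq X\setminus S_f$. Since neither function depends on $x_k$, $h_1'$ does not use $x_k$. The computation in Proposition~\ref{prop:regression_sets_fg}(c) then gives
\[
h_1'(i,\{i,j\})=\frac{x_i-f_i^1(K_i)}{g_i^1(K_i)}=\eta_i .
\]
Symmetrically, we define $h_2'$ with $f=f_j^1$, $g=g_j^1$ and $S_f=S_g=\{i,j\}$. This yields $h_2'(j,\{i,j\})=\eta_j$, again without using $x_k$.

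Finally, since $B(i,j)=0$, Lemma~\ref{lemma:independence_patterns} gives $\eta_i\cind\eta_j$, and therefore $h_1'(i,\{i,j\})\cind h_2'(j,\{i,j\})$. Note that the assumed existence of the original $h_1,h_2$ is never used. It only guarantees that we are in the situation of Case~3 of Proposition~\ref{prop:residual_independence_pattern}, and the hypothesis of a visible non-edge already gives us this.

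There is no real obstacle. The one point to check is that the exclusion-set constraint $S=\{i,j\}$ is valid for the true functions, meaning that neither $x_i$ nor $x_j$ is a parent of the other. This is exactly where we use the visible non-edge hypothesis, not just $B(i,j)=0$.
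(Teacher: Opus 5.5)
Your proposal is correct and follows the paper's proof essentially step for step. Both arguments discard the given witnesses and build $h_1',h_2'$ from the true structural functions with $S_f=S_g=\{i,j\}$ (valid because $A(i,j)=A(j,i)=0$), note that these avoid $x_k$ and equal $\eta_i,\eta_j$, and conclude independence from $B(i,j)=0$ via Lemma~\ref{lemma:independence_patterns}.
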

\begin{proof}
Since $x_k$ is not a parent of $x_i$ or $x_j$, the true structural functions $f_i^1, g_i^1, f_j^1, g_j^1$ do not depend on $x_k$.

Since $A(i,j) = 0$ (visible non-edge), $x_j$ is not a parent of $x_i$, and since $x_i$ cannot be its own parent, $K_i \subseteq X \setminus \{x_i, x_j\}$. Choose $h'_1$ using the true structural functions for $x_i$ with $S_f = S_g = \{i,j\}$. Since $K_i \subseteq X \setminus \{x_i, x_j\} = X \setminus S_f$, both $f_i^1$ and $g_i^1$ use only variables in $X \setminus S_f$, satisfying the constraint for $S = \{i,j\}$. Since $f_i^1$ and $g_i^1$ do not use $x_k$, $h'_1$ does not use $x_k$. This gives $h'_1(i, \{i,j\}) = \eta_i$.

Similarly, since $A(j,i) = 0$, $K_j \subseteq X \setminus \{x_i, x_j\}$. Choosing the true structural functions for $x_j$ with $S_f = S_g = \{i,j\}$ gives $h'_2(j, \{i,j\}) = \eta_j$, and $h'_2$ does not use $x_k$.

Since $B(i,j) = 0$ (no UBP/UCP), Lemma~\ref{lemma:independence_patterns} gives $\eta_i \cind \eta_j$. Therefore $h'_1(i, \{i,j\}) \cind h'_2(j, \{i,j\})$, and neither uses $x_k$.
\end{proof}


\begin{proof}[Proof of Theorem~\ref{thm:soundness_completeness}]
Throughout this proof, we use two properties of LSNM-UV-Base (established in Section~\ref{sec:limitations_base}, following~\cite{maeda21a}):
\begin{enumerate}
\item[(S)] \emph{Soundness for visible pairs:} every visible edge and every visible non-edge in the output of LSNM-UV-Base is also a visible edge (resp.\ visible non-edge) in the ground truth.
\item[(Q)] \emph{Parent coverage of $Q$:} for any pair $(x_i,x_j)$ processed by \textsc{checkVisible}, the search set $Q = P_i \cup P_j \cup \{x_k : A(i,k) = \mathrm{NaN} \text{ or } A(j,k) = \mathrm{NaN}\}$ contains every parent of $x_i$ and every parent of $x_j$. This holds because any true parent $x_k$ of $x_i$ is either in $P_i$ (identified by LSNM-UV-Base) or forms an invisible pair with~$x_i$ (marked NaN); it cannot be classified as a visible non-edge by~(S). The same argument applies to parents of~$x_j$.
\end{enumerate}

\noindent\textbf{Soundness for identifying visible non-edges.}
A pair $(x_i, x_j)$ is identified as a visible non-edge in LSNM-UV-X if (a)~it is identified as a visible non-edge by LSNM-UV-Base, or (b)~it is identified as invisible by LSNM-UV-Base and re-identified as a visible non-edge by \textsc{checkVisible}.

For case~(a), by property~(S), $(x_i, x_j)$ is also a visible non-edge in the ground truth.

For case~(b), this means either: (1)~$e > \alpha$ in line~7 of \textsc{checkVisible}, or (2)~$\textit{iNotParent} = \textbf{True}$ and $\textit{jNotParent} = \textbf{True}$ in line~17.

If $e > \alpha$, then by Assumption~\ref{ass:HSIC}, $\exists\, h_i, h_j \in \mathcal{H}$: $h_i(i, \{i,j\}) \cind h_j(j, \{i,j\})$. By the backward direction of Proposition~\ref{prop:residual_independence_pattern}, Case~3, $A(i,j) = 0 \wedge A(j,i) = 0 \wedge B(i,j) = 0$, so $(x_i, x_j)$ is a visible non-edge in the ground truth.
    
 If $\textit{iNotParent} = \textbf{True}$ and $\textit{jNotParent} = \textbf{True}$, then by Assumption~\ref{ass:HSIC}, Eqs.~\eqref{eq:lem9-a} and~\eqref{eq:lem9-b} are satisfied. By Lemma~\ref{lem:nonparent-inclusion}, $(x_i, x_j)$ is a visible non-edge in the ground truth.

Therefore, LSNM-UV is sound in identifying visible non-edges.
\\
\textbf{Completeness for identifying visible non-edges.}
Suppose that $(x_i, x_j)$ is a visible non-edge in the ground truth. By the forward direction of Proposition~\ref{prop:residual_independence_pattern}, Case~3, $\exists\, h_1, h_2 \in \mathcal{H}$: $h_1(i, \{i,j\}) \cind h_2(j, \{i,j\})$.

After the execution of LSNM-UV-Base, there are three cases:

$(x_i, x_j)$ is concluded as a visible non-edge: LSNM-UV leaves the pair as is. Done.

$(x_i, x_j)$ is concluded as a visible edge: this does not happen, by property~(S).

$(x_i, x_j)$ is concluded as an invisible pair: \textsc{checkVisible} is executed for $(i, j)$. By property~(Q), $Q$ contains every parent of $x_i$ and~$x_j$. Since~\eqref{eq:case3} holds, there exist some sets $M \subseteq X \setminus \{x_i, x_j\}$ and $N \subseteq X \setminus \{x_i, x_j\}$ that realize the independence between residuals, and produce a p-HSIC value greater than~$\alpha$ by Assumption~\ref{ass:HSIC}.

 If $M$ and $N$ do not contain non-parents of $x_i$ and $x_j$, this means $M, N \subseteq Q$, and exhaustively searching through $Q$ as in \textsc{checkVisible} ensures the finding of $M$ and $N$.
    
If $M$ or $N$ contain some non-parent $x_k$ of $x_i$ and $x_j$, Proposition~\ref{prop:remove_vis_non_edge} implies that there exist some sets $M'$ and $N'$ that do not use $x_k$ such that the independence between residuals is realized. Since the true structural functions do not depend on non-parents, the construction in Proposition~\ref{prop:remove_vis_non_edge} guarantees $M', N'$ can be taken as subsets of the parents of $x_i$ and $x_j$, hence $M', N' \subseteq Q$. Exhaustively searching through $Q$ as in \textsc{checkVisible} ensures the finding of $M'$ and $N'$, and produces a p-HSIC value greater than $\alpha$ by Assumption~\ref{ass:HSIC}.

Therefore, \textsc{checkVisible} is guaranteed to find some set $M, N \subseteq Q$ such that $e > \alpha$ and line~7 is satisfied. Thus, LSNM-UV outputs $(x_i, x_j)$ as a visible non-edge.

Therefore, LSNM-UV is complete in identifying visible non-edges.
\\
\textbf{Soundness for identifying visible edges.}
An edge $x_j \to x_i$ is identified as visible in LSNM-UV if (a)~it is identified as visible by LSNM-UV-Base, or (b)~it is identified as invisible by LSNM-UV-Base and re-identified as visible by \textsc{checkVisible}.

For case~(a), by property~(S), $x_j \to x_i$ is also a visible edge in the ground truth.

For case~(b), this means that $e \leq \alpha$ in line~7 for all sets $M, N \subseteq Q$, $a_1 > \alpha$ (since $\textit{iNotParent}$ must be $\textbf{True}$) for some set $M, N \subseteq Q$, and $a_2 \leq \alpha$ (since $\textit{jNotParent}$ must remain $\textbf{False}$) for all sets $M, N \subseteq Q$ in \textsc{checkVisible}.

Since $a_1 > \alpha$ for some set $M, N \subseteq Q$, by Assumption~\ref{ass:HSIC}, $\exists\, h_i, h_j$: $h_i(i, S)$ with $x_j$ in the regression achieves independence with $h_j(j, S')$. This satisfies~\eqref{eq:case2b}.
    
Since $e \leq \alpha$ and $a_2 \leq \alpha$ for all sets checked, by Assumption~\ref{ass:HSIC}, no functions in $\mathcal{H}$ achieve independence for any $M \subseteq Q$ and $N \subseteq Q \cup \{x_i\}$. By property~(Q), $Q$ contains every parent of $x_i$ and~$x_j$, so this implies~\eqref{eq:case2a}.

Since both~\eqref{eq:case2a} and~\eqref{eq:case2b} are satisfied, $x_j$ is a visible parent of $x_i$ in the ground truth by Proposition~\ref{prop:residual_independence_pattern}, Case~2.

Therefore, LSNM-UV-X is sound in identifying visible edges.
\\
\textbf{Completeness for identifying visible edges.}
Suppose that $x_j \to x_i$ is a visible edge in the ground truth. By the forward direction of Proposition~\ref{prop:residual_independence_pattern}, Case~2, both~\eqref{eq:case2a} and~\eqref{eq:case2b} hold.

After the execution of LSNM-UV-Base, there are three cases:

$x_j \to x_i$ is concluded as a visible edge: LSNM-UV leaves the edge as is. Done.

$x_j \to x_i$ is concluded as a visible non-edge: this does not happen, by property~(S).

$x_j \to x_i$ is concluded as an invisible pair: \textsc{checkVisible} is executed for $(i, j)$. Due to~\eqref{eq:case2a} and Assumption~\ref{ass:HSIC}, $e \leq \alpha$ and $a_2 \leq \alpha$ for all sets $M, N \subseteq Q$. Thus, $\textit{jNotParent}$ remains $\textbf{False}$, and lines~7 and~17 of \textsc{checkVisible} are never executed. By property~(Q), $Q$ contains every parent of $x_i$ and~$x_j$. Since~\eqref{eq:case2b} holds, there exist some sets $M$ and $N$ that realize the independence between residuals.

If $M$ and $N$ do not contain non-parents of $x_i$ and $x_j$, this means $M, N \subseteq Q$, and searching through $Q$ ensures the finding of $M$ and $N$.
    
If $M$ or $N$ contain some non-parent $x_k$ of $x_i$ and $x_j$, Proposition~\ref{prop:remove_vis_parent} implies that there exist some sets $M'$ and $N'$ that do not use $x_k$ such that the independence between residuals is realized. Since the true structural functions do not depend on non-parents, the construction in Proposition~\ref{prop:remove_vis_parent} guarantees $M', N'$ can be taken as subsets of the parents of $x_i$ and $x_j$, hence $M', N' \subseteq Q$. Searching through $Q$ ensures the finding of $M'$ and $N'$.

Therefore, LSNM-UV is guaranteed to find some sets such that regression in those sets will produce $a_1 > \alpha$ in line~10 of \textsc{checkVisible}, due to Assumption~\ref{ass:HSIC}. Therefore, $\textit{iNotParent}$ will be changed to $\textbf{True}$. Coupling this with the fact that $\textit{jNotParent}$ remains $\textbf{False}$, and the fact that lines 7 and 17 of \textsc{checkVisible} are not executed, one can conclude that line~21 of \textsc{checkVisible} is guaranteed to be executed. Thus, LSNM-UV outputs $x_j \to x_i$ as a visible edge.
Therefore, LSNM-UV is complete in identifying visible edges.
\\
\textbf{Soundness for identifying invisible pairs.}
When LSNM-UV identifies a pair $(x_i, x_j)$ as invisible, the pair cannot be a visible edge in the ground truth, since this would contradict the proven completeness of LSNM-UV in identifying visible edges. It also cannot be a visible non-edge in the ground truth, since this would contradict the proven completeness of LSNM-UV in identifying visible non-edges. Therefore, the pair must be invisible in the ground truth. This means LSNM-UV is sound in identifying invisible pairs.
\\
\textbf{Completeness for identifying invisible pairs.}
Suppose $(x_i, x_j)$ is an invisible pair in the ground truth. In the output of LSNM-UV, the pair cannot be a visible edge, since this would contradict the proven soundness of LSNM-UV in identifying visible edges. It also cannot be a visible non-edge in the output, since this would contradict the proven soundness of LSNM-UV in identifying visible non-edges. Therefore, the pair must be invisible in the output of LSNM-UV. This means LSNM-UV is complete in identifying invisible pairs.

\end{proof}


\subsubsection{Identifying ancestors on invisible paths}
\label{sec:checkonpath}

Lemma~4 of \cite{pham2025} provides conditions under which observed variables
can be identified as lying on invisible paths (UBPs/UCPs) between a visible
parent pair. That result is stated using the notion of visibility with respect
to subsets $X' \subseteq X$, which relies on the additive model being closed
under marginalization of observed variables (see
Remark~\ref{rem:no_subset_visibility}). We now provide an LSNM analogue that
avoids subset visibility entirely, stating all conditions directly in terms of
GAMLSS residuals $h(i, S)$ on the full data.



\begin{proof}[Proof of Lemma~\ref{lem:checkonpath}]
We prove the contrapositive: if $x_k \notin \mathrm{Anc}(x_i)$, then
independence is achievable in~\eqref{eq:cop-necessary1}.

\paragraph{Step~1: $x_k \notin K_i$ and $x_k \notin K_j$.}
Since $x_k \notin \mathrm{Anc}(x_i)$ and every parent is an ancestor,
$x_k$ is not a parent of $x_i$, so $x_k \notin K_i$.

Since $x_j \to x_i$ is a directed edge, every ancestor of $x_j$ is also
an ancestor of $x_i$ (otherwise the graph would contain a cycle).
Therefore $x_k \notin \mathrm{Anc}(x_i)$ implies
$x_k \notin \mathrm{Anc}(x_j)$, and in particular $x_k \notin K_j$.

Hence $K_i \subseteq X \setminus \{x_i, x_k\}$ and
$K_j \subseteq X \setminus \{x_j, x_k\}$.

\paragraph{Step~2: Constructing independent residuals without $x_k$.}
Choose $h_1^*$ and $h_2^*$ using the true structural functions
$f = f_i^1,\; g = g_i^1$ and $f = f_j^1,\; g = g_j^1$ respectively.

Since $K_i \subseteq X \setminus \{x_i, x_k\}$, the regression for $x_i$
on $K_i$ is a valid choice with exclusion set $S_1 = \{i, k\}$
(both $f_i^1$ and $g_i^1$ use only variables in
$K_i \subseteq X \setminus S_1$).  This gives:
\[
  h_1^*(i, \{i, k\})
  \;=\; \frac{x_i - f_i^1(K_i)}{g_i^1(K_i)}
  \;=\; \eta_i.
\]
Since $K_j \subseteq X \setminus \{x_j, x_k\}$, the same argument with
$S_2 = \{j, k\}$ gives:
\[
  h_2^*(j, \{j, k\})
  \;=\; \frac{x_j - f_j^1(K_j)}{g_j^1(K_j)}
  \;=\; \eta_j.
\]

\paragraph{Step~3: Independence via noise-set disjointness
(Lemma~\ref{lemma:independence_patterns}).}
Since $B(i,j) = 0$ (Premise~1), there is no UBP/UCP between $x_i$ and
$x_j$.  By the proof of Lemma~\ref{lemma:independence_patterns}, the
noise sets
\[
  N_i = \{\epsilon_l : v_l \in \mathrm{an}(Q_i) \cup Q_i \cup \{x_i\}\},
  \qquad
  N_j = \{\epsilon_l : v_l \in \mathrm{an}(Q_j) \cup Q_j \cup \{x_j\}\}
\]
satisfy $N_i \cap N_j = \emptyset$: any shared $\epsilon_l$ would imply a
UBP or UCP between $x_i$ and $x_j$ (Cases~1--3 of that proof),
contradicting $B(i,j) = 0$.  Since the exogenous noises $\epsilon_l$ are
mutually independent, $\eta_i = \phi_i(N_i)$ and $\eta_j = \phi_j(N_j)$
are functions of non-overlapping independent noise terms, giving
$\eta_i \cind \eta_j$.

Therefore $h_1^*(i,\{i,k\}) \cind h_2^*(j,\{j,k\})$, contradicting
Condition~\eqref{eq:cop-necessary1}.  By contrapositive,
$x_k \in \mathrm{Anc}(x_i)$.
\end{proof}
\begin{lemma}[Identifying ancestors on invisible paths for sets]
\label{lem:checkonpath1}
Consider distinct $x_i, x_j \in X$ and a set
$\{x_{k_1}, \ldots, x_{k_m}\} \subset X \setminus \{x_i, x_j\}$.
Suppose:
\begin{enumerate}
  \item $x_j$ is a visible parent of $x_i$
    (Definition~\ref{def:vis_parent}): $A(i,j) = 1$ and $B(i,j) = 0$.
  \item For each $q = 1,\ldots,m$, the pair $(x_i, x_{k_q})$ is invisible
    (Definition~\ref{def:invisible}):
    $\forall\, h_1, h_2 \in \mathcal{H}$:
    $h_1(i, \{i\}) \notcind h_2(k_q, \{k_q\})$.
\end{enumerate}
Under Assumptions~\ref{assumption:CFC}
and~\ref{assumption:residual_faithfulness}, suppose further that for each
$q = 1, \ldots, m$ the following holds:
\begin{equation}\label{eq:cop-necessary-set}
  \forall\, h_1, h_2 \in \mathcal{H} :\quad
  h_1(i, \{i, k_q\})
  \;\notcind\;
  h_2(j, \{j, k_q\}).
\end{equation}

\noindent Then each $x_{k_q}$ is an ancestor of $x_i$.
\end{lemma}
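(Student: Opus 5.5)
The set version follows by applying Lemma~\ref{lem:checkonpath} once for each index $q$. The hypotheses of Lemma~\ref{lem:checkonpath1} contain, for every fixed $q \in \{1,\ldots,m\}$, exactly the three premises of Lemma~\ref{lem:checkonpath} with $x_k := x_{k_q}$. First, $x_j$ is a visible parent of $x_i$ (Premise~1, which does not depend on $q$). Second, the pair $(x_i,x_{k_q})$ is invisible (Premise~2 for this $q$). Third, the universal dependence \eqref{eq:cop-necessary1} holds with $k=k_q$, which is precisely \eqref{eq:cop-necessary-set}. Distinctness of $x_i,x_j,x_{k_q}$ is guaranteed because $\{x_{k_1},\ldots,x_{k_m}\}\subset X\setminus\{x_i,x_j\}$. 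The standing Assumptions~\ref{assumption:CFC} and~\ref{assumption:residual_faithfulness} are the same in both statements.

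The plan is therefore to fix an arbitrary $q$, check these premises as above, and invoke Lemma~\ref{lem:checkonpath} to conclude $x_{k_q}\in\mathrm{Anc}(x_i)$. Since $q$ was arbitrary, the conclusion holds for all $q=1,\ldots,m$. If one prefers to argue directly, the contrapositive proof of Lemma~\ref{lem:checkonpath} runs unchanged for each $q$. Suppose $x_{k_q}\notin\mathrm{Anc}(x_i)$. Then $x_{k_q}\notin K_i$, and since $\mathrm{Anc}(x_j)\subseteq\mathrm{Anc}(x_i)$ also $x_{k_q}\notin K_j$. The true structural functions then give $h_1^*(i,\{i,k_q\})=\eta_i$ and $h_2^*(j,\{j,k_q\})=\eta_j$. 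Finally, $B(i,j)=0$ together with the noise-set disjointness from Lemma~\ref{lemma:independence_patterns} yields $\eta_i\cind\eta_j$, which contradicts \eqref{eq:cop-necessary-set}.

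There is no real obstacle here. The one point to make sure of is that the conditions for different $q$ do not interact: each conclusion uses only the $q$-th instance of \eqref{eq:cop-necessary-set} and the $q$-independent visible-parent premise. So no joint exclusion of several $x_{k_q}$ from the regressions is needed, and no induction over $m$ is required.
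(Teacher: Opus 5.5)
Your proposal is correct and is essentially the paper's proof: the paper also fixes an arbitrary $q$ and reruns the contrapositive argument of Lemma~\ref{lem:checkonpath} with $x_k = x_{k_q}$ (the true structural functions give $\eta_i$ and $\eta_j$, which are independent since $B(i,j)=0$). You correctly observe that the $m$ conditions do not interact, so no joint exclusion or induction is needed; this is why the paper can simply say ``proof same as before.''
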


\begin{proof}
Fix any $q \in \{1,\ldots,m\}$. We prove the contrapositive: if
$x_{k_q} \notin \mathrm{Anc}(x_i)$, then independence is achievable
in~\eqref{eq:cop-necessary-set}. Proof same as before.

\end{proof}
\section{Simulation Details}
\label{app:datagen}

We describe the procedure used to generate one simulation trial.
All randomness is controlled by a single integer seed for full reproducibility.
Table~\ref{tab:datagen_params} lists all hyperparameters.

\subsection*{Step~1: Graph construction}

\paragraph{Observed skeleton.}
Direct edges among the $p$ observed variables $\{x_0,\ldots,x_{p-1}\}$ are
drawn from an Erd\H{o}s--R\'enyi DAG with edge probability $0.3$, following
\citet{maeda21a} Section~5.1 exactly.
For each ordered pair $(j, i)$ with $j < i$, the edge $x_j \to x_i$ is
included independently with probability $0.3$.
Using index order as the topological order guarantees acyclicity.

\paragraph{Hidden common causes (UBPs).}
For each of the $n_{\mathrm{cc}}$ hidden common causes $u_k$:
\begin{enumerate}
  \item Select a pair $(x_a, x_b)$ of observed variables that have no direct
        edge between them in either direction.
  \item Add directed edges $u_k \to x_a$ and $u_k \to x_b$.
        The node $u_k$ is a root (no parents of its own).
\end{enumerate}
When $u_k$ is marginalised out, the projected ADMG acquires the bidirected edge
$x_a \leftrightarrow x_b$, representing an Unobserved Backdoor Path (UBP).
Because $(x_a, x_b)$ was chosen to have no direct edge, the bow-free condition is satisfied for this pair.

\paragraph{Hidden intermediates (UCPs).}
For each of the $n_{\mathrm{int}}$ hidden intermediates $y_k$:
\begin{enumerate}
  \item Select an existing direct observed edge $x_j \to x_i$.
  \item \emph{Remove} the direct edge $x_j \to x_i$.
  \item Add $x_j \to y_k$ and $y_k \to x_i$.
\end{enumerate}
When $y_k$ is marginalised out, the projected ADMG acquires the bidirected
edge $x_j \leftrightarrow x_i$, representing an Unobserved Causal Path (UCP).
Removing the direct edge ensures bow-freeness.

\subsection*{Step~2: Data generation}

All variables are generated in topological order of the full latent DAG, so that every parent of $v_i$ has already been assigned values when $v_i$ is drawn.

\paragraph{Two-level generation.}
For each variable $x_i$, let $K_i$ denote its observed direct parents and $Q_i$ its hidden direct parents. We implement Equation~\eqref{eq: LSNM-UV} in two layers:
\begin{align}
  \eta_i   &= f_i^2(Q_i) + g_i^2(Q_i)\,\epsilon_i,
  \label{eq:layer2} \\
  x_i      &= f_i^1(K_i) + g_i^1(K_i)\,\eta_i,
  \label{eq:layer1}
\end{align}
where $\epsilon_i \overset{\mathrm{i.i.d.}}{\sim} \mathcal{N}(0,1)$.
Layer~\eqref{eq:layer2} aggregates the influence of all hidden parents into a single noise term $\eta_i$.
Layer~\eqref{eq:layer1} depends only on $K_i$, so hidden parents enter $x_i$ exclusively through $\eta_i$. Variables with no parents are pure noise: $x_i = \epsilon_i$.

\paragraph{Nonlinear functional form.}
Each structural component is assigned a nonlinear function family from Table~\ref{tab:func_families}. Our experiments use: $f^1 = \text{RBF}$, $g^1 = \text{tanh}$, $f^2 = \text{softplus}$, $g^2 = \text{logarithmic}$. For each parent $v_j$, the assigned family provides a randomly parameterised building block $\varphi(v_j)$ whose parameters are drawn independently per parent, per variable, and separately for location and scale.

\begin{table}[h]
  \centering
  \caption{Nonlinear function families used as building blocks $\varphi(v_j)$. All parameters are drawn independently per parent and per variable.}
  \label{tab:func_families}
  \small
  \begin{tabular}{@{}lll@{}}
    \toprule
    Family & Form $\varphi(v_j)$ & Parameters \\
    \midrule
    RBF & $a\exp\!\bigl(-b(v_j-c)^2\bigr)+d$ & $a{\sim}\mathrm{U}(1,3){\cdot}\{{\pm}1\}$, $b{\sim}\mathrm{U}(0.1,1)$, $c{\sim}\mathrm{U}(-2,2)$, $d{\sim}\mathrm{U}(-1,1)$ \\[3pt]
    Tanh & $a\tanh(bv_j+c)+d$ & $a{\sim}\mathrm{U}(1,3){\cdot}\{{\pm}1\}$, $b{\sim}\mathrm{U}(0.5,2)$, $c{\sim}\mathrm{U}(-2,2)$, $d{\sim}\mathrm{U}(-1,1)$ \\[3pt]
    Softplus & $a\log\!\bigl(1+e^{bv_j+c}\bigr)+d$ & $a{\sim}\mathrm{U}(0.5,2){\cdot}\{{\pm}1\}$, $b{\sim}\mathrm{U}(0.5,2)$, $c{\sim}\mathrm{U}(-2,2)$, $d{\sim}\mathrm{U}(-1,1)$ \\[3pt]
    Logarithmic & $a\,\mathrm{sign}(z)\log(|z|{+}1)+d$ & $z{=}bv_j{+}c$;\; $a{\sim}\mathrm{U}(1,3){\cdot}\{{\pm}1\}$, $b{\sim}\mathrm{U}(0.5,2)$, $c{\sim}\mathrm{U}(-2,2)$, $d{\sim}\mathrm{U}(-1,1)$ \\
    \bottomrule
  \end{tabular}
\end{table}
The location and scale functions are additive over parents:
\begin{align}
  f(K_i)
    &= \sum_{x_j \in K_i} \varphi_{f}(x_j),
  \label{eq:loc} \\
  g(K_i)
    &= \exp\!\left(
         \frac{\displaystyle\sum_{x_j \in K_i}
               \varphi_{g}(x_j)}
              {\mathrm{sd}\!\left(\sum_{x_j} \varphi_{g}\right)}
       \right),
  \label{eq:scale}
\end{align}
and analogously for $f_i^2, g_i^2$ with $Q_i$.
The log-scale sum in \eqref{eq:scale} is normalised by its empirical standard deviation before exponentiation to prevent numerical overflow; the result is clipped to $[0.1,\, 10]$.
Each variable is standardised to zero mean and unit empirical variance; $\eta_i$ is likewise standardised before being used in Layer~1.

The key structural difference from \citet{maeda21a} is the \emph{heteroscedastic} scale $g_i^1(K_i)$: in CAM-UV the noise is additive and homoscedastic ($g_i^1 \equiv 1$), whereas here the noise variance of $x_i$ varies with the values of its observed parents.

\subsection*{Step~3: ADMG projection}

After all data are generated the hidden variable columns are discarded.
The ground-truth adjacency matrix $A$ and bidirected matrix $B$ are read off
directly from the graph construction records:
$A(i,j)=1$ if there is a surviving direct edge $x_j \to x_i$ among observed
variables; $B(i,j)=1$ if the pair $(x_i,x_j)$ is the endpoint of a UBP or UCP.
By construction $A(i,j)B(i,j)=0$ for all $i,j$ (bow-free).

\subsection*{Step~4: Random column permutation}

A uniformly random permutation $\pi$ is applied to the columns of the data
matrix $X$ before it is returned.
The ground-truth matrices $A$ and $B$ are permuted consistently via
$A' = A[\pi,\pi]$, $B' = B[\pi,\pi]$.
This ensures the algorithm cannot exploit the coincidence that variable
indices equal the topological order of the generating DAG.

\subsection*{Implementation}

\textbf{LSNM-UV} uses exactly the CAM-UV algorithm of \citet{maeda21a} (Phases~1--3 from \texttt{lingam.CAMUV}, v1.12.2), replacing only the residual computation: instead of additive OLS residuals, we use LSNM residuals obtained via GAMLSS location-scale regression. After Phases~1--3, we apply the \texttt{checkVisible} procedure of \citet{pham2025} without modification to orient visible edges. No additional post-processing is performed beyond \texttt{checkVisible}.

\subsection*{Evaluation}

\paragraph{Methods and software.}
All methods receive the same $n \times p$ data matrix $X$ (with permuted columns) and significance level $\alpha = 0.01$.

\begin{itemize}
\item \textbf{LSNM-UV}. Uses exactly the \texttt{lingam.CAMUV} code (v1.12.2) with LSNM residuals, followed by the \texttt{checkVisible} procedure of \citet{pham2025}. Outputs a $p \times p$ adjacency matrix: $\mathrm{mat}[i,j] = 1$ denotes $x_j \to x_i$, $\mathrm{mat}[i,j] = \mathrm{NaN}$ denotes an invisible (bidirected) pair, and $\mathrm{mat}[i,j] = 0$ denotes no edge. We extract $\hat{A}$ by setting NaN entries to $0$, and $\hat{B}(i,j) = 1$ whenever $\mathrm{mat}[i,j]$ or $\mathrm{mat}[j,i]$ is NaN.

\item \textbf{CAM-UV}~\citep{maeda21a}. \texttt{lingam.CAMUV} (v1.12.2) with default HSIC independence test and \texttt{num\_explanatory\_vals}${}=3$. Output parsing identical to LSNM-UV.

\item \textbf{FCI}~\citep{spirtes2000,zhang2008}. From the \texttt{causal-learn} package (v0.1.4.5) with Fisher-$z$ test at $\alpha = 0.01$. We extract only \emph{definite} edges (no circle marks): directed $x_j \to x_i$ when $G[j,i] = -1$ and $G[i,j] = 1$; bidirected $x_i \leftrightarrow x_j$ when $G[i,j] = G[j,i] = 1$.

\item \textbf{BANG}~\citep{wang2020bang}. Source from \url{https://github.com/ysamwang/ngBap}, called via \texttt{rpy2} (v3.5.11, R v4.3.1). Run as \texttt{bang(X, K=3, level=0.01, restrict=1, testType="dhsic")} with the \texttt{dHSIC} R package (v2.2). $\hat{A}(i,j) = 1$ wherever $\texttt{dEdge}[i,j] \neq 0$; $\hat{B}(i,j) = 1$ from off-diagonal nonzero entries of \texttt{bEdge}.
\end{itemize}

\paragraph{Metrics.}
For directed edges, a true positive requires both position and direction to be correct: $\hat{A}(i,j) = A(i,j) = 1$. NaN entries are treated as $0$ for evaluation. Precision $= \mathrm{TP}/(\mathrm{TP}+\mathrm{FP})$, Recall $= \mathrm{TP}/(\mathrm{TP}+\mathrm{FN})$, F1 $= 2 \cdot \mathrm{Prec} \cdot \mathrm{Rec} / (\mathrm{Prec} + \mathrm{Rec})$.
For bidirected edges, the same definitions apply using $\hat{B}$ and $B$, evaluated on upper-triangle pairs only.

\paragraph{Error bars.}
The $400$ trials are split into four independent batches of $100$ trials each. Reported values are the mean across batch-level means; error bars show $\pm 1$ standard deviation across the four batch means.

\begin{table}[H]
  \centering
  \caption{Hyperparameters of the data-generation procedure.}
  \label{tab:datagen_params}
  \small
  \begin{tabular}{@{}llp{6.5cm}@{}}
    \toprule
    Parameter & Value & Description \\
    \midrule
    $p$                    & $10$              & Number of observed variables \\
    $n_{\mathrm{cc}}$      & $2$               & Hidden common causes (one UBP each) \\
    $n_{\mathrm{int}}$     & $2$               & Hidden intermediates (one UCP each) \\
    ER edge probability    & $0.3$             & Matches \citet{maeda21a} Sec.~5.1 \\
    Function families      & RBF / tanh / softplus / logarithmic & $f^1$ / $g^1$ / $f^2$ / $g^2$ \\
    $n$                    & $200$--$1000$     & Sample sizes evaluated (steps of $200$) \\
    Trials per setting     & $400$             & Independent random seeds \\
    \bottomrule
  \end{tabular}
\end{table}

\section{Broader Impacts}\label{appendix:broader_impacts}

This work is theoretical and methodological, with indirect societal impact. It may improve causal discovery from observational data in settings with latent variables and non-additive, heteroscedastic noise, which are common in areas such as biomedicine, economics, education, and environmental science. By relaxing additive-noise assumptions, the proposed approach can support more reliable scientific hypothesis generation and guide follow-up experiments.

The main risk is misuse or overinterpretation. The guarantees rely on assumptions such as acyclicity, faithfulness-type conditions, location-scale structure, and, for full ADMG recovery, bow-freeness. In finite samples, errors in regression or independence testing may lead to incorrect causal graphs. Such errors could be harmful in high-stakes domains if inferred causal relations are treated as definitive evidence for interventions or decisions. Missed latent confounding is a particular concern.

Accordingly, this method should be used as an exploratory tool, not as a standalone basis for high-stakes decisions. Practical use should include uncertainty quantification, sensitivity analysis, domain expertise, and, where possible, validation with interventional or prospective data. The experiments use synthetic data and do not involve human subjects or private information; the computational footprint is modest.

\end{document}